\documentclass{article}

\usepackage{microtype}
\usepackage{graphicx}
\usepackage{subcaption}
\usepackage{booktabs} 
\usepackage{multirow}
\usepackage{hyperref}

\usepackage[accepted]{icml2026}

\usepackage{amsmath}
\usepackage{amssymb}
\usepackage{mathtools}
\usepackage{amsthm}
\usepackage{bm}

\usepackage[capitalize,noabbrev]{cleveref}
\usepackage{aliascnt} 

\theoremstyle{plain}
\newtheorem{theorem}{Theorem}[section]

\newaliascnt{lemma}{theorem}
\newtheorem{lemma}[lemma]{Lemma}
\aliascntresetthe{lemma}

\newaliascnt{corollary}{theorem}
\newtheorem{corollary}[corollary]{Corollary}
\aliascntresetthe{corollary}

\theoremstyle{definition}
\newtheorem{definition}[theorem]{Definition}

\newaliascnt{assumption}{theorem}
\newtheorem{assumption}[assumption]{Assumption}
\aliascntresetthe{assumption}

\theoremstyle{remark}
\newtheorem{remark}[theorem]{Remark}

\crefname{assumption}{Assumption}{Assumptions}
\crefname{lemma}{Lemma}{Lemmas}
\crefname{remark}{Remark}{Remarks}
\crefname{align}{Equation}{Equations}
\crefname{corollary}{Corollary}{Corollaries}
\crefname{equation}{Eq.}{Eq.}
\usepackage[textsize=tiny]{todonotes}
\newcommand{\dist}{\text{d}}
\newcommand{\hatnabla}{\hat{\nabla}}
\newcommand{\R}{\mathbb{R}}
\newcommand{\X}{\mathcal{X}}
\newcommand{\D}{\mathcal{D}}
\newcommand{\bigO}{\mathcal{O}}

\newcommand{\E}{\mathbb{E}}

\newcommand{\bZ}{\bm{Z}}
\newcommand{\bI}{\bm{I}}
\newcommand{\zero}{\bm{0}}
\newcommand{\tr}{\text{tr}}
\newcommand{\norm}[1]{\|#1\|}
\newcommand{\Prob}{\mathbb{P}}

\newcommand{\vect}[1]{\bm{#1}}
\newcommand{\matr}[1]{\bm{#1}} 
\newcommand{\Nx}{N_{\mathcal{X}}} 
\icmltitlerunning{Local Constrained Bayesian Optimization}

\begin{document}

\twocolumn[
  \icmltitle{Local Constrained Bayesian Optimization}



  \icmlsetsymbol{equal}{*}
  \icmlsetsymbol{corresponding}{†}

\begin{icmlauthorlist}
\icmlauthor{Jingzhe Jing}{amss1,amss2}
\icmlauthor{Zheyi Fan}{amss1,amss2,corresponding}
\icmlauthor{Szu Hui Ng}{nus}
\icmlauthor{Qingpei Hu}{amss1,amss2,corresponding}
\end{icmlauthorlist}

\icmlaffiliation{amss1}{State Key Laboratory of Mathematical Sciences, Academy of Mathematics and Systems Science, Chinese Academy of Sciences, Beijing, China}
\icmlaffiliation{amss2}{School of Mathematical Sciences,  University of Chinese Academy of Sciences, Beijing, China}
\icmlaffiliation{nus}{Department of Industrial Systems Engineering \& Management, National University of Singapore, Singapore}

\icmlcorrespondingauthor{Zheyi Fan}{fanzheyi@amss.ac.cn}
\icmlcorrespondingauthor{Qingpei Hu}{qingpeihu@amss.ac.cn}

  \icmlkeywords{Bayesian Optimization, Constrained Optimization, High-Dimensional Optimization, Local Search, Convergence Analysis, Gaussian Processes}

  \vskip 0.3in
]



\printAffiliationsAndNotice{}  

\begin{abstract}
Bayesian optimization (BO) for high-dimensional constrained problems remains a significant challenge due to the curse of dimensionality. We propose \textbf{L}ocal \textbf{C}onstrained \textbf{B}ayesian \textbf{O}ptimization (LCBO), a novel framework tailored for such settings. Unlike trust-region methods that are prone to premature shrinking when confronting tight or complex constraints, LCBO leverages the differentiable landscape of constraint-penalized surrogates to alternate between rapid local descent and uncertainty-driven exploration. Theoretically, we prove that LCBO achieves a convergence rate for the Karush-Kuhn-Tucker (KKT) residual that depends polynomially on the dimension $d$ for common kernels under mild assumptions, offering a rigorous alternative to global BO where regret bounds typically scale exponentially. Extensive evaluations on high-dimensional benchmarks (up to 100D) demonstrate that LCBO consistently outperforms state-of-the-art baselines.

\end{abstract}

\section{Introduction}
\label{sec:introduction}
Bayesian Optimization (BO) has established itself as a powerful paradigm for the global optimization of black-box functions that are expensive to evaluate~\cite{boclassic}. While standard BO focuses on unconstrained objectives, many real-world applications inherently involve unknown and expensive-to-evaluate constraints. Examples span various domains: in machine learning, hyperparameter tuning often requires optimizing validation accuracy subject to constraints on training time or memory usage~\cite{machinelearning}; in engineering, the design of chemical processes or new materials aims to maximize performance metrics while satisfying safety thresholds or physical stability constraints~\cite{sharpe2018design}; and in robotics, controller parameters must be tuned to maximize rewards without violating safety limits during the learning process~\cite{robot}. Consequently, Constrained Bayesian Optimization (CBO) has attracted significant attention.

Despite its success in low-dimensional settings, BO faces a severe challenge known as the ``curse of dimensionality'' (CoD). Theoretical results show that the regret bound of standard BO algorithms typically depends exponentially on the dimension $d$ of the search space~\cite{regretbound}. To mitigate this, Local Bayesian Optimization (LBO) has emerged as a practical compromise, aiming to find high-quality local optima rather than guaranteeing global convergence. Unlike subspace embedding~\cite{embedding2021} or additive kernel~\cite{add2015} methods which rely on structural assumptions, local methods like TuRBO (based on local trust regions,~\cite{eriksson2019scalable}) and GIBO (gradient-based local search,~\cite{LBO-GIBO}), regarded as less restrictive alternatives, have demonstrated superior sample efficiency in high-dimensional unconstrained tasks.

However, scaling CBO to high dimensions remains an open challenge. Similar to the unconstrained case, the regret bounds of standard CBO methods deteriorate rapidly as dimensionality increases. For instance, EPBO~\cite{lu2022no} shows that for Radial Basis Function (RBF) kernels, the regret bound scales as $\tilde\bigO((\log K)^{(d+1)/2}K^{-1/2})$, rendering standard global CBO inefficient for $d > 20$. While recent advances have extended local trust region methods to the constrained setting (e.g., SCBO~\cite{eriksson2021scalable} and HDsafeBO~\cite{wei2025safe}), these approaches typically rely on rigid geometric regions to manage exploration. A critical limitation of this paradigm arises when the primary descent direction is obstructed by complex or tight constraints. In such scenarios, failing to find sufficient improvement within the current region often triggers the premature shrinkage of the trust region radius (a phenomenon we explicitly demonstrate on the truss design task in~\cref{sec:exp_real_world}). In~\cref{app:mechanism_analysis}, we provide a more detailed discussion of representative failure modes of trust-region-based CBO methods that motivate LCBO, together with empirical evidence from the truss design task. Consequently, there is a notable scarcity of methods that can navigate these constrained landscapes effectively. More critically, existing high-dimensional CBO algorithms often lack rigorous theoretical guarantees regarding the dependence of convergence rates on dimensionality under mild conditions.

To address these limitations, we propose \textbf{Local Constrained Bayesian Optimization (LCBO)}, a novel framework that efficiently solves high-dimensional constrained problems. Our contributions are summarized as follows:
\begin{itemize}
    \item 
    We extend the local search paradigm of GIBO to the constrained setting. Unlike trust-region methods that rely on rigid geometric regions and are prone to premature shrinkage, LCBO exploits the differentiable landscape of constraint-penalized surrogates to alternate between a local ``exploration'' step and a local ``exploitation'' step to update the candidate solution.
    \item We provide a rigorous theoretical analysis of LCBO. Under mild assumptions, we prove that the KKT residual of the iterative sequence $\{\vect{x}_k\}$ exhibits a \textbf{polynomial dependence} on the dimension $d$, effectively mitigating the curse of dimensionality found in global CBO bounds.
    \item We evaluate LCBO on a diverse set of high-dimensional benchmarks, including synthetic functions, engineering design problems, and reinforcement learning tasks. Our results demonstrate that LCBO achieves state-of-the-art performance, consistently outperforming existing CBO baselines.
\end{itemize}
Formally, the optimization problem is defined as:
\begin{equation} \label{eq:prob_def}
    \min_{\vect{x} \in \X} f(\vect{x}) \quad \text{s.t.} \quad c(\vect{x}) = 0,
\end{equation}
where $f: \X \to \R$ denotes the objective function, $c: \X \to \R^m$ represents the vector-valued constraint function, and $\X \subset \R^d$ is the compact domain. We assume that the exact values of the objective and constraints are inaccessible. Instead, we interact with a noisy zero-order oracle. At time step $k$, for a query point $\vect{x}_k \in \X$, we observe:
\begin{equation}
    y_{0,k} = f(\vect{x}_k) + \xi_{0,k},  \quad y_{i,k} = c_i(\vect{x}_k) + \xi_{i,k},
\end{equation}
where $\xi_{\cdot,k}$ are independent zero-mean Gaussian noise variables.
Although our current formulation focuses on equality constraints, we note that this framework naturally extends to problems involving inequality constraints. Specifically, inequalities can be reformulated as equality constraints through the introduction of slack variables. This is a standard technique in classical optimization~\cite{nocedal2006numerical} that has been successfully applied within the context of BO~\cite{picheny2016bayesian}.

\begin{remark}
The CBO problem under consideration permits function evaluations in the infeasible region and considers these points to be informative and useful for updating the surrogate model, which is in contrast to safe BO~\cite{safeBO}.
\end{remark}
\section{Related Work}

\paragraph{Constrained Bayesian Optimization}
Generally, strategies for extending unconstrained BO to constrained optimization problems fall into two primary categories. The first strategy involves developing an acquisition function (AF) that explicitly incorporates constraint information. Most methods in this category are modifications of vanilla AFs. A representative example is Constrained Expected Improvement (CEI)~\cite{gardner2014bayesian}, which weights the standard EI by the Probability of Feasibility (PoF). Recently,~\cite{wang2025convergence} proved that CEI enjoys a sub-linear convergence rate. Other well-established AFs—including Probability of Improvement (PI), Predictive Entropy Search (PES), Max-value Entropy Search (MES), and Knowledge Gradient (KG)—have also been extended to handle constrained scenarios~\cite{CPI,hernandez2015predictive,CMES,CMES2,lam2017lookahead,zhang2021constrained, ungredda2024bayesian}. Additionally, some approaches utilize the Upper Confidence Bound (UCB) and Lower Confidence Bound (LCB) to construct confidence intervals for the constraints. By optimizing the AF within this high-confidence domain, these methods enable explicit control over the probability of constraint violations during candidate selection~\cite{xu2023constrained, UCB2, zhangfinding}.

The second strategy transforms the constrained problem into an unconstrained one, leveraging techniques from classical optimization such as penalty functions~\cite{lu2022no, interiorpenalty} or (Augmented) Lagrangian methods~\cite{picheny2016bayesian, ariafar2019admmbo, zhou2022kernelized, xu2023primal, guo2023rectified}. In this framework, standard AFs are optimized on the resulting unconstrained surrogate problem.
\paragraph{Local Bayesian Optimization}
Restricting the over-exploration of standard BO in high-dimensional spaces via local strategies is an effective path to mitigate the CoD. One prominent approach involves maintaining local trust regions, as seen in methods like TuRBO~\cite{eriksson2019scalable} and TREGO~\cite{trego}. To address the CoD specifically in constrained settings, several extensions of TuRBO have been proposed for CBO~\cite{eriksson2021scalable, wei2025safe}.

Another paradigm is based on local ``exploration'' and ``exploitation''. For instance,~\cite{LBO-GIBO} proposed a gradient-based local search method (GIBO). Notably,~\cite{LBO-behavior} theoretically proved that the convergence bound of this approach depends polynomially on the dimensionality. Furthermore,~\cite{mpd} replaced the approximate gradient with the direction of maximum descent probability, while~\cite{Fan2024MinimizingUA} demonstrated that the UCB serves as a local upper bound of the objective function and performs local search by minimizing UCB. Recent work has also explored the use of second-order information for accelerating LBO methods~\cite{tang2025nest,brunzema2026bayesqp}.

However, extending these local ``exploration'' and ``exploitation'' strategies to the CBO landscape remains an open problem. To address this gap, this work proposes a local search strategy specifically tailored for CBO scenarios. We further prove that, under mild conditions, the convergence bound of the proposed algorithm exhibits a polynomial dependence on the dimensionality.

\section{Preliminaries}
\subsection{Gaussian Process Fundamentals}
\label{subsec:GP-fundamentals}
Gaussian Processes (GP) serve as the predominant surrogate models in BO due to their flexibility and analytical tractability. A GP is fully specified by its mean function $m(\vect{x})$ and a positive definite kernel function $k(\vect{x}, \vect{x}')$. Without loss of generality, we assume a zero-mean prior.

Given a dataset $\mathcal{D} = \{\matr{X}, \mathbf{y}\}$ comprising $n$ input locations $\matr{X} = [\vect{x}_1, \dots, \vect{x}_n]^\top$ and corresponding noisy observations $\mathbf{y} = [y_1, \dots, y_n]^\top$, the posterior distribution over $f$ remains a Gaussian Process. Assuming additive Gaussian noise $\epsilon \sim \mathcal{N}(0, \sigma^2)$, the posterior mean $\mu_{\mathcal{D}}(\vect{x})$ and covariance $k_{\mathcal{D}}(\vect{x}, \vect{x}')$ are derived analytically as:
\begin{align}
    \mu_{\mathcal{D}}(\vect{x}) &= k(\vect{x}, \matr{X}) \mathbf{K}^{-1} \mathbf{y}, \\
    k_{\mathcal{D}}(\vect{x}, \vect{x}') &= k(\vect{x}, \vect{x}') - k(\vect{x}, \matr{X}) \mathbf{K}^{-1} k(\matr{X}, \vect{x}'),
\end{align}
where $\mathbf{K} = k(\matr{X}, \matr{X}) + \sigma^2 \mathbf{I}$, $k(\vect{x}, \matr{X}) = [k(\vect{x}, \vect{x}_1), \dots, k(\vect{x}, \vect{x}_n)]$, and $k(\matr{X}, \vect{x}) = k(\vect{x}, \matr{X})^\top$.

Utilizing differentiable kernels ensures that the function $f(\vect{x})$ and its gradient $\nabla f(\vect{x})$ follow a joint Gaussian distribution~\cite{williams2006gaussian}. The covariance between the function and its gradient is determined by differentiating the kernel with respect to the corresponding arguments. Conditioned on the dataset $\mathcal{D}$, the posterior distribution of the gradient remains Gaussian and is given by:
\begin{equation}
\nabla f(\vect{x}) \mid \mathcal{D} \sim \mathcal{N}\left(\nabla \mu_{\mathcal{D}}(\vect{x}), \nabla k_{\mathcal{D}}(\vect{x}, \vect{x})\nabla^\top\right),
\label{eq:grad_posterior}
\end{equation}
Here, the differential operator $\nabla$ preceding $\mu_{\D}(\vect x_k)$ and $k(\cdot,\cdot)$ denotes differentiation with respect to its first argument, whereas the operator following $k(\cdot,\cdot)$ indicates differentiation with respect to its second argument.

\subsection{Modeling for Constrained Bayesian Optimization}
In CBO we typically model the objective function $f(\vect{x})$ and $m$ constraint functions $c_i(\vect{x})$, $i=1,\dots,m$, with $m+1$ independent GPs. The GP posterior then provides, for each function, both a point estimate (posterior mean) and a quantification of uncertainty (posterior variance) for the function values and their gradients.

To distinguish between the objective and constraints, we employ superscripts for the posterior statistics. The posterior mean of the objective function and its gradient are denoted by $\mu_{\mathcal{D}}^{(f)}(\vect{x})$ and $\nabla \mu_{\mathcal{D}}^{(f)}(\vect{x})$, respectively. Similarly, for the constraints, we define the vector of posterior means as $\mu_{\mathcal{D}}^{(c)}(\vect{x}) = (\mu_{\mathcal{D}}^{(c_1)}(\vect{x}), \dots, \mu_{\mathcal{D}}^{(c_m)}(\vect{x}))^\top$ and the Jacobian matrix of the constraints as $\nabla \mu_{\mathcal{D}}^{(c)}(\vect{x}) = (\nabla \mu_{\mathcal{D}}^{(c_1)}(\vect{x}), \dots, \nabla \mu_{\mathcal{D}}^{(c_m)}(\vect{x}))^\top$.
\subsection{Notations and Definitions}
The following notation is adopted throughout this paper. The standard inner product and the Euclidean norm are denoted by $\langle \cdot, \cdot \rangle$ and $\|\cdot\|$, respectively. For matrices, $\|\cdot\|$ specifically denotes the spectral norm. The distance from a vector $\vect x$ to a convex set $\Omega$ is defined as $\dist(\vect x, \Omega) = \inf_{\vect y \in \Omega} \|\vect y - \vect x\|$. Similarly, the Euclidean projection of $\vect x$ onto the set $\X$ is denoted by $P_{\X}(x) = \arg\min_{\vect y \in \X} \frac{1}{2}\|\vect y - \vect x\|^2$. We denote the normal cone of a closed convex set $\X$ at a point $\vect x$ by $N_{\X}(\vect x)$, which is defined as $N_{\X}(\vect x) = \{\vect v \in \R^d \mid \langle \vect v, \vect y - \vect x \rangle \leq 0, \forall \vect y \in \X\}$. $\zero_{b \times d}$ denotes the zero matrix in $\R^{b\times d}$. Finally, we use $\mathcal{O}(\cdot)$ to denote the asymptotic upper bound, and $\tilde\bigO(\cdot)$ to denote the asymptotic upper bound up to logarithmic factors.

To facilitate the subsequent theoretical analysis, we provide the following definitions regarding smoothness, optimality conditions, and error metrics.

\begin{definition}[$L$-smoothness~\cite{nesterov2018lectures}]
A differentiable function $g: \X \to \R^m$ is said to be $L$-smooth if for all $\vect x, \vect y \in \X$,
\begin{equation}
    \| \nabla g(\vect x) - \nabla g(\vect y) \| \leq L \| \vect x - \vect y \|,
\end{equation}
where $\nabla g$ denotes the transpose of the Jacobian of $g$.
\end{definition}

\begin{definition}[KKT Residuals~\cite{nocedal2006numerical}]
To assess the quality of the approximate solutions produced by our algorithm,
we quantify the violation of the Karush-Kuhn-Tucker (KKT) conditions by means of two residuals.
For a candidate primal–dual pair $(\vect{x}, \lambda)$, we define the
\emph{stationarity residual}
\begin{equation}
    r_{s}(\vect{x}, \lambda)
    := \dist^2\bigl(\nabla f(\vect{x}) + \nabla c(\vect{x})^{\top} \lambda,\, -N_{\X}(\vect{x})\bigr),
\end{equation}
that is, the distance from the vector
$\nabla f(\vect{x}) + \nabla c(\vect{x})^{\top} \lambda$ to the negative normal cone $-N_{\X}(\vect{x})$.
In addition, we measure the violation of the equality constraints via the
\emph{constraint residual} or \emph{feasibility residual}
\begin{equation}
    r_{f}(\vect{x}) := \|c(\vect{x})\|^2.
\end{equation}

Both residuals vanish simultaneously if and only if $(\vect{x}, \lambda)$ satisfies the KKT
conditions, i.e., $\vect{x}$ is a first-order stationary point of the constrained problem
with associated multiplier $\lambda$. In our theoretical analysis, we therefore use
$r_{s}(\vect{x}, \lambda)$ and $r_{f}(\vect{x})$ to evaluate the
accuracy of the solutions computed by the algorithm.
\end{definition}

\begin{definition}[Error Function]
We define the error function $E_{d,k,\sigma}(b)$ and $e_{k,\sigma}(b)$ to bound the uncertainty in the gradient and function estimation, respectively. Formally, they are defined as:
\begin{align}
\label{eq:definition-error-function}
    E_{d,k,\sigma}(b) =& \inf_{\bZ \in \R^{b \times d}} \tr\bigl( 
        \nabla k(\zero, \zero)\nabla^\top - \\ \notag
        &\nabla k(\zero, \bZ) (k(\bZ, \bZ) + \sigma^2 \bI)^{-1} k(\bZ, \zero)\nabla^\top\bigr), \\
    e_{k,\sigma}(b)=& \  k(\zero, \zero) - k(\zero, \zero_{b \times d}) \\\notag
         &\bigl(k(\zero_{b \times d},\zero_{b \times d}) + \sigma^2 \bI\bigr)^{-1} k(\zero_{b \times d}, \zero).
\end{align}

\end{definition}
\section{Proposed Algorithm}
The LCBO method we propose is outlined in~\cref{alg:lcbo}. This approach generalizes GIBO~\cite{LBO-GIBO} to handle constrained optimization problems defined on a compact set $\mathcal{X}$. To address the constraints, we employ a quadratic penalty method. For a penalty factor $\rho > 0$, we define the penalized objective function $Q_\rho(\vect{x})$ as:
\begin{equation}
\label{eq:penalty-function}
    Q_\rho(\vect{x}) = f(\vect{x}) + \frac{\rho}{2} \| c(\vect{x}) \|^2,
\end{equation}
where $f(\vect{x})$ is the objective function and $c(\vect{x})$ represents the constraint vector. The algorithm operates iteratively, constructing GP surrogates for both the objective and the constraints using observed data $\D_k$.

In the $k$-th iteration, our algorithm comprises two distinct phases: ``local exploration'' and ``local exploitation''. 

In the \textbf{local exploration} phase, we actively sample data points to mitigate the uncertainty at the current design variable $\vect{x}_k$. Specifically, we select a batch of exploration candidate points $\bZ_2$ (Line 6 of \cref{alg:lcbo}) by minimizing the following acquisition function:
\begin{align}
\label{eq:AF}
    \alpha(\vect{x}_k, \bZ)&=\max_i \alpha_i(\vect{x}_k, \bZ)\quad i\in\{f,c_1,\dots,c_m\}\\
    \alpha_i(\vect{x}_k, \bZ)&=\tr\bigl(\nabla k_{\D_{k-1} \cup \bZ}^{(i)}(\vect{x}_k, \vect{x}_k)\nabla^\top\bigr)
\end{align}
Here, for each modelled quantity (the objective $f$ and the constraints $c_1,\dots,c_m$), 
$\nabla k_{\D_{k-1} \cup\bZ}^{(\cdot)}(\vect{x}_k,\vect{x}_k)\nabla^\top$ denotes the posterior covariance matrix of the gradient evaluated at $\vect{x}_k$ given the augmented dataset $\D_{k-1} \cup\bZ$. This acquisition function can be viewed as the trace of the worst-case posterior covariance matrix of the gradients at $\vect{x}_k$, conditioned on the current dataset augmented with the hypothetical batch $\bZ$. 
By minimizing $\alpha(\vect{x}_k, \bZ)$ over $\bZ$, we identify a batch of evaluation points that is expected to provide the largest reduction in local gradient uncertainty in a one-step lookahead sense, thereby yielding the most informative design for subsequent optimization steps.
It should be noted that during the local exploration phase, we perform repeated estimations of the function value at the current design variable $\vect x_k$. This strategy serves a dual purpose: it reduces the posterior uncertainty of the constraints and ensures theoretical convergence.

In the \textbf{local exploitation} phase, we update the design variables by performing a gradient descent step on the penalty function~\cref{eq:penalty-function}, utilizing the posterior distribution of the GPs. The gradient is given by:
\begin{equation}
\hatnabla Q_{\rho_k}(\vect{x}_k) = \nabla \mu_{\D_k}^{(f)}(\vect{x}_k) + \rho_k \nabla \mu_{\D_k}^{(c)}(\vect{x}_k)^\top \mu_{\D_k}^{(c)}(\vect{x}_k)
\end{equation} 
As the negative gradient represents the direction of steepest descent, this step can be interpreted as a localized counterpart to EI, which typically selects the point offering the maximum expected reduction in function value on a global scale.
\begin{remark}
As discussed in~\cref{subsec:GP-fundamentals}, the posterior quantities used in the acquisition function are not based on the exact posterior mean of the composite term $\nabla c(\vect{x})^\top c(\vect{x})$. 
In fact, $\nabla c(\vect{x})$ and $c(\vect{x})$ are statistically dependent under the joint GP model. 
We deliberately work with a simplified surrogate for two reasons. 
First, it leads to simpler and more efficient computations. 
Second, the exact posterior distribution of $\nabla c(\vect{x})^\top c(\vect{x})$ is no longer Gaussian, which makes it technically challenging to derive tight high-probability bounds on
\begin{equation}
  \bigl\|\nabla c(\vect{x})^\top c(\vect{x}) - \E[\nabla c(\vect{x})^\top c(\vect{x}) \mid \D]\bigr\|.  
\end{equation}
By relying on a tractable Gaussian approximation instead, we obtain gradient-uncertainty measures that are easier to analyze and implement, while still capturing the main dependence structure relevant for our optimization procedure.
\end{remark}

A distinct feature of our approach is its \textbf{single-loop} structure. Classical penalty methods typically require solving an inner unconstrained subproblem to convergence for a fixed $\rho$ before updating the penalty factor. However, in the black-box setting with noisy observations, determining the termination condition for such an inner loop is practically infeasible due to the lack of exact gradient norms. Consequently, our algorithm performs only a single gradient step per iteration. Despite this simplification, appropriate scheduling of the penalty factor $\rho_k$ and the step size $\eta_k$ ensures that the sequence of iterates converges to an approximate KKT point.

\begin{algorithm}[tb]
   \caption{Local Constrained Bayesian Optimization}
   \label{alg:lcbo}
\begin{algorithmic}[1]
   \STATE {\bfseries Input:} Convex and compact set $\X$; Black-box functions $f(\vect{x})$ and $c(\vect{x})$.
   \STATE {\bfseries Initialization:} Initial point $\vect{x}_1 \in \mathcal{X}$ and dataset $\D_0$; sequences for step size $\{\eta_k\}$ and penalty factor $\{\rho_k\}$.
   \FOR{$k=1, 2, \dots, K$}
   \STATE \textbf{Local Exploration:}
   \STATE Generate $\bZ_1 = [\vect{x}_k, \dots, \vect{x}_k]^\top \in \mathbb{R}^{b_k^{(1)} \times d}$ (Repeated evaluations at current iterate).
   \STATE Select exploration batch:
   $$ \bZ_2 = {\arg\min}_{\bZ} \ \alpha(\vect{x}_k, \bZ)\quad \text{where }\bZ  \in \mathbb{R}^{b_k^{(2)} \times d}$$
   \STATE Form the combined batch $\bZ = [\bZ_1^\top, \bZ_2^\top]^\top$.
   \STATE Evaluate noisy functions at $\bZ$ to obtain observations $\vect{y}_f, \vect{y}_c$ and update dataset: $\D_k = \D_{k-1} \cup \{ (\bZ, \vect{y}_f, \vect{y}_c) \}$.
   \STATE \textbf{Local Exploitation:}
   \STATE Compute approximate gradient using GP posterior means:
   $$ \hatnabla Q_{\rho_k}(\vect{x}_k) = \nabla \mu_{\D_k}^{(f)}(\vect{x}_k) + \rho_k \nabla \mu_{\D_k}^{(c)}(\vect{x}_k)^\top \mu_{\D_k}^{(c)}(\vect{x}_k) $$
   \STATE Update decision variable (with projection onto $\mathcal{X}$ if necessary):
   $$ \vect{x}_{k+1} = P_{\mathcal{X}}(\vect{x}_k - \eta_k \hatnabla Q_{\rho_k}(\vect{x}_k)) $$
   \ENDFOR
\end{algorithmic}
\end{algorithm}
\section{Theoretical Analysis}

\subsection{Assumptions}

We invoke the following assumptions to establish our theoretical results:
\begin{assumption}[Gaussian Process Prior]
\label{assumption:Gaussian-Process-Prior}
The objective function $f$ and constraint functions $c_i$ are sampled from zero mean GPs. Furthermore, the kernel functions are stationary, four-times continuously differentiable, positive definite and bounded.
\end{assumption}
Common kernels including RBF kernels and Mat\'ern kernels with $\nu>2$ satisfy the aforementioned assumptions.
\begin{assumption}[Domain]
\label{assumption:Compact-and-convex-Domain}
The search space $\X$ is compact and convex.
\end{assumption}

\begin{assumption}[Regularity Condition]
\label{assumption:Regularity-Condition}
We assume a regularity condition on the constraints that generalizes standard constraint qualifications. Specifically, there exists a constant $\gamma > 0$ such that for any sequence $\{\vect{x}_k\}$, the following inequality holds:
\begin{equation}
\dist(\nabla c(\vect{x}_k)^\top c(\vect{x}_k), -N_{\X}(\vect{x}_k)) \geq \gamma \|c(\vect{x}_k)\|.
\end{equation}
\end{assumption}

\begin{remark}
\cref{assumption:Regularity-Condition} introduces a regularity condition essential for constrained optimization. Intuitively, it guarantees a steep enough descent direction towards the feasible set, preventing the optimization from stalling in infeasible regions with flat geometry. This type of condition is standard in the analysis of numerical optimization algorithms involving general constraints~\cite{sahin2019inexact, li2021rate, lin2022complexity, li2024stochastic, alacaoglu2024complexity, lu2026variance}. Notably, in the whole place, this condition is implied by the linear independence constraint qualification (LICQ). \cite{lin2022complexity} further discusses the connection between this assumption and the Kurdyka–Łojasiewicz condition.
\end{remark}
\subsection{Theoretical Analysis}
In this section, we establish a high-probability convergence bound for~\cref{alg:lcbo}, which scales polynomially with the problem dimension $d$. All detailed proofs are deferred to~\cref{app:proof}.
\begin{theorem}
\label{theorem:stationary-theorem}
For any $\delta\in (0,1)$, suppose~\cref{assumption:Gaussian-Process-Prior,assumption:Compact-and-convex-Domain,assumption:Regularity-Condition} hold and the parameters in~\cref{alg:lcbo} are selected according to
\begin{equation}
\label{eq:parameters}
    \eta_k=\frac{1}{2L_m}k^{-\frac{1}{2}},\quad \rho_k=k^{\frac{1}{4}}.
\end{equation}
Then for the total iteration count $K > \tilde{K}_m$, there exists $\lambda_{k+
1}$ such that, with probability at least $1-\delta$, the sequence $\{\vect{x}_k\}$ generated by~\cref{alg:lcbo} satisfies:
\begin{align}
    \frac{1}{K} \sum_{k=1}^K r_s(\vect{x}_{k+1},\lambda_{k+1})
    &\le \frac{C_{1,m} + C_2\mathcal{E}_K}{\sqrt{K}}, \\
    \frac{1}{K} \sum_{k=1}^K r_f(\vect{x}_{k+1})
    &\le \frac{C_3}{\sqrt{K}} + \frac{C_{4,m}+C_5 \mathcal{E}_K}{K}.
\end{align}
Here, $\mathcal{E}_K$ is the cumulative estimation error defined as:
\begin{equation}
    \begin{aligned}
        \mathcal{E}_K = \sum_{k=1}^K \beta_k &\bigl( c_{1,m} \, e_{k,\sigma}(b_k^{(1)}) \\&+ (c_{2,m}\tilde B_{k,m}^2+ c_{3,m}\eta_k) E_{d,k,\sigma}(b_k^{(2)}) \bigr),
    \end{aligned}
\end{equation}

The orders of $\beta_k$ and $\tilde B_{k,m}$ are $\mathcal{O}(\log (mk))$ and $\mathcal{O}(\sqrt{m\log k})$, respectively. The terms $L_m, \tilde{K}_m,C_{1,m}, C_2,C_3,C_{4,m},C_{5}$ and $c_{1,m}, c_{2,m}, c_{3,m}$ depend polynomially on $\gamma,m$ and the function boundaries. Detailed expressions are provided in~\cref{app:main-theorem}.
\end{theorem}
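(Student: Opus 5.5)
The plan is to analyze \cref{alg:lcbo} as an inexact projected gradient method on the time-varying penalty $Q_{\rho_k}$, and to control the inexactness with GP concentration.

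\textbf{Step 1: uniform confidence bounds.} Since the kernels are four times differentiable and bounded (\cref{assumption:Gaussian-Process-Prior}) and $\X$ is compact, the sample paths $f,c_i$ and their gradients are almost surely bounded and Lipschitz. A Borell--TIS argument followed by a union bound gives constants $B_f,B_c,G$ and smoothness constants. These hold with probability $1-\delta/2$, and $\log m$ enters through the union bound. Consequently $Q_{\rho_k}$ is $L_m\rho_k$-smooth on $\X$, with $L_m$ polynomial in $m$ and these bounds.

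Next, at each iterate $\vect{x}_k$, Gaussian tail bounds with a union over $k$ and $i\in\{f,c_1,\dots,c_m\}$ yield $\beta_k=\bigO(\log(mk))$ such that
\[
\|\nabla\mu^{(i)}_{\D_k}(\vect{x}_k)-\nabla i(\vect{x}_k)\|^2\le\beta_k\,\tr\bigl(\nabla k^{(i)}_{\D_k}(\vect{x}_k,\vect{x}_k)\nabla^\top\bigr),
\qquad
|\mu^{(i)}_{\D_k}(\vect{x}_k)-i(\vect{x}_k)|^2\le\beta_k\,k^{(i)}_{\D_k}(\vect{x}_k,\vect{x}_k).
\]
Conditioning on more data only decreases posterior variance, and the kernel is stationary. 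Together with the choice of $\bZ_2$ minimizing $\alpha$, this bounds the gradient trace by $E_{d,k,\sigma}(b_k^{(2)})$. The repeated points $\bZ_1$ bound the value variance by $e_{k,\sigma}(b_k^{(1)})$. The posterior means are themselves bounded by some $\tilde B_{k,m}=\bigO(\sqrt{m\log k})$.

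\textbf{Step 2: the gradient error.} Write $g_k=\hatnabla Q_{\rho_k}(\vect{x}_k)-\nabla Q_{\rho_k}(\vect{x}_k)$ and split the product term as
\[
\hat J^\top\hat c-J^\top c=(\hat J-J)^\top\hat c+J^\top(\hat c-c).
\]
This gives
\[
\|g_k\|^2\lesssim \beta_k\bigl[(1+\rho_k^2\tilde B_{k,m}^2)E_{d,k,\sigma}(b_k^{(2)})+\rho_k^2G^2e_{k,\sigma}(b_k^{(1)})\bigr].
\]

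\textbf{Step 3: one-step descent.} Apply the standard projected-gradient descent lemma with step $\eta_k\le 1/(2L_m\rho_k)$; this is guaranteed for $k$ beyond some $\tilde K_m$ because $\eta_k\rho_k\sim k^{-1/4}$. Using the gradient mapping $G_k=(\vect{x}_k-\vect{x}_{k+1})/\eta_k$, this yields
\[
Q_{\rho_k}(\vect{x}_{k+1})\le Q_{\rho_k}(\vect{x}_k)-\tfrac{\eta_k}{4}\|G_k\|^2+\eta_k\|g_k\|^2.
\]
To pass from $Q_{\rho_k}$ to $Q_{\rho_{k+1}}$, add $(\rho_{k+1}-\rho_k)\|c\|^2/2\le\bigO(k^{-3/4})B_c^2$. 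Telescoping, dividing by $\eta_K$-type weights and using bounded $Q$ gives $\frac1K\sum_k\|G_k\|^2\lesssim(C+\sum_k\eta_k\|g_k\|^2\cdot\text{weights})/\sqrt K$. This is where $\mathcal{E}_K$ and its $\eta_k$ factor come from, after the $\rho_k^2$ factors are absorbed.

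\textbf{Step 4: KKT residuals.} Optimality of the projection gives $\nabla Q_{\rho_k}(\vect{x}_k)+g_k-G_k\in -N_\X(\vect{x}_{k+1})$. Set $\lambda_{k+1}=\rho_k c(\vect{x}_{k+1})$. Moving $\nabla f,\nabla c$ from $\vect{x}_k$ to $\vect{x}_{k+1}$ costs $\bigO(\rho_k\eta_k\|G_k\|)$ by smoothness, so
\[
r_s\lesssim\|G_k\|^2+\|g_k\|^2,
\]
and averaging gives the first bound.

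For feasibility, apply \cref{assumption:Regularity-Condition} at $\vect{x}_{k+1}$. Since $\rho_k\nabla c^\top c$ equals the stationarity vector minus $\nabla f$, up to a normal-cone element,
\[
\gamma\rho_k\|c(\vect{x}_{k+1})\|\le\|\nabla f\|+\sqrt{r_s}.
\]
Hence $r_f\lesssim (G^2+r_s)/(\gamma^2\rho_k^2)$, with $\rho_k^{-2}=k^{-1/2}$. Averaging gives the $C_3/\sqrt K$ term plus an error term of order $1/K$.

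\textbf{Main obstacle.} The hardest step is the telescoping in Step 3 under simultaneously varying $\rho_k$ and $\eta_k$. The bookkeeping must ensure that the $\rho_k^2$ factors multiplying the estimation errors, together with the $\eta_k$ weights, collapse exactly into the stated form of $\mathcal{E}_K$ and not a worse power of $k$. I would first try a weighted telescoping that divides the descent inequality by $\rho_k$, so that the bound is stated on $f/\rho_k+\|c\|^2/2$. This makes the penalty increments summable.
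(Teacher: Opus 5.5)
There is a genuine gap in Step 3, at the point you flagged as the main obstacle.

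\textbf{Why your telescoping loses a factor.} The crude bound $\tfrac12(\rho_{k+1}-\rho_k)\|c(\vect{x}_{k+1})\|^2\le \bigO(k^{-3/4})B_{c,m}^2$ is not summable. It sums to $\tfrac12(\rho_{K+1}-\rho_1)B_{c,m}^2=\Theta(K^{1/4})$. After dividing by $K\eta_K\asymp\sqrt K$, your argument therefore gives only $\frac1K\sum_k r_s=\bigO(K^{-1/4})$, not $(C_{1,m}+C_2\mathcal{E}_K)/\sqrt K$.

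Your proposed fix, dividing the descent inequality by $\rho_k$, does not repair this. It makes the increments $f(\vect{x}_{k+1})(\rho_{k+1}^{-1}-\rho_k^{-1})=\bigO(k^{-5/4})$ summable, but it shrinks the descent weights to $\eta_k/\rho_k\asymp k^{-3/4}$. Normalizing by $K\eta_K/\rho_K\asymp K^{1/4}$ again yields only $K^{-1/4}$. The error terms would then carry weights $\eta_k\rho_k\asymp k^{-1/4}$, which is not the structure of $\mathcal{E}_K$.

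\textbf{The missing idea.} Invoke \cref{assumption:Regularity-Condition} inside the telescoping, not only in the feasibility step. The computation from your Step 4 gives
\[
\tfrac12(\rho_{k+1}-\rho_k)\|c(\vect{x}_{k+1})\|^2\le\frac{\rho_{k+1}-\rho_k}{\gamma^2\rho_k^2}\Bigl(B_{\nabla f}^2+r_s\bigl(\vect{x}_{k+1},\rho_k c(\vect{x}_{k+1})\bigr)\Bigr).
\]
The coefficient is $\bigO(k^{-5/4})$, so the $B_{\nabla f}^2$ part is summable.

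The leftover $r_s$ term must be absorbed. Combining the descent inequality with $r_s\le 6\|G_k\|^2+3\|g_k\|^2$ puts $\frac{\eta_k}{24}r_s$ on the left-hand side. Then $\frac{\rho_{k+1}-\rho_k}{\gamma^2\rho_k^2}\le\frac{\eta_k}{48}$ holds once $k$ exceeds a threshold of order $\gamma^{-8/3}L_m^{4/3}$, so the leftover term is dominated.

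This is the actual role of $\tilde K_m$. Your step-size condition $\eta_k\le(2L_m\rho_k)^{-1}$ does not need it: it holds for every $k\ge 1$, since $\eta_k\rho_k=\frac{1}{2L_m}k^{-1/4}$. For the finitely many $k\le\tilde K_m$, the crude bound is harmless and contributes only the constant $\tfrac12(\rho_{\tilde K_m}-\rho_1)B_{c,m}^2$ to $C_{1,m}$.

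\textbf{Recovering $\mathcal{E}_K$.} Keep the weights $\eta_k$ rather than $\eta_k/\rho_k$. The identity $\eta_k\rho_k^2=1/(2L_m)$ cancels the $\rho_k^2$ factors in $\eta_k\|g_k\|^2$. The $\rho$-free term $\beta_kE_{d,k,\sigma}(b_k^{(2)})$ retains its factor $\eta_k$, which is exactly the $c_{3,m}\eta_k$ term in $\mathcal{E}_K$.

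With this change, the rest of your plan matches the paper's proof. That includes Steps 1, 2 and 4 and the choice $\lambda_{k+1}=\rho_k c(\vect{x}_{k+1})$.
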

\cref{theorem:stationary-theorem} establishes that, with the exception of the cumulative error term $\mathcal{E}_K$, the remaining components of the KKT residual converges at a sublinear rate. Furthermore, there exists an index $1\le \hat k\le K$ such that both $r_s(\vect{x}_{\hat k+1},\lambda_{\hat k+1})$ and $r_f(\vect{x}_{\hat k+1})$ are of order $\bigO((1+\mathcal{E}_K)K^{-1/2})$.

Note that $\mathcal{E}_K$ is constructed from the error functions described in~\cref{eq:definition-error-function}. By further investigating the properties of these error functions, we derive the following corollary, which characterizes the polynomial dependence of the KKT residual on the dimension $d$.
\begin{corollary}
\label{corollary-dt2}
    Let $k(\cdot, \cdot)$ be the RBF kernel or $\nu=2.5$ Mat\'ern kernel. Under the same conditions as in~\cref{theorem:stationary-theorem}, if we set $b_k^{(1)}=k$ and $b_k^{(2)} = dk^2$, then
    \begin{align}
       \frac{1}{K} \sum_{k=1}^K r_s(\vect{x}_{k+1},\lambda_{k+1}) 
        &= \tilde\bigO(m^{\frac{7}{3}}K^{-\frac{1}{2}}+\sigma m^2 d K^{-\frac{1}{2}}) \notag \\
         &= \tilde \bigO(m^{\frac{5}{2}}d^{\frac{1}{6}}n^{-\frac{1}{6}}+\sigma m^{\frac{13}{6}} d^{\frac{7}{6}} n^{-\frac{1}{6}}),
    \end{align}
    \begin{align}
       \frac{1}{K} \sum_{k=1}^K&r_f(\vect{x}_{k+1})
        =\tilde \bigO(K^{-\frac{1}{2}}+m^{\frac{7}{3}}K^{-1}+\sigma m^2d K^{-1})\notag \\
        &=\tilde \bigO(m^{\frac{1}{6}}d^{\frac{1}{6}}n^{-\frac{1}{6}}+m^{\frac{8}{3}}d^{\frac{1}{3}}n^{-\frac{1}{3}}+\sigma m^{\frac{7}{3}}d^{\frac{4}{3}}n^{-\frac{1}{3}})
    \end{align}
    hold with probability at least $1-\delta$. Here, $K$ denotes the iteration count and $n$ denotes the number of zero-order oracle calls.
\end{corollary}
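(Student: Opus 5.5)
The plan is to plug the prescribed batch sizes into the cumulative error $\mathcal{E}_K$ of \cref{theorem:stationary-theorem}, bound $\mathcal{E}_K$ explicitly, and then translate the iteration count $K$ into the oracle count $n$.

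\textbf{Step 1: the function-value error term.} By definition, $e_{k,\sigma}(b)$ is the posterior variance at $\zero$ after $b$ noisy observations at the same point. For a stationary kernel with $k(\zero,\zero)=k_0$ this has the closed form
\begin{equation*}
e_{k,\sigma}(b)=k_0-\frac{b\,k_0^2}{b\,k_0+\sigma^2}=\frac{k_0\sigma^2}{b\,k_0+\sigma^2}\le \frac{\sigma^2}{b}.
\end{equation*}
With $b_k^{(1)}=k$ this gives $\sum_k \beta_k c_{1,m} e_{k,\sigma}(k)=\tilde\bigO(c_{1,m}\sigma^2\log K)$, which is harmless.

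\textbf{Step 2: the gradient error term.} For $E_{d,k,\sigma}(b)$ I would use the known bound from the unconstrained GIBO analysis \cite{LBO-behavior}. There, for the RBF and $\nu=2.5$ Mat\'ern kernels, a specific design $\bZ$ is chosen: points placed symmetrically along the $d$ coordinate axes at a small offset $h$, with repeated copies. This gives finite-difference-type estimates, so that for $b\gtrsim d$,
\begin{equation*}
E_{d,k,\sigma}(b)=\bigO\bigl(\sigma d^{3/2} b^{-1/2}\bigr)\quad\text{or}\quad \bigO\bigl(d^{a}\sigma^{\theta}b^{-\theta'}\bigr),
\end{equation*}
with the correct exponents read off from that paper. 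The mechanism is a trade-off between the bias from $h$ (via the fourth derivatives of the kernel, which exist by \cref{assumption:Gaussian-Process-Prior}) and the variance $\sigma^2/(b/d)$ per coordinate, optimized over $h$. Since the infimum over $\bZ$ is at most the value of any particular design, this bound transfers directly.

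\textbf{Step 3: summing the gradient term.} With $b_k^{(2)}=dk^2$, Step 2 gives an error per iteration of order $\sigma\,\mathrm{poly}(d)\,k^{-1}$ times $\beta_k(c_{2,m}\tilde B_{k,m}^2+c_{3,m}\eta_k)$. Here $\tilde B_{k,m}^2=\bigO(m\log k)$ and $\beta_k=\bigO(\log(mk))$. The sum is therefore logarithmic: $\mathcal{E}_K=\tilde\bigO(\sigma m\,c_{2,m} d+\dots)$. I expect this to match the stated $\sigma m^2 d$ once the polynomial-in-$m$ constants are tracked. Substituting into \cref{theorem:stationary-theorem}, and tracking $C_{1,m}=\bigO(m^{7/3})$ and $C_2\mathcal{E}_K=\tilde\bigO(\sigma m^2 d)$ from the appendix expressions, yields the first equalities in $K$.

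\textbf{Step 4: converting to the oracle count.} The number of oracle calls satisfies
\begin{equation*}
n=(m+1)\sum_{k\le K}\bigl(k+dk^2\bigr)=\Theta(m\,d\,K^3),
\end{equation*}
so $K=\Theta\bigl((n/(md))^{1/3}\bigr)$. Substituting $K^{-1/2}=(md/n)^{1/6}$ and $K^{-1}=(md/n)^{1/3}$ gives the stated powers. For example, $m^{7/3}K^{-1/2}$ becomes $m^{5/2}d^{1/6}n^{-1/6}$, and $\sigma m^2 d K^{-1}$ becomes $\sigma m^{7/3}d^{4/3}n^{-1/3}$, which is consistent.

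The main obstacle is Step 2: obtaining an explicit, dimension-polynomial bound on $E_{d,k,\sigma}(b)$ for the two kernels, and checking that its $d$-exponent combined with $b=dk^2$ produces exactly the linear factor $d$ in $\mathcal{E}_K$. I would first try to quote the corresponding lemma of \cite{LBO-behavior}. If it is not directly applicable, I would redo the axis-aligned design computation, bounding the trace by $d$ times a one-dimensional posterior variance of the derivative via a Taylor expansion of the kernel.
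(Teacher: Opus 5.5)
Your proposal is correct and follows the paper's proof essentially step for step. You bound $e_{k,\sigma}(b)=\bigO(\sigma^2/b)$ via the repeated-point closed form; your version $k_0\sigma^2/(bk_0+\sigma^2)$ is in fact the exact one for general $k(\zero,\zero)$. You quote $E_{d,k,\sigma}(b)=\bigO(\sigma d^{3/2}b^{-1/2})$ from Lemmas 4--5 of \cite{LBO-behavior}, where your first guess for the exponents is the right one, so that $b_k^{(2)}=dk^2$ yields $\tilde\bigO(\sigma m^2 d)$. You then convert via $n=\Theta(mdK^3)$, exactly as the paper does. Like the paper, you silently absorb the $\tilde\bigO(\sigma^2 m^2)$ contribution of the function-value term into the stated rate, which is fine when $\sigma$ is treated as a constant.
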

\begin{remark}
In the unconstrained optimization setting, the corresponding result is established as
\begin{equation}
\frac{1}{K} \sum_{k=1}^K r_s(\vect{x}_{k+1})= \tilde{\mathcal{O}}(\sigma d^{\frac{4}{3}} n^{-\frac{1}{3}}+\sigma^{\frac{1}{2}}d^{\frac{2}{3}}n^{-\frac{1}{6}})
\end{equation}
(see Theorem 3 in~\cite{LBO-behavior}). Although our derived convergence bound exhibits a slightly increased dependence on the dimension $d$, this is a justifiable trade-off for the capability to handle constraints.
\end{remark}
\section{Experiments}
\label{sec:experiments}

In this section, we evaluate the performance of our proposed algorithm, LCBO, against state-of-the-art CBO methods. We conduct experiments on three categories of problems: (1) high-dimensional synthetic functions, (2) realistic engineering design benchmarks, and (3) a continuous control reinforcement learning task.

\subsection{Baselines and Experimental Setup}
\label{sec:implementation}
\paragraph{Baselines}
We compare LCBO with four representative algorithms to demonstrate its efficacy across different aspects of constrained optimization:
\begin{itemize}
    \item \textbf{CEI}~\cite{gardner2014bayesian}: The canonical Constrained Expected Improvement, representing standard global CBO methods.
    \item \textbf{EPBO}~\cite{lu2022no}: A method utilizing exact penalty functions, serving as a direct comparison to our penalty-based formulation.
    \item \textbf{SCBO}~\cite{eriksson2021scalable} and \textbf{HDsafeBO}~\cite{wei2025safe}: Two state-of-the-art approaches designed specifically for high-dimensional constrained settings.
\end{itemize}
We exclude LineBO~\cite{kirschner2019adaptive} from our comparison as it requires a feasible starting point and a connected feasible region—assumptions that are frequently violated in the scenarios we target.
\paragraph{Initialization and Protocol}
We employ a cold-start strategy for all experiments. For a problem of dimension $d$, we randomly sample $d$ initial points from the search space $\mathcal{X}$ and evaluate both the objective and constraint functions. To ensure a fair comparison, all algorithms share the same initial training set $\mathcal{D}_{\text{init}}$ for GP initialization. For algorithms requiring a specific starting candidate (e.g., SCBO), we select the ``best'' point from $\mathcal{D}_{\text{init}}$, defined as the feasible point with the minimum objective value, or the point with the minimum constraint violation if no feasible solution exists. All experiments are repeated 10 times with different random seeds. Observations are corrupted by Gaussian noise $\epsilon \sim \mathcal{N}(0, 0.1^2)$. The optimization budget is set to 1,000 evaluations (For simplicity, we define a single evaluation as the computation of both the objective function and all constraints at a given point. Therefore, the reported evaluation count effectively corresponds to the total number of zeroth-order oracle calls divided by $(m+1)$). Since the algorithms may not find a feasible solution immediately, and ``infeasible'' values cannot be averaged, we report the median and interquartile range of the best feasible objective found so far. Infeasibility is treated as infinity. 
\paragraph{Implementation Details}

To enhance both algorithmic performance and computational efficiency, we adopt 
several implementation strategies following the GIBO framework, specifically: (1) maintaining a local GP model trained exclusively on the $N_m$ most recent data points; (2) normalizing $\hat{\nabla} Q_{\rho_k}(\vect{x}_k)$ using the Euclidean norm to stabilize step size; (3) restricting the search space for optimizing the acquisition function~\cref{eq:AF} to the dynamic local region $[\vect{x}_k-\delta, \vect{x}_k+\delta]$; and (4) employing a fixed mini-batch in each iteration.

Also, to enable efficient gradient-based optimization for~\cref{eq:AF}, which involves maximization over multiple terms, we employ the LogSumExp function as a smooth approximation to the hard maximum operator. Detailed hyperparameter settings are provided in~\cref{app:algo-setup}. Inequality constraints are handled via the standard one-sided penalty, preserving the algorithmic mechanism~\cite{nocedal2006numerical}. 

\subsection{Synthetic Benchmarks}
\label{sec:exp_synthetic}
We extend the within-model comparison framework established in GIBO~\cite{LBO-GIBO}. In this context, ``within-model'' implies that both the objective and constraint functions adhere to~\cref{assumption:Gaussian-Process-Prior}. For a comprehensive description of the synthetic objectives and specific experimental settings, we refer the reader to Section 4.1 of the GIBO paper. Our implementation diverges slightly from GIBO in that we utilize Random Fourier Features (RFF) to approximate function samples from the GP posterior~\cite{RFF}. Furthermore, we modify the constraints to $c_i(\vect{x}) + 0.5 \leq 0$. This positive offset tightens the feasible region, thereby increasing the optimization difficulty. We generate problems with dimensions $d \in \{25, 50, 100\}$ within the domain $[0, 1]^d$. Each problem includes two constraints. 

\cref{fig:synthetic-tasks} illustrates the progressive best feasible objective values. The performance across 25D, 50D, and 100D synthetic tasks demonstrates that LCBO consistently outperforms all state-of-the-art baselines, with its advantage becoming increasingly pronounced as dimensionality scales up. Specifically, LCBO exhibits exceptional sample efficiency, achieving significantly lower objective values in the early stages of optimization compared to what other methods reach even after 1000 iterations. While the performance gap is clear in 25D, it expands dramatically in the 100D scenario, where LCBO maintains a steep descent while competitors quickly plateau, highlighting its superior scalability and robustness in navigating high-dimensional constrained search spaces.

\begin{figure*}[t]
    \centering

    \includegraphics[width=\linewidth]{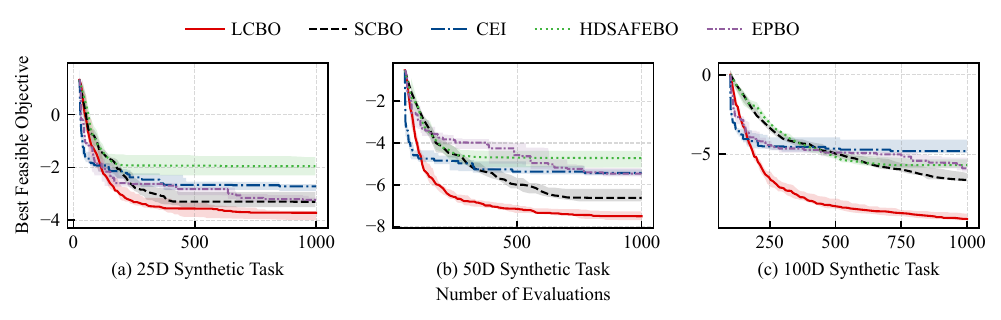}
    \caption{\textbf{Optimization progress on synthetic tasks.} The x-axis represents the number of function evaluations, while the y-axis shows the best feasible objective value found so far (lower is better). The solid curves represent the median performance over 10 independent runs. The shaded regions depict the interquartile range (25th–75th percentiles).}
    \label{fig:synthetic-tasks}
    
    \vspace{0.4cm} 

    \includegraphics[width=\linewidth]{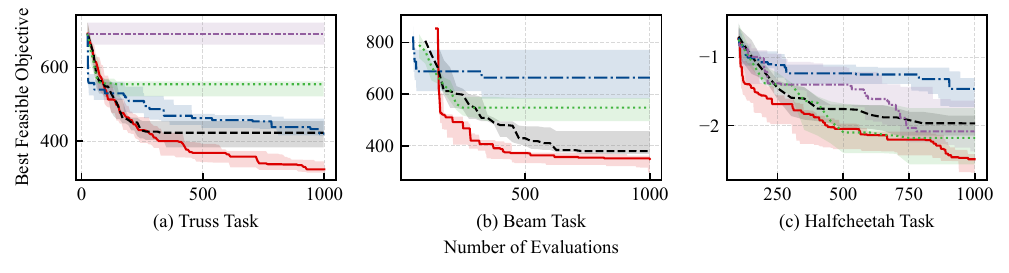}
    \caption{\textbf{Optimization progress on real-world tasks.} The plotting conventions and axes definitions are identical to Figure~\ref{fig:synthetic-tasks}.}
    \label{fig:real_world_tasks}
\end{figure*}
\subsection{Real-world Benchmarks}
\label{sec:exp_real_world}
We evaluate the performance of our proposed method on three challenging benchmark problems: the 25-Bar Truss Design ($d=25$), the Stepped Cantilever Beam Design ($d=50$), and the MuJoCo HalfCheetah environment ($d=102$). These tasks were selected to represent a diverse range of black-box constrained optimization scenarios, spanning from medium to high dimensionality. Specifically, the 25-Bar Truss and Stepped Cantilever Beam problems are canonical benchmarks in the field of structural optimization, widely used to test algorithms under complex physical constraints. Furthermore, the HalfCheetah task serves as a standard high-dimensional control benchmark from the reinforcement learning domain, demonstrating the scalability of our approach. The objective minimizes negative average forward velocity over $H=1,000$ steps, subject to a cumulative energy constraint requiring $\sum \|a_t\|^2\leq 1,500$. Detailed descriptions of the experimental setups and problem configurations are provided in~\cref{app:experiment_setup}.

Results on the \textbf{25-bar truss design task ($d=25$)} highlight the critical advantage of our approach in boundary-constrained settings. Since the optimum lies on the constraint boundary, trust-region methods like SCBO exhibit premature stagnation—rapidly descending but then flattening out as their regions shrink against the constraints. This empirical observation corroborates our discussion in~\cref{sec:introduction}. Conversely, LCBO demonstrates continuous descent throughout the entire optimization process. This confirms that LCBO successfully mitigates the premature shrinkage issue by navigating along the constraint boundary, ultimately outperforming all baselines. Notably, EPBO exhibits poor performance on this task, remaining stuck at a very high objective level, which indicates its difficulty in effectively exploring the feasible region for this structural design problem.

For the \textbf{stepped cantilever beam task ($d=50$)}, although LCBO is slower to locate the initial feasible region (as indicated by the delayed start of the red curve compared to CEI and SCBO), it exhibits rapid and sustained convergence once feasibility is established. Unlike CEI and HDsafeBO, which suffer from obvious premature convergence and stagnate early, LCBO avoids such local optima and continues to descend even as it approaches the 1000-call limit. It is also worth noting that EPBO fails to find any feasible points within the allotted budget for this task, highlighting the superiority of LCBO in handling the tight and complex constraints typical of high-dimensional engineering challenges.

Experimental results on the \textbf{MuJoCo HalfCheetah task $(d=102)$} demonstrate the superior sample efficiency and stability of LCBO. Regarding minor discrepancies in initial objective values, note that while all algorithms utilized identical initial input sets (averaged over five repetitions), the inherent stochasticity of the MuJoCo physics engine results in reasonable fluctuations in initial performance. During the optimization phase, LCBO exhibits the most rapid initial rate of descent, outperforming both SCBO and CEI. Although HDsafeBO achieves median performance comparable to that of LCBO between 500 and 800 evaluations, LCBO demonstrates greater overall stability across 10 independent trials and exhibits stronger continuous optimization capability as the number of function evaluations approaches 1,000. These findings substantiate the enhanced robustness and reliability of our method in complex robotic control tasks characterized by environmental noise.
\subsection{Ablation Studies}
We conduct a systematic ablation study in \cref{app:ablation-study} to validate
the key design choices of LCBO.
\Cref{app:ablation-batch} examines the theory-practice gap arising from the
growing batch size schedule assumed in our analysis, justifying the use of
fixed mini-batch in the main experiments.
\Cref{app:component_ablation} isolates the contribution of three auxiliary
components inherited from GIBO including local GPs, local search boxes, and gradient
normalization, which confirms that the primary gains of LCBO stem from its
core penalty-based gradient descent mechanism rather than these implementation
choices.
\Cref{app:penalty_ablation} investigates the sensitivity of LCBO to the
penalty factor schedule $\rho_k \propto k^{1/4}$, demonstrating that
LCBO performs robustly without per-task tuning.
\section{Conclusion}
In this work, we presented LCBO, a novel framework for high-dimensional CBO. By replacing the rigid trust-region mechanism with a flexible local alternating exploration-exploitation strategy, LCBO avoids premature shrinking while addressing the scalability challenges inherent in global CBO methods. Our theoretical analysis establishes that for common kernels, LCBO’s KKT residual scales polynomially with dimension. This offers a more favorable scalability guarantee than the exponential dimensionality dependence typically found in the regret bounds of global CBO methods. Extensive evaluations on synthetic and real-world engineering and reinforcement learning tasks demonstrate that LCBO consistently surpasses state-of-the-art CBO baselines, offering superior sample efficiency, lower performance variance, and a sustained ability to refine solutions without suffering from premature convergence.

While LCBO exhibits strong local convergence, it may occasionally converge to local optima in highly multi-modal landscapes, a common trade-off in local search methods. Future work will focus on integrating global restart strategies or meta-learning techniques to enhance global exploration while maintaining the current efficiency. Overall, LCBO offers a robust and theoretically-grounded tool for solving complex, high-dimensional engineering and machine learning problems.
\section*{Impact Statement}
This paper presents work whose goal is to advance the field of Machine
Learning. There are many potential societal consequences of our work, none which we feel must be specifically highlighted here.
\section*{Acknowledgement}
Jingzhe Jing, Zheyi Fan, and Qingpei Hu's work is supported by the National Key Research and Development Program of China (2021YFA1000300 and 2021YFA1000301).
\bibliography{example_paper}
\bibliographystyle{icml2026}

\newpage
\appendix
\onecolumn

\section{Proof of Main Theorem and Corollary}

\label{app:proof}
\subsection{Properties of Gaussian Process and its Derivative}
In this section, we present lemmas establishing high-probability error bounds for the posterior mean estimator $\mu_{\D_k}(\vect{x}_k)$ and $\nabla \mu_{\D_k}(\vect{x}_k)$.  Note that the GP posterior covariance does not depend on the observed responses but only on the design points and the kernel hyperparameters.
In the coupled setting where the objective and all constraints are evaluated at the same inputs, their posterior variances are identical, differing only by their hyperparameters (assuming all GPs utilize the same kernel function).  Since our primary focus is on an order-of-magnitude analysis, for the sake of notational simplicity in the following proof, we will not use superscripts to distinguish between different error functions; instead, we will uniformly denote them as $E_{d,k,\sigma}(b_k)$ and $e_{k,\sigma}(b_k)$ for the gradient and function-value error functions, respectively. In addition, let $\tr\left(\nabla k_{\D_k}(\vect{x}_k, \vect{x}_k) \nabla^\top\right)$ and $\tr \left(k_{\mathcal{D}_k}(\vect{x}_k, \vect{x}_k\right) $ denote $\max_i\{\tr\left(\nabla k_{\D_k}^{(i)}(\vect{x}_k, \vect{x}_k) \nabla^\top\right)\}$ and $\max_i\{k^{(i)}_{\D_k}(\vect{x}_k,\vect{x}_k)\},i\in \{f,c_1,\dots,c_m\}$ respectively.
\begin{lemma}
\label{lemma:smoothness-of-GP}
Let $\X\subset\mathbb{R}^d$ be compact and convex, and let $g:\X\to\mathbb{R}$ be a zero-mean GP with kernel $k\in\mathcal{C}^4(\X\times\X)$.  
Set
\[
\mathcal{T}:=\X\times \mathbb{S}^{d-1}\times\mathbb{S}^{d-1},\qquad
Z(\vect{x},\vect{u},\vect{v}):=\vect{u}^\top \nabla^2 g(\vect{x})\, \vect{v},
\]
and let
\[
\sigma_*^2:=\sup_{(\vect{x},\vect{u},\vect{v})\in\mathcal{T}}\mathrm{Var}\!\big(Z(\vect{x},\vect{u},\vect{v})\big),\qquad
M:=\mathbb{E}\!\big[\sup_{(\vect{x},\vect{u},\vect{v})\in\mathcal{T}}|Z(\vect{x},\vect{u},\vect{v})|\big].
\]
Then for any $\delta\in(0,1)$, with probability at least $1-\delta$,
\begin{equation}
\label{eq:bounded-hessain}
\sup_{\vect{x}\in\X}\|\nabla^2 g(\vect{x})\|
\le M+\sigma_*\,\sqrt{2\log\!\frac{2}{\delta}}.
\end{equation}

Consequently, $\nabla g$ is $L$-Lipschitz continuous on $\X$, with
\[
L:=M+\sigma_*\,\sqrt{2\log\!\frac{2}{\delta}}.
\]
\end{lemma}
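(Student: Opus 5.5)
The plan is to apply the Borell--TIS inequality to the Gaussian field $Z$ indexed by $\mathcal{T}$. First, I will justify that $Z$ is a well-defined, centered Gaussian process with continuous sample paths on the compact index set $\mathcal{T}$.

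Since $k\in\mathcal{C}^4(\X\times\X)$, the process $g$ admits mean-square derivatives up to order two. The second derivatives $\partial_{ij} g$ are jointly Gaussian with covariance $\partial_{x_i}\partial_{x_j}\partial_{x'_k}\partial_{x'_l}k(\vect{x},\vect{x}')$. To get sample-path continuity of $\nabla^2 g$ (rather than only mean-square continuity), I will invoke a Kolmogorov-type / Dudley entropy criterion. The fourth derivatives of $k$ are continuous on the compact set $\X\times\X$, hence uniformly continuous, which gives a modulus for $\E|Z(t)-Z(s)|^2$. I expect this regularity step to be the main technical subtlety. With only $\mathcal{C}^4$, one gets continuity of the canonical metric but not automatically a H\"older modulus. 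I would therefore either assume (as is standard, and true for RBF and Mat\'ern $\nu>2$) a version of $g$ that is $\mathcal{C}^2$ with continuous Hessian, or note that the statement is vacuous if $M=\infty$. Then $\sup_{\mathcal{T}}|Z|<\infty$ a.s., so Borell--TIS applies whenever $M<\infty$.

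Given continuity, $\mathcal{T}$ is compact and $(\vect{u},\vect{v})\mapsto \vect{u}^\top H\vect{v}$ is bilinear, so $Z$ is a continuous centered Gaussian process on a compact (hence separable) index set. Also $\sigma_*^2<\infty$, since $\mathrm{Var}(Z)\le d^2\max_{ijkl}\sup_{\vect{x}}|\partial^4 k|$.

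Borell--TIS gives, for every $t>0$,
\[
\Prob\Bigl(\sup_{\mathcal{T}} Z - \E\sup_{\mathcal{T}} Z > t\Bigr)\le e^{-t^2/(2\sigma_*^2)} .
\]
I will apply it to $\sup|Z|$ by working on the doubled index set $\mathcal{T}\times\{\pm1\}$, which has the same $\sigma_*$. Alternatively, observe that $Z(\vect{x},-\vect{u},\vect{v})=-Z(\vect{x},\vect{u},\vect{v})$, so $\sup Z=\sup|Z|$ directly, and the expectation is exactly $M$. The one-sided bound then yields probability $1-\delta/2$ with $t=\sigma_*\sqrt{2\log(2/\delta)}$. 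This matches the stated constant; the two-sided version with $\delta/2$ on each side also gives it.

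Finally, $\sup_{\vect{u},\vect{v}\in\mathbb{S}^{d-1}}\vect{u}^\top\nabla^2 g(\vect{x})\vect{v}=\|\nabla^2 g(\vect{x})\|$, so the supremum over $\mathcal{T}$ equals $\sup_{\vect{x}}\|\nabla^2 g(\vect{x})\|$, which gives \cref{eq:bounded-hessain}. For the Lipschitz consequence, I use convexity of $\X$. For $\vect{x},\vect{y}\in\X$, the segment between them lies in $\X$, and the fundamental theorem of calculus gives
\[
\nabla g(\vect{y})-\nabla g(\vect{x})=\int_0^1\nabla^2 g(\vect{x}+s(\vect{y}-\vect{x}))(\vect{y}-\vect{x})\,ds .
\]
Its norm is therefore at most $L\|\vect{y}-\vect{x}\|$ on the same event.
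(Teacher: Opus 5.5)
Your proposal is correct and follows essentially the same route as the paper. Both apply Borell--TIS to the centered Gaussian field $Z$ on the compact index set $\mathcal{T}$, identify $\sup_{\mathcal{T}}|Z|=\sup_{\vect{x}\in\X}\|\nabla^2 g(\vect{x})\|$, and use a mean-value argument along segments of the convex set $\X$ for the Lipschitz claim. Your caution about sample-path regularity is well placed. The paper asserts a continuous-Hessian modification from $k\in\mathcal{C}^4$ alone by citing Adler--Taylor, and I believe (without checking the exact statement) that their criterion also needs a mild modulus of continuity on the fourth derivatives, which RBF and Mat\'ern $\nu>2$ satisfy. Your one-sided bound via $Z(\vect{x},-\vect{u},\vect{v})=-Z(\vect{x},\vect{u},\vect{v})$ is a valid alternative to the paper's two-sided $2e^{-t^2/(2\sigma_*^2)}$ tail and gives failure probability $\delta/2$.
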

\begin{proof}
Since $k\in\mathcal{C}^4$, all mixed partial derivatives of order up to four exist and are continuous. This ensures that $g$ admits a modification with almost surely continuous first and second derivatives on $\X$, so $\nabla^2 g(x)$ is well-defined and continuous almost surely (Theorem 1.4.2. in~\cite{adler2007random}).
For any $x\in\X$,
\[
\|\nabla^2 g(\vect{x})\|=\sup_{\vect{u},\vect{v}\in\mathbb{S}^{d-1}}|\vect{u}^\top \nabla^2 g(\vect{x})\, \vect{v}|,
\]
hence
\[
\sup_{\vect{x}\in\X}\|\nabla^2 g(\vect{x})\|=\sup_{(\vect{x},\vect{u},\vect{v})\in\mathcal{T}}|Z(\vect{x},\vect{u},\vect{v})|.
\]
The process $Z$ is Gaussian, centered, and continuous on the compact index set $\mathcal{T}$. Its expected supremum $M$ is finite. Its variance is uniformly bounded by $\sigma_*^2<+\infty$ because the fourth derivatives of $k$ are bounded on $\X\times\X$.

Applying the Borell–TIS inequality to $U:=\sup_{(\vect{x},\vect{u},\vect{v})\in\mathcal{T}}|Z(\vect{x},\vect{u},\vect{v})|$ yields
\[
\mathbb{P}\!\left(U\ge M+t\right)\le 2\exp\!\left(-\frac{t^2}{2\sigma_*^2}\right).
\]
Choosing $t=\sigma_*\sqrt{2\log(2/\delta)}$ gives~\cref{eq:bounded-hessain}.

Finally, by the mean-value inequality for vector-valued functions,
\[
\|\nabla g(\vect{x})-\nabla g(\vect{y})\|\le L\|\vect{x}-\vect{y}\|\quad\text{for all }\vect{x},\vect{y}\in\X,
\]
so $\nabla g$ is $L$-Lipschitz continuous. This completes the proof.
\end{proof}
\begin{corollary}
\label{cor:union-smoothness}
Suppose~\cref{assumption:Gaussian-Process-Prior,assumption:Compact-and-convex-Domain} hold. Then for any $\delta\in(0,1)$, there exist a constant $L_{\nabla f} > 0$ and a base constant $\tilde{L}_{\nabla c} > 0$ (weakly dependent on $\log m$) such that, defining $L_{\nabla c,m} := \sqrt{m}\tilde{L}_{\nabla c}$, with probability at least $1-\delta/6$, $f$ and $c$ are $L_{\nabla f}$- and $L_{\nabla c,m}$-smooth on $\X$, respectively. That is, for all $\vect{x},\vect{y}\in\X$,
\begin{align}
\label{eq:smoothness-GP}
\|\nabla f(\vect{x})-\nabla f(\vect y)\|&\le L_{\nabla f}\|\vect x-\vect y\|,\\
 \|\nabla c(\vect x)-\nabla c(\vect y)\|&\le L_{\nabla c,m}\|\vect x-\vect y\|.
\end{align}
\end{corollary}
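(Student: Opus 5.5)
The plan is to apply \cref{lemma:smoothness-of-GP} separately to each of the $m+1$ independent GPs $f,c_1,\dots,c_m$, with a union bound over the failure events. By \cref{assumption:Gaussian-Process-Prior} every kernel is stationary, bounded and $\mathcal{C}^4$. By \cref{assumption:Compact-and-convex-Domain} the domain $\X$ is compact and convex. So the hypotheses of \cref{lemma:smoothness-of-GP} hold for each function.

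For $f$, invoke the lemma with confidence level $\delta/12$. This gives, with probability at least $1-\delta/12$,
\begin{equation*}
L_{\nabla f}:=M_f+\sigma_{*,f}\sqrt{2\log(24/\delta)}.
\end{equation*}
For each $c_i$, invoke the lemma with level $\delta/(12m)$. With probability at least $1-\delta/(12m)$, $\nabla c_i$ is $L_i$-Lipschitz, where
\begin{equation*}
L_i:=M_{c_i}+\sigma_{*,c_i}\sqrt{2\log(24m/\delta)}.
\end{equation*}
Set $\tilde L_{\nabla c}:=\max_i L_i$. Since $M_{c_i}$ and $\sigma_{*,c_i}$ depend only on the kernel of $c_i$ and on $\X$, this constant grows only like $\sqrt{\log m}$, which is the stated weak dependence on $\log m$. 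A union bound over the $m+1$ events gives total failure probability at most $\delta/12+m\cdot\delta/(12m)=\delta/6$.

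It remains to bound the Lipschitz constant of the Jacobian $\nabla c$. The spectral norm is at most the Frobenius norm, and the $i$-th row of $\nabla c(\vect x)-\nabla c(\vect y)$ is $\nabla c_i(\vect x)-\nabla c_i(\vect y)$. Hence
\begin{equation*}
\|\nabla c(\vect x)-\nabla c(\vect y)\|^2\le \sum_{i=1}^m\|\nabla c_i(\vect x)-\nabla c_i(\vect y)\|^2\le m\,\tilde L_{\nabla c}^2\|\vect x-\vect y\|^2 .
\end{equation*}
This gives $L_{\nabla c,m}=\sqrt m\,\tilde L_{\nabla c}$, as claimed.

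The argument is mostly bookkeeping. The one point that needs care is that $M_{c_i}$, the expected supremum of the Hessian process, is finite. \cref{lemma:smoothness-of-GP} already asserts this from continuity on a compact index set, for instance via Dudley's entropy bound. I would also remark that when the constraints share a kernel, the $M_{c_i}$ and $\sigma_{*,c_i}$ coincide, so $\tilde L_{\nabla c}$ is uniform in $i$ up to the logarithmic factor.
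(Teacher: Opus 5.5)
Your proposal is correct and follows the paper's proof essentially step by step: the same split of the failure budget ($\delta/12$ for $f$, $\delta/(12m)$ for each $c_i$) in \cref{lemma:smoothness-of-GP}, a union bound totalling $\delta/6$, and the Frobenius-norm bound on the Jacobian difference, which gives $L_{\nabla c,m}=\sqrt{m}\,\tilde L_{\nabla c}$. The only cosmetic difference is that you set $\tilde L_{\nabla c}=\max_i L_i$, whereas the paper combines $\max_i M_i$ and $\max_i \sigma_{*,i}$ separately; your constant is no larger than the paper's.
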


\begin{proof} 
The proof proceeds by bounding the spectral norm of the Hessians of the Gaussian processes using the Borell-TIS inequality.

For $f$, choose a failure budget $\delta_f:=\delta/12$. Let $M_f$ and $\sigma_{*,f}$ be the mean-envelope and variance-envelope constants of the Hessian process of $f$ (as in~\cref{lemma:smoothness-of-GP}). We obtain
\[ \sup_{\vect x\in\X}\|\nabla^2 f(\vect x)\|\le L_{\nabla f}:=M_f+\sigma_{*,f}\sqrt{2\log\!\frac{2}{\delta_f}} \]
with probability at least $1-\delta_f$.

For the vector-valued function $c$, we analyze its components $c_i$ ($i=1,\dots,m$). We allocate a failure budget $\delta_i:=\delta/(12m)$ to each component. Let $M_{c} := \max_i M_i$ and $\sigma_{*,c} := \max_i \sigma_{*,i}$ be the maximum envelope constants across all components. With probability at least $1 - m \cdot \delta_i = 1 - \delta/12$, all components satisfy:
\[ \sup_{\vect x\in\X}\|\nabla^2 c_i(\vect x)\|\le \tilde{L}_{\nabla c} := M_{c}+\sigma_{*,c}\sqrt{2\log\!\frac{24m}{\delta}}, \quad \forall i=1,\dots,m. \]
Thus, each $c_i$ is $\tilde{L}_{\nabla c}$-smooth. To obtain the Lipschitz constant for the Jacobian $\nabla c$, we bound its spectral norm by the Frobenius norm. Note that $\nabla c(\vect x) - \nabla c(\vect y)$ is a matrix whose rows are $(\nabla c_i(\vect x) - \nabla c_i(\vect y))^\top$.
\begin{align*}
\|\nabla c(\vect x)-\nabla c(\vect y)\| 
&\le \|\nabla c(\vect x)-\nabla c(\vect y)\|_F \\
&= \sqrt{\sum_{i=1}^m \|\nabla c_i(\vect x) - \nabla c_i(\vect y)\|^2} \\
&\le \sqrt{\sum_{i=1}^m \tilde{L}_{\nabla c}^2 \|\vect x - \vect y\|^2} \\
&= \sqrt{m}\tilde{L}_{\nabla c} \|\vect x - \vect y\|.
\end{align*}
We define $L_{\nabla c,m} := \sqrt{m}\tilde{L}_{\nabla c}$. Combining the failure probabilities for $f$ and $c$ ($ \delta/12 + \delta/12 = \delta/6$), the joint smoothness holds with probability at least $1-\delta/6$.
\end{proof}
\begin{corollary}
\label{corollary:boundedness}
Under the same condition as~\cref{cor:union-smoothness}, for any $\delta\in(0,1)$, there exist constants $B_f, B_{\nabla f} > 0$ and base constants $\tilde{B}_c, \tilde{B}_{\nabla c} > 0$ (which scale logarithmically with $m$) such that, defining $B_{c,m} := \sqrt{m}\tilde{B}_c$ and $B_{\nabla c, m} := \sqrt{m}\tilde{B}_{\nabla c}$, with probability at least $1-\delta/6$, the following bounds hold:
\begin{align}
\label{eq:boundedness-GP}
    \sup_{\vect x\in\X}|f(\vect x)|\le B_f,&\qquad \sup_{\vect x\in\X}\|c(\vect x)\|\le B_{c,m}.\\
    \sup_{\vect x\in\X}\|\nabla f(\vect x)\|\le B_{\nabla f},&\qquad \sup_{\vect x\in\X}\|\nabla c(\vect x)\|\le B_{\nabla c, m}
\end{align}
\end{corollary}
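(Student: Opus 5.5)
The plan mirrors the proof of \cref{cor:union-smoothness}, with the Hessian process of \cref{lemma:smoothness-of-GP} replaced by two simpler processes: the function values and the directional derivatives. For a scalar GP $g$ on $\X$ with kernel satisfying \cref{assumption:Gaussian-Process-Prior}, we consider the centered Gaussian processes
\[
Z_0(\vect x):=g(\vect x)\ \text{on } \X,\qquad Z_1(\vect x,\vect u):=\vect u^\top\nabla g(\vect x)\ \text{on } \X\times\mathbb{S}^{d-1}.
\]
Since $k\in\mathcal C^4$, the same result of \cite{adler2007random} used above gives a version of $g$ with continuous sample paths and continuous gradients. Both index sets are compact, so the expected suprema $M^{(0)}:=\E\sup|Z_0|$ and $M^{(1)}:=\E\sup|Z_1|$ are finite.

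The suprema of the variances are also finite. By stationarity, $\sigma_{0}^2=k(\zero,\zero)$, and $\sigma_1^2=\sup_{\vect u}\vect u^\top\nabla k(\zero,\zero)\nabla^\top\vect u$ is the largest eigenvalue of the gradient covariance. We then note that $\sup_{\vect x}\|\nabla g(\vect x)\|=\sup_{(\vect x,\vect u)}|Z_1|$. Applying the Borell--TIS inequality exactly as in \cref{lemma:smoothness-of-GP}, for each failure budget $\delta'$ we get, with probability at least $1-\delta'$,
\[
\sup_{\vect x}|g(\vect x)|\le M^{(0)}+\sigma_0\sqrt{2\log(2/\delta')},
\]
and similarly for $\sup_{\vect x}\|\nabla g(\vect x)\|$ with $M^{(1)},\sigma_1$.

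The failure probability is allocated as in \cref{cor:union-smoothness}. For $f$, we use budget $\delta/24$ for the value bound and $\delta/24$ for the gradient bound; this defines $B_f$ and $B_{\nabla f}$. For the constraints, each $c_i$ receives budget $\delta/(24m)$ for its value bound and $\delta/(24m)$ for its gradient bound. We set $\tilde B_c:=\max_i M_i^{(0)}+\max_i\sigma_{0,i}\sqrt{2\log(48m/\delta)}$ and define $\tilde B_{\nabla c}$ analogously. These constants depend on $m$ only through $\sqrt{\log m}$, which gives the claimed logarithmic scaling. A union bound then gives a total failure probability of at most $4\cdot\delta/24=\delta/6$.

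Finally, we pass to the vector and matrix bounds. First,
\[
\|c(\vect x)\|=\Bigl(\sum_i c_i(\vect x)^2\Bigr)^{1/2}\le\sqrt m\,\tilde B_c.
\]
Second, bounding the spectral norm by the Frobenius norm,
\[
\|\nabla c(\vect x)\|\le\|\nabla c(\vect x)\|_F=\Bigl(\sum_i\|\nabla c_i(\vect x)\|^2\Bigr)^{1/2}\le\sqrt m\,\tilde B_{\nabla c}.
\]
This gives $B_{c,m}$ and $B_{\nabla c,m}$.

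The only delicate step is justifying that $M^{(0)}$ and $M^{(1)}$ are finite. This needs an entropy (Dudley) bound, or simply continuity of the sample paths on a compact set together with the fact that a Gaussian process that is almost surely bounded has finite expected supremum (Fernique/Landau--Shepp). The $\mathcal C^4$ kernel gives Lipschitz canonical metrics for $Z_0$ and $Z_1$, so Dudley's integral converges. I would cite this the same way the proof of \cref{lemma:smoothness-of-GP} does rather than reprove it.
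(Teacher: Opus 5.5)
Your proposal is correct and follows essentially the same route as the paper. Both apply Borell--TIS to the value process and to the scalarized gradient process $\vect u^\top\nabla g(\vect x)$ on $\X\times\mathbb{S}^{d-1}$, with failure budgets $\delta/24$ for $f$ and $\nabla f$ and $\delta/(24m)$ per component of $c$ and $\nabla c$, then use the Frobenius-norm bound to get the $\sqrt m$ factors and a union bound totaling $\delta/6$. Your extra justification that the expected suprema are finite (via Dudley or Fernique) fills in a step the paper only asserts.
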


\begin{proof}
The proof follows the same strategy as~\cref{lemma:smoothness-of-GP} by applying the Borell-TIS inequality. We allocate the total failure budget $\delta/6$ across the four terms.

Bounding $f$ and $c$:

Consider the centered GP $f$ on the compact set $\X$. Let $M_{f,0} := \mathbb{E}[\sup_{\vect{x}\in\X}|f(\vect{x})|]$ and $\sigma_{f,0}^2 := \sup_{\vect{x}\in\X} \mathrm{Var}(f(\vect{x}))$. By the Borell-TIS inequality, with probability at least $1-\delta/24$,
\[
\sup_{\vect{x}\in\X} |f(\vect{x})| \le B_f := M_{f,0} + \sigma_{f,0}\sqrt{2\log\frac{48}{\delta}}.
\]
For the vector-valued function $c(\vect x) = [c_1(\vect x), \dots, c_m(\vect x)]^\top$, we apply the bound component-wise. We allocate a failure probability of $\delta/(24m)$ to each component $c_i$. Let $M_{c,0} = \max_i \mathbb{E}[\sup_{\vect{x}}|c_i(\vect{x})|]$ and $\sigma_{c,0}^2 = \max_i \sup_{\vect{x}} \mathrm{Var}(c_i(\vect{x}))$. With probability at least $1 - m \cdot \frac{\delta}{24m} = 1 - \delta/24$, all components satisfy:
\[
\sup_{\vect{x}\in\X} |c_i(\vect{x})| \le \tilde{B}_c := M_{c,0} + \sigma_{c,0}\sqrt{2\log\frac{48m}{\delta}}, \quad \forall i=1,\dots,m.
\]
Consequently, the Euclidean norm of $c(\vect x)$ is bounded by:
\[
\sup_{\vect{x}\in\X} \|c(\vect x)\| = \sup_{\vect{x}\in\X} \sqrt{\sum_{i=1}^m |c_i(\vect x)|^2} \le \sqrt{\sum_{i=1}^m \tilde{B}_c^2} = \sqrt{m}\tilde{B}_c =: B_{c,m}.
\]

Bounding $\nabla f$ and $\nabla c$:

To bound the gradient norm, we define the scalarized process on $\mathcal{T}_1 := \X \times \mathbb{S}^{d-1}$ as $Z_{\nabla f}(\vect{x}, \vect{u}) := \vect{u}^\top \nabla f(\vect{x})$. Using parameters $M_{f,1}$ and $\sigma_{f,1}^2$ for this process, Borell-TIS yields that with probability at least $1-\delta/24$:
\[
\sup_{\vect{x}\in\X} \|\nabla f(\vect{x})\| \le B_{\nabla f} := M_{f,1} + \sigma_{f,1}\sqrt{2\log\frac{48}{\delta}}.
\]
For the Jacobian $\nabla c(\vect x)$, we bound the spectral norm via the Frobenius norm: $\|\nabla c(\vect x)\| \le \|\nabla c(\vect x)\|_F = \sqrt{\sum_{i=1}^m \|\nabla c_i(\vect x)\|^2}$. 
Similar to the case for $c$, we apply Borell-TIS to each gradient component $\nabla c_i$ (considered as a vector field) with failure budget $\delta/(24m)$. Let $\tilde{B}_{\nabla c}$ be the uniform upper bound for $\sup_{\vect{x}}\|\nabla c_i(\vect{x})\|$. With probability at least $1-\delta/24$, we have $\sup_{\vect{x}}\|\nabla c_i(\vect{x})\| \le \tilde{B}_{\nabla c}$ for all $i$, which implies:
\[
\sup_{\vect{x}\in\X} \|\nabla c(\vect x)\| \le \sqrt{\sum_{i=1}^m \tilde{B}_{\nabla c}^2} = \sqrt{m}\tilde{B}_{\nabla c} =: B_{\nabla c, m}.
\]
Summing the failure probabilities, the bounds hold simultaneously with probability at least $1-\delta/6$.
\end{proof}

\begin{lemma}
\label{lemma:gaussian-tail-union}
Fix $0<\delta<1$ and define $\beta'_k = 2\log\!\bigl(\pi^2 k^2/(3\delta)\bigr)$.
Let $(\vect u_k)_{k\ge 1}$ be random variables such that for each $k$,
$\vect u_k\mid\mathcal D_k \sim \mathcal N(0,\Sigma_k)$.
Then with probability at least $1-\delta$,
for all $k\ge 1$ simultaneously,
\begin{equation}
\|\vect u_k\|^2 \le \beta'_k \tr(\Sigma_k).
\end{equation}
\end{lemma}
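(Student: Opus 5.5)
The plan is to prove a per-step tail bound with confidence level $\delta_k := 6\delta/(\pi^2 k^2)$, and then take a union bound over $k$. Since $\sum_{k\ge1} 1/k^2 = \pi^2/6$, we get $\sum_k \delta_k = \delta$. The choice of $\delta_k$ gives $2/\delta_k = \pi^2k^2/(3\delta)$, so $\beta'_k = 2\log(2/\delta_k)$ exactly. It therefore suffices to show, for each fixed $k$ and conditionally on $\D_k$,
\begin{equation*}
\Prob\bigl(\|\vect u_k\|^2 > 2\log(2/\delta_k)\,\tr(\Sigma_k)\,\big|\,\D_k\bigr) \le \delta_k .
\end{equation*}
Taking expectation over $\D_k$ removes the conditioning, and the union bound then gives the simultaneous statement with probability at least $1-\delta$.

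For the fixed-$k$ bound, I would diagonalize $\Sigma_k = \sum_i \lambda_i \vect v_i\vect v_i^\top$ with $\lambda_i\ge 0$. This gives $\|\vect u_k\|^2 \overset{d}{=} \sum_i \lambda_i g_i^2$ with $g_i$ i.i.d.\ standard normal. If $\tr(\Sigma_k)=0$ the claim is trivial. Otherwise, normalize $w_i := \lambda_i/\tr(\Sigma_k)$, so that $w_i\ge0$ and $\sum_i w_i = 1$. The target becomes $\Prob(\sum_i w_i g_i^2 > t) \le \delta_k$ with $t := 2\log(2/\delta_k)$.

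\textbf{Main obstacle.} The difficulty is getting the sharp constant $2$ uniformly over all weight vectors $w$. A Chernoff/MGF argument or the Laurent--Massart inequality loses a constant factor, which was the flaw of a cruder approach. Instead I would use the Székely--Bakirov extremal inequality for Gaussian quadratic forms. It states that for any convex weights $w$ and any threshold $x \ge x_0 \approx 1.5365$,
\begin{equation*}
\Prob\Bigl(\sum_i w_i g_i^2 \ge x\Bigr) \le \Prob(g_1^2 \ge x).
\end{equation*}
In words, above this threshold the rank-one case is the worst case. The threshold condition holds here because $\delta<1$ and $k\ge1$ give $2/\delta_k = \pi^2k^2/(3\delta) > \pi^2/3$. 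Hence $t > 2\log(\pi^2/3) \approx 2.38 > x_0$. Should citing this result be undesirable, I would fall back on proving the rank-one reduction directly via a Schur-concavity argument on $w$ in the relevant tail regime. I expect this to be the delicate part.

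It then remains to bound the one-dimensional tail. Mills' ratio gives
\begin{equation*}
\Prob(|g_1|\ge x) \le \sqrt{2/\pi}\,x^{-1}e^{-x^2/2}.
\end{equation*}
With $x=\sqrt t \ge \sqrt{2.38} > \sqrt{2/\pi}$, this yields $\Prob(g_1^2\ge t)\le e^{-t/2} = \delta_k/2 \le \delta_k$. This completes the per-step bound, and the union bound over $k$ finishes the proof.
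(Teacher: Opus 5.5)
Your proof is correct, and its skeleton is the paper's: fix $k$, bound the conditional failure probability by $\delta_k = 6\delta/(\pi^2k^2)$ (equivalently $\beta'_k = 2\log(2/\delta_k)$), average over $\D_k$, and take a union bound using $\sum_k k^{-2} = \pi^2/6$.

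The two proofs differ only in how the per-step tail bound is justified.

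The paper invokes a ready-made quadratic-form bound for general covariances, $\Prob(\|\vect u_k\|^2 \ge \beta\,\tr(\Sigma_k)\mid\D_k) \le 2e^{-\beta/2}$ (Lemma 10 of \cite{LBO-behavior}), and substitutes $\beta=\beta'_k$.

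You instead reduce to the rank-one case via the Sz\'ekely--Bakirov extremal inequality. You correctly check that $\beta'_k > 2\log(\pi^2/3) \approx 2.38$ lies above its threshold $\approx 1.5365$. You then finish with Mills' ratio, which gives the slightly stronger per-step bound $e^{-\beta'_k/2} = \delta_k/2$. For that last step, the cruder $\Prob(|g|\ge x)\le 2e^{-x^2/2}$ would already suffice.

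What your route buys is transparency. It shows why the constant $2$ in $\beta'_k$ is attainable with no $\beta$- or rank-dependent loss. A plain MGF or Laurent--Massart argument applied directly to a general quadratic form would not give this. In effect, your argument verifies the black box the paper cites.

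What it costs is reliance on a deep and less standard extremal theorem that is valid only above its threshold. That restriction is harmless here, but the paper's one-line citation needs no regime check. Your Schur-concavity fallback would also be substantial work if you had to carry it out instead of citing the theorem.
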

\begin{proof}
Fix $k\ge 1$. Conditional on $\mathcal D_k$, we have
$\vect u_k = \Sigma_k^{1/2} g$ with $g\sim \mathcal N(0,I)$.
By a standard Gaussian quadratic-form tail bound (lemma 10 in~\cite{LBO-behavior} applied conditional on $\mathcal D_k$),
\[
\Prob\left(\|\vect u_k\|^2 \ge \beta'_k\,\tr(\Sigma_k)\,\middle|\,\mathcal D_k\right)
\le 2\exp\bigl(-\frac{\beta'_k}{2}\bigr).
\]
With the above choice of $\beta'_k$, the right-hand side equals
$ 6\delta/(\pi^2 k^2)$.
Taking a union bound over all $k\ge 1$ and using
$\sum_{k=1}^\infty k^{-2}=\pi^2/6$ yield
\[
\Prob\left(\exists k\ge 1:\ \|\vect u_k\|^2 \ge \beta_k\,\tr(\Sigma_k)\right)
\le \sum_{k=1}^\infty \frac{6\delta}{\pi^2 k^2}=\delta,
\]
\end{proof}
\begin{corollary}
\label{cor:union-bound-posterior-mean}
Suppose~\cref{assumption:Gaussian-Process-Prior,assumption:Compact-and-convex-Domain} hold. On the event that $\sup_{\vect x\in\X}\|c(\vect x)\|\le B_{c,m}<+\infty$, for any $\delta\in(0,1)$ and $k\ge 1$, with probability at least $1-\delta/3$,
\begin{equation}
\label{eq:posterior-mean-bound}
\sup_{\vect{x}\in\X}\|\mu_{\D_k}^{(c)}(\vect{x})\| \le \tilde B_{k,m},
\end{equation}
where $\tilde B_{k,m}=B_{c,m}+\sqrt{2m\kappa\log\!\big(\pi^2 k^2 /\delta\big)}$ and $\kappa=\max_i \sup_{\vect x\in\X}k^{(c_i)}(\vect x,\vect{x})<+\infty, i=1,2,\cdots,m$.
\end{corollary}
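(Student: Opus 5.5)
The plan is to bound $\|\mu_{\D_k}^{(c)}\|$ at a single, cleverly chosen point: the maximizer of $\|\mu_{\D_k}^{(c)}(\cdot)\|$ over $\X$. Because that point depends only on the data $\D_k$, the posterior of $c$ there is still Gaussian, so \cref{lemma:gaussian-tail-union} applies directly and no chaining or discretization argument is needed.

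\textbf{Step 1 (a $\D_k$-measurable maximizer).} For fixed $k$, the map $\vect x\mapsto \mu_{\D_k}^{(c)}(\vect x)=k(\vect x,\matr X)\mathbf K^{-1}\mathbf y$ is continuous on the compact set $\X$. This follows because the kernel is continuous by \cref{assumption:Gaussian-Process-Prior}, and $\X$ is compact by \cref{assumption:Compact-and-convex-Domain}. Hence the supremum of $\|\mu_{\D_k}^{(c)}(\vect x)\|$ is attained. Let $\vect x_k^\star$ be a maximizer, chosen by a measurable selection (for instance, the lexicographically smallest maximizer). Then $\vect x_k^\star$ is a measurable function of $\D_k$ alone, and
\[
\sup_{\vect x\in\X}\|\mu_{\D_k}^{(c)}(\vect x)\|=\|\mu_{\D_k}^{(c)}(\vect x_k^\star)\|.
\]

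\textbf{Step 2 (triangle inequality).} Define $\vect u_k:=c(\vect x_k^\star)-\mu_{\D_k}^{(c)}(\vect x_k^\star)\in\R^m$. Then
\[
\|\mu_{\D_k}^{(c)}(\vect x_k^\star)\|\le \|c(\vect x_k^\star)\|+\|\vect u_k\|.
\]
On the event $\sup_{\vect x}\|c(\vect x)\|\le B_{c,m}$, the first term is at most $B_{c,m}$.

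\textbf{Step 3 (conditional Gaussianity of $\vect u_k$).} Conditional on $\D_k$, the point $\vect x_k^\star$ is fixed. The $m$ constraint GPs are independent, so $\vect u_k\mid\D_k\sim\mathcal N(0,\Sigma_k)$ with
\[
\Sigma_k=\mathrm{diag}\bigl(k^{(c_i)}_{\D_k}(\vect x_k^\star,\vect x_k^\star)\bigr)_{i=1}^m.
\]
Posterior variance never exceeds prior variance, so
\[
\tr(\Sigma_k)\le\sum_{i=1}^m k^{(c_i)}(\vect x_k^\star,\vect x_k^\star)\le m\kappa.
\]

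\textbf{Step 4 (apply \cref{lemma:gaussian-tail-union}).} Apply the lemma with failure budget $\delta/3$ in place of $\delta$. This gives
\[
\beta'_k=2\log\bigl(\pi^2k^2/(3\cdot\delta/3)\bigr)=2\log(\pi^2k^2/\delta).
\]
So, with probability at least $1-\delta/3$ (simultaneously for all $k$, hence in particular for the fixed $k$),
\[
\|\vect u_k\|^2\le 2\log(\pi^2k^2/\delta)\,\tr(\Sigma_k)\le 2m\kappa\log(\pi^2k^2/\delta).
\]
Combining this with Step 2 yields exactly
\[
\sup_{\vect x\in\X}\|\mu_{\D_k}^{(c)}(\vect x)\|\le B_{c,m}+\sqrt{2m\kappa\log(\pi^2k^2/\delta)}=\tilde B_{k,m}.
\]

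\textbf{Main obstacle.} The delicate step is Step 3. The point $\vect x_k^\star$ is random, yet we treat $\vect u_k\mid\D_k$ as Gaussian. This is legitimate only because $\vect x_k^\star$ is $\D_k$-measurable: conditioning on $\D_k$ freezes it, and the GP posterior at any deterministic point is Gaussian with the stated covariance. I would therefore spend care on justifying the measurable choice of the argmax, using continuity of $\mu_{\D_k}^{(c)}$ in $(\vect x,\D_k)$ together with a standard measurable selection theorem. I would also check that \cref{lemma:gaussian-tail-union} only requires the conditional law of $\vect u_k$ given $\D_k$, which is the case.
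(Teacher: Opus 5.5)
Your proposal is correct and follows the paper's proof essentially step for step. Both take a $\D_k$-measurable maximizer $\vect x_k^\star$ of $\|\mu_{\D_k}^{(c)}(\cdot)\|$, apply the triangle inequality, use that $c(\vect x_k^\star)-\mu_{\D_k}^{(c)}(\vect x_k^\star)\mid\D_k$ is centered Gaussian with covariance trace at most $m\kappa$, and invoke \cref{lemma:gaussian-tail-union} with $\delta/3$, giving $\beta'_k=2\log(\pi^2k^2/\delta)$. Your explicit measurable-selection argument and the worked-out substitution just make precise what the paper states in one line.
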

\begin{proof}
For a fixed $k\ge 1$, let $\vect{x}_k^\star\in\arg\max_{\vect{x}\in\X}\|\mu_{\D_k}^{(c)}(\vect{x})\|$. Conditioned on the given $\D_k$, the point $\vect{x}_k^\star$ is deterministic.
By the triangle inequality,
\begin{equation}
\label{eq:triangle-mu-c}
\|\mu_{\D_k}^{(c)}(\vect{x}_k^\star)\|
\le \|\vect{c}(\vect{x}_k^\star)\| + \|\mu_{\D_k}^{(c)}(\vect{x}_k^\star)-\vect{c}(\vect{x}_k^\star)\|
\le \sup_{\vect{x}\in\X}\|\vect{c}(\vect{x})\| + \|\mu_{\D_k}^{(c)}(\vect{x}_k^\star)-\vect{c}(\vect{x}_k^\star)\|.
\end{equation}
Specifically, the difference $\mu_{\D_k}^{(c)}(\vect{x}_k^\star)-\vect{c}(\vect{x}_k^\star)$ follows a $\mathcal{N}(\vect 0, \Sigma_k)$ distribution, satisfying the condition that $\tr( \Sigma_k)\le m\kappa$. Applying~\cref{lemma:gaussian-tail-union}, in conjunction with~\cref{eq:triangle-mu-c}, we establish that with probability at least $1-\delta/3$, the following holds for any $k\ge 1$:
\begin{align*}
\sup_{\vect{x}\in \X}\|\mu_{\D_k}^{(c)}(\vect{x})\|&\le \sup_{\vect{x}\in\X}\|\vect{c}(\vect{x})\| + \|\mu_{\D_k}^{(c)}(\vect{x}_k^\star)-\vect{c}(\vect{x}_k^\star)\| \\ &\le B_{c,m}+t_k.
\end{align*}
where $t_{k,m}=\sqrt{2m\kappa\log(\pi^2k^2/\delta)}$. Setting $\tilde B_{k,m}=B_{c,m}+t_{k,m}$ concludes the proof.
\end{proof}
\begin{corollary}
\label{cor:posterior-error-bounds}
For any $0<\delta<1$, let $\beta_k = 2\log\bigl(2(m+1)\pi^2 k^2/\delta\bigr)$.
Then, for any iteration $k\ge 1$, the following bounds hold:
\begin{align}
\bigl| f(\vect{x}_k) - \mu^{(f)}_{\D_k}(\vect{x}_k)\bigr|^2
&\le \beta_k\, k_{\D_k}(\vect{x}_k,\vect{x}_k),
\qquad \text{with prob.\ at least } 1-\delta/(6m+6),
\label{eq:cor-bound-obj}\\[2mm]
\bigl\|\nabla f(\vect{x}_k) - \nabla \mu^{(f)}_{\D_k}(\vect{x}_k)\bigr\|^2
&\le \beta_k \tr\bigl(\nabla k_{\D_k}(\vect{x}_k,\vect{x}_k)\nabla^\top\bigr),
\qquad \text{with prob.\ at least } 1-\delta/(6m+6),
\label{eq:cor-bound-grad-f}\\[2mm]
\bigl\| c(\vect{x}_k) - \mu^{(c)}_{\D_k}(\vect{x}_k)\bigr\|^2
&\le m\,\beta_k\, k_{\D_k}(\vect{x}_k,\vect{x}_k),
\qquad \text{with prob.\ at least } 1-m\delta/(6m+6),
\label{eq:cor-bound-constr}\\[2mm]
\bigl\|\nabla c(\vect{x}_k) - \nabla \mu^{(c)}_{\D_k}(\vect{x}_k)\bigr\|^2
&\le m\,\beta_k \tr\bigl(\nabla k_{\D_k}(\vect{x}_k,\vect{x}_k)\nabla^\top\bigr),
\qquad \text{with prob.\ at least } 1-m\delta/(6m+6).
\label{eq:cor-bound-grad-c}
\end{align}
\end{corollary}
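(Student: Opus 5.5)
The plan is to reduce all four bounds to \cref{lemma:gaussian-tail-union}, applied separately to each of the $m+1$ Gaussian processes, with the confidence budget rescaled so that the per-quantity failure probabilities come out as stated.

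\textbf{Posterior error distributions.} First I would record the conditional laws of the errors. Under \cref{assumption:Gaussian-Process-Prior}, conditional on $\D_k$, the point $\vect x_k$ is $\D_k$-measurable: it is built from $\D_{k-1}$, and $\D_k \supseteq \D_{k-1}$. So, conditional on $\D_k$, for each $i\in\{f,c_1,\dots,c_m\}$ we have
\[
g_i(\vect x_k)-\mu^{(i)}_{\D_k}(\vect x_k)\sim\mathcal N\bigl(0,k^{(i)}_{\D_k}(\vect x_k,\vect x_k)\bigr).
\]
By \cref{eq:grad_posterior}, the corresponding gradient error is $\mathcal N\bigl(\zero,\nabla k^{(i)}_{\D_k}(\vect x_k,\vect x_k)\nabla^\top\bigr)$.

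\textbf{Applying the tail lemma.} I would apply \cref{lemma:gaussian-tail-union} to each of these sequences with confidence parameter $\delta'=\delta/(6m+6)$. The lemma's threshold then becomes
\[
\beta'_k=2\log\bigl(\pi^2k^2(6m+6)/(3\delta)\bigr)=2\log\bigl(2(m+1)\pi^2k^2/\delta\bigr)=\beta_k,
\]
which is exactly the $\beta_k$ in the statement. This gives, for each scalar function $g_i$ and simultaneously for all $k$,
\[
|g_i(\vect x_k)-\mu^{(i)}_{\D_k}(\vect x_k)|^2\le\beta_k k^{(i)}_{\D_k}(\vect x_k,\vect x_k)
\]
with probability at least $1-\delta/(6m+6)$. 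Similarly,
\[
\|\nabla g_i(\vect x_k)-\nabla\mu^{(i)}_{\D_k}(\vect x_k)\|^2\le\beta_k\tr\bigl(\nabla k^{(i)}_{\D_k}(\vect x_k,\vect x_k)\nabla^\top\bigr).
\]
Bounding the variance and trace by the maxima over $i$, which is the paper's convention for $k_{\D_k}$ and $\tr(\nabla k_{\D_k}\nabla^\top)$, yields \cref{eq:cor-bound-obj} and \cref{eq:cor-bound-grad-f} directly.

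\textbf{Vector-valued constraints.} For the constraints I would take a union bound over $i=1,\dots,m$, so each bound fails with probability at most $m\delta/(6m+6)$. On the intersection of these events, summing the squared componentwise errors gives
\[
\|c(\vect x_k)-\mu^{(c)}_{\D_k}(\vect x_k)\|^2=\sum_i|c_i-\mu^{(c_i)}|^2\le m\beta_k k_{\D_k}(\vect x_k,\vect x_k).
\]
For the Jacobian I would bound the spectral norm by the Frobenius norm:
\[
\|\nabla c-\nabla\mu^{(c)}\|^2\le\sum_i\|\nabla c_i-\nabla\mu^{(c_i)}\|^2\le m\beta_k\tr(\cdot).
\]
This gives \cref{eq:cor-bound-constr} and \cref{eq:cor-bound-grad-c}. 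If all four events are needed jointly, their failure probabilities sum to $(2+2m)\delta/(6m+6)=\delta/3$, which matches the budget used elsewhere.

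\textbf{Main subtlety.} The delicate step is justifying the conditional Gaussianity given $\D_k$ when $\vect x_k$ and the batch $\bZ$ are chosen adaptively. I would argue that the design is a deterministic function of past observations, and the noise is independent. The GP posterior formula therefore remains valid conditional on the realized design: this is the standard sequential Bayesian update, where the likelihood of a design chosen from past data does not depend on $g$. So \cref{lemma:gaussian-tail-union}, which only requires the conditional law given $\D_k$, applies.
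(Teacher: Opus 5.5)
Your proposal is correct and follows the paper's proof essentially step for step. You apply \cref{lemma:gaussian-tail-union} to each of the $m+1$ processes at confidence $\delta/(6m+6)$, which reproduces $\beta_k$; you union-bound over the $m$ constraint components; and you pass from the spectral to the Frobenius norm for the Jacobian, while your explicit $\beta_k$ calibration and the remark on conditional Gaussianity under the adaptive choice of $\vect{x}_k$ and $\bZ$ fill in details the paper leaves implicit.
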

\begin{proof}
The first and second inequalities (\cref{eq:cor-bound-obj} and~\cref{eq:cor-bound-grad-f}) follow by applying
\cref{lemma:gaussian-tail-union}.

For the constraints, applying the same bound to each component
$c_i(\vect{x}_k)-\mu^{(c_i)}_{\D_k}(\vect{x}_k)$ and
$\nabla c_i(\vect{x}_k)-\nabla \mu^{(c_i)}_{\D_k}(\vect{x}_k)$ yields, for $i\in\{1,\ldots,m\}$,
\[
\bigl| c_i(\vect{x}_k)-\mu^{(c_i)}_{\D_k}(\vect{x}_k)\bigr|^2
\le \beta_k\, k_{\D_k}(\vect{x}_k,\vect{x}_k),\quad
\bigl\|\nabla c_i(\vect{x}_k)-\nabla \mu^{(c_i)}_{\D_k}(\vect{x}_k)\bigr\|^2
\le \beta_k\, \tr\!\bigl(\nabla k_{\D_k}(\vect{x}_k,\vect{x}_k)\nabla^\top\bigr),
\]
each with probability at least $1-\delta/(6m+6)$.
A union bound over $i=1,\ldots,m$ gives probability at least $1-m\delta/(6m+6)$ that all component-wise
bounds hold simultaneously.
On this event, we have
\[
\bigl\| c(\vect{x}_k)-\mu^{(c)}_{\D_k}(\vect{x}_k)\bigr\|^2
= \sum_{i=1}^m \bigl| c_i(\vect{x}_k)-\mu^{(c_i)}_{\D_k}(\vect{x}_k)\bigr|^2
\le m\,\beta_k\, k_{\D_k}(\vect{x}_k,\vect{x}_k),
\]
which proves~\cref{eq:cor-bound-constr}. Moreover, letting
$A := \nabla c(\vect{x}_k)-\nabla \mu^{(c)}_{\D_k}(\vect{x}_k)\in\mathbb R^{m\times d}$,
we first bound its Frobenius norm:
\[
\|A\|_F^2
= \sum_{i=1}^m \bigl\|\nabla c_i(\vect{x}_k)-\nabla \mu^{(c_i)}_{\D_k}(\vect{x}_k)\bigr\|^2
\le m\,\beta_k\,\tr\!\bigl(\nabla k_{\D_k}(\vect{x}_k,\vect{x}_k)\nabla^\top\bigr).
\]
Finally, since $\|A\| \le \|A\|_F$ for the spectral norm $\|\cdot\|$, we obtain
\[
\bigl\|\nabla c(\vect{x}_k)-\nabla \mu^{(c)}_{\D_k}(\vect{x}_k)\bigr\|^2
= \|A\|^2
\le \|A\|_F^2
\le m\,\beta_k\,\tr\!\bigl(\nabla k_{\D_k}(\vect{x}_k,\vect{x}_k)\nabla^\top\bigr),
\]
which proves~\cref{eq:cor-bound-grad-c}.
\end{proof}

\begin{remark}
Let $E_1, \dots, E_7$ denote the events where~\cref{eq:smoothness-GP},~\cref{eq:boundedness-GP},~\cref{eq:posterior-mean-bound},~\cref{eq:cor-bound-obj},~\cref{eq:cor-bound-grad-f},~\cref{eq:cor-bound-constr} and~\cref{eq:cor-bound-grad-c} hold, respectively, and define $E=\cap_{i=1}^7 E_i$. According to~\cref{cor:union-bound-posterior-mean}, we have $\Prob(E_3\mid E_2)\ge 1-\delta/3$; consequently, $\Prob(E_2E_3)=\Prob(E_3\mid E_2)\Prob(E_2)\ge 1-\delta/2$. Then it follows that 
\begin{align*}
\Prob(E)=\Prob(\cap_{i=1}^7 E_i)=1-\Prob(\cup_{i=1}^7 E_i^c)&\ge 1-\bigl(\Prob(E_1)+\Prob((E_2E_3)^c)+\sum_{i=4}^7\Prob(E_i^c)\bigr)\\
&\ge 1-\frac{\delta}{6}-\frac{\delta}{2}-\frac{(2m+2)\delta}{6m+6}\\
&= 1-\delta.
\end{align*}
For notational convenience, the subsequent discussion proceeds conditioned on the occurrence of event $E$. 

Furthermore, let the constant $L_m$ be defined as
$$L_m = L_{\nabla f} + B_{c,m} L_{\nabla c,m} + B_{\nabla c,m}^2$$
and $L_{\rho,m} = \rho L_m$. It is straightforward to verify that the penalty function $Q_\rho(\cdot)$ is $L_{\rho,m}$-smooth (assuming $\rho \geq 1$). 

Specifically, note that from~\cref{corollary:boundedness}, $\sup_{\vect x}\|\nabla c(\vect x)\|\le B_{\nabla c,m}$, which implies that $c$ is $B_{\nabla c,m}$-Lipschitz continuous. Then for any $\vect x, \vect y \in \X$:
\begin{align*}
    &\norm{\nabla Q_\rho(\vect x) - \nabla Q_\rho(\vect y)} \\
    &\leq \norm{\nabla f(\vect x) - \nabla f(\vect y)} + \rho \norm{\nabla c(\vect x)^\top c(\vect x) - \nabla c(\vect y)^\top c(\vect y)} \\
    &\leq \norm{\nabla f(\vect x) - \nabla f(\vect y)} + \rho \norm{\nabla c(\vect x)^\top (c(\vect x) - c(\vect y)) + (\nabla c(\vect x) - \nabla c(\vect y))^\top c(\vect y)}\\
    &\leq L_{\nabla f} \norm{\vect x - \vect y} + \rho \left( \|\nabla c(\vect x)\| \norm{c(\vect x) - c(\vect y)} + \norm{\nabla c(\vect x) - \nabla c(\vect y)} \norm{c(\vect y)} \right) \\
    &\leq L_{\nabla f} \norm{\vect x - \vect y} + \rho \left( B_{\nabla c,m} \cdot B_{\nabla c,m} \norm{\vect x - \vect y} + L_{\nabla c,m} \norm{\vect x - \vect y} \cdot B_{c,m} \right) \\
    &= \left( L_{\nabla f} + \rho (B_{\nabla c,m}^2 + B_{c,m} L_{\nabla c,m}) \right) \norm{\vect x - \vect y} \\
    &\leq \rho \left( L_{\nabla f} + B_{\nabla c,m}^2 + B_{c,m} L_{\nabla c,m} \right) \norm{\vect x - \vect y} \quad (\text{since } \rho \ge 1)\\
    &= L_{\rho,m} \norm{\vect x - \vect y}.
\end{align*}
\end{remark}
\begin{lemma}
\label{lemma:variance-error-function}
In the $k$-th iteration of~\cref{alg:lcbo}, we have:
\begin{equation}
    \tr(\nabla k_{\D_k}(\vect{x}_k, \vect{x}_k) \nabla^\top) \leq E_{d, k, \sigma}(b_k^{(2)}).
\end{equation}
\end{lemma}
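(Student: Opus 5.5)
We will prove the bound by combining three facts: posterior variance can only shrink when data are added, the kernel is stationary, and the exploration batch $\bZ_2$ is chosen as an argmin of $\alpha$.

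\textbf{Step 1 (monotonicity).} $\D_k$ contains the augmented set $\D_{k-1}\cup\bZ_2$ together with the repeated points $\bZ_1$. The exact labels and noise in the other points are irrelevant, since the posterior covariance depends only on the inputs. For any fixed test functional, conditioning a Gaussian vector on additional observations cannot increase its posterior variance. Applying this to each component $\partial_j g(\vect x_k)$ and summing over $j$ gives
\[
\tr\bigl(\nabla k^{(i)}_{\D_k}(\vect x_k,\vect x_k)\nabla^\top\bigr)\le \tr\bigl(\nabla k^{(i)}_{\D_{k-1}\cup\bZ_2}(\vect x_k,\vect x_k)\nabla^\top\bigr)=\alpha_i(\vect x_k,\bZ_2)
\]
for every $i\in\{f,c_1,\dots,c_m\}$. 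By the same monotonicity, dropping $\D_{k-1}$ only increases the variance, so
\[
\alpha_i(\vect x_k,\bZ)\le \tr\bigl(\nabla k^{(i)}(\vect x_k,\vect x_k)\nabla^\top-\nabla k^{(i)}(\vect x_k,\bZ)(k^{(i)}(\bZ,\bZ)+\sigma^2\bI)^{-1}k^{(i)}(\bZ,\vect x_k)\nabla^\top\bigr).
\]
This second inequality will be used in Step 3.

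\textbf{Step 2 (optimality of $\bZ_2$).} Since $\bZ_2$ minimizes $\alpha(\vect x_k,\cdot)=\max_i\alpha_i$, for every $\bZ\in\R^{b_k^{(2)}\times d}$ we have
\[
\max_i\alpha_i(\vect x_k,\bZ_2)\le\max_i\alpha_i(\vect x_k,\bZ).
\]

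\textbf{Step 3 (stationarity and translation).} By stationarity, the prior-only expression in Step 1 evaluated at $(\vect x_k,\bZ)$ equals the same expression at $(\zero,\bZ-\vect x_k)$. Here $\bZ-\vect x_k$ means subtracting $\vect x_k$ from each row, and this map is a bijection of $\R^{b\times d}$. Taking the infimum over $\bZ$ therefore gives exactly the quantity appearing in $E_{d,k,\sigma}(b_k^{(2)})$.

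Chaining Steps 1–3 gives the claim, with the paper's convention that $\tr(\nabla k_{\D_k}\nabla^\top)$ denotes the maximum over $i$, and that $E$ is taken for the shared kernel, since the paper suppresses superscripts.

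\textbf{Main obstacle.} The argmin and infimum are over all of $\R^{b\times d}$, so attainment is not guaranteed. The inequality only needs $\alpha(\vect x_k,\bZ_2)\le\inf_{\bZ}\alpha(\vect x_k,\bZ)$, which follows directly from $\bZ_2$ being a minimizer. If instead one takes an $\varepsilon$-minimizer, the same argument gives the bound up to $\varepsilon$, and letting $\varepsilon\to0$ recovers it.

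A second point needs care. With a single shared kernel, the same $\bZ$ is near-optimal for all $i$ simultaneously, so the max and the inf commute harmlessly. With distinct hyperparameters, the bound holds with $E$ replaced by the maximum over $i$ of the per-kernel quantities. This is consistent with the paper's simplifying convention.
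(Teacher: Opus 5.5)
Your proposal is correct and follows the paper's route. The paper drops $\bZ_1$ by monotonicity of the posterior covariance and then defers to Lemma 8 of \cite{LBO-behavior}; your Steps 1--3 (optimality of $\bZ_2$, discarding $\D_{k-1}$, translating to the origin by stationarity) make that lemma's content explicit, including the max over $i$ under the paper's shared-kernel convention.

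One caveat concerns your closing remark about distinct hyperparameters. There your chain only yields $\inf_{\bZ}\max_i$ of the prior-only traces, and since $\inf\max\ge\max\inf$, this is at least, and can strictly exceed, $\max_i$ of the per-kernel infima, because the minimizing designs differ (e.g.\ for different lengthscales). So replacing $E_{d,k,\sigma}(b_k^{(2)})$ by $\max_i E^{(i)}_{d,k,\sigma}(b_k^{(2)})$ does not follow from your argument.
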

\begin{proof}
During the $k$-th iteration of~\cref{alg:lcbo}, the candidate set can be partitioned into two components. Specifically, one component is derived by optimizing~\cref{eq:AF}. Let us denote this subset of data as $\{\vect{Z}_2,\vect y_2\}$ and define $\D_{k}^{'}=\D_{k-1}\cup \{\vect{Z}_2,\vect y_2\}$. Given that $\D_{k}^{'}$ is a subset of $\D_k$, it follows that:
$$\tr(\nabla k_{\D_k}(\vect{x}_k, \vect{x}_k) \nabla^\top) \leq \tr(\nabla k_{\D_k^{'}}(\vect{x}_k, \vect{x}_k) \nabla^\top) .$$
By applying an analysis analogous to Lemma 8 in~\cite{LBO-behavior}, we can establish that $\tr(\nabla k_{\D_k^{'}}(\vect{x}_k, \vect{x}_k) \nabla^\top)\leq E_{d, k, \sigma}(b_k^{(2)})$ which yields the desired conclusion.
\end{proof}
\begin{lemma}
In the $k$-th iteration of~\cref{alg:lcbo}, we have:
\begin{equation}
    k_{\D_k}(\vect{x}_k, \vect{x}_k) \leq e_{k,\sigma}(b_k^{(1)}).
\end{equation}
\end{lemma}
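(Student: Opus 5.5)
The plan mirrors the proof of \cref{lemma:variance-error-function}, now using the repeated-evaluation batch $\bZ_1$ instead of the optimized batch $\bZ_2$. First, isolate the data generated by $\bZ_1$. Write $\D_k'' = \{\bZ_1, \vect y_1\}$, where $\bZ_1 = [\vect x_k,\dots,\vect x_k]^\top\in\R^{b_k^{(1)}\times d}$ collects the $b_k^{(1)}$ repeated evaluations at the current iterate. Since $\D_k''\subseteq \D_k$, I will invoke the standard monotonicity of GP posterior variance under conditioning on more data: the posterior variance of a Gaussian vector can only decrease when more jointly Gaussian observations are added. This holds because the extra conditioning subtracts a positive semidefinite Schur-complement term. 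It gives
\[
k_{\D_k}(\vect x_k,\vect x_k)\le k_{\D_k''}(\vect x_k,\vect x_k),
\]
and the same applies to each of the $m+1$ GPs, so it also holds for the maximum over $i\in\{f,c_1,\dots,c_m\}$.

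Next, compute $k_{\D_k''}(\vect x_k,\vect x_k)$ explicitly from the posterior formula in \cref{subsec:GP-fundamentals}:
\[
k_{\D_k''}(\vect x_k,\vect x_k)=k(\vect x_k,\vect x_k)-k(\vect x_k,\bZ_1)\bigl(k(\bZ_1,\bZ_1)+\sigma^2\bI\bigr)^{-1}k(\bZ_1,\vect x_k).
\]
Stationarity (\cref{assumption:Gaussian-Process-Prior}) means every kernel entry depends only on differences of its arguments. All rows of $\bZ_1$ equal $\vect x_k$, so translating by $-\vect x_k$ leaves every quantity unchanged. It maps $\vect x_k$ to $\zero$ and $\bZ_1$ to $\zero_{b_k^{(1)}\times d}$. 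The expression is then exactly $e_{k,\sigma}(b_k^{(1)})$ from \cref{eq:definition-error-function}. As a sanity check I can also compute it in closed form: with $\kappa_0=k(\zero,\zero)$, Sherman--Morrison gives $\kappa_0\sigma^2/(b\kappa_0+\sigma^2)$, which decays like $\sigma^2/b$.

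The only step needing care is the monotonicity claim, and it is where I expect the main (though mild) obstacle. I would prove it directly by the sequential-conditioning identity. Conditioning first on $\D_k''$ and then on the remaining data $\D_k\setminus\D_k''$ subtracts a further term of the form $v^\top S^{-1}v\ge 0$, with $S$ positive definite because of the $\sigma^2\bI$ noise term. This works regardless of how the remaining points were chosen. The choice of $\bZ_2$ and of earlier points depends on past observations only through the design, and the GP posterior covariance does not depend on the responses, as noted at the start of this appendix. Finally, the notational convention of taking the maximum over the $m+1$ GPs is harmless: each GP satisfies the bound with its own kernel, and the uniform notation $e_{k,\sigma}$ absorbs the hyperparameter differences.
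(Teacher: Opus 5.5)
Your proof is correct and follows the same route as the paper. The paper proves this lemma ``by the same procedure as'' \cref{lemma:variance-error-function}: monotonicity of the posterior variance lets you discard all data except the relevant batch (here the $b_k^{(1)}$ repeated evaluations in $\bZ_1$), and the remaining variance is then identified with the error function. Your explicit use of stationarity to translate $\vect{x}_k$ and $\bZ_1$ to $\zero$ and $\zero_{b_k^{(1)}\times d}$ spells out the step the paper leaves implicit, and since $\bZ_1$ is fixed rather than optimized, no analogue of the acquisition-function optimality argument is needed.
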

\begin{proof}
The proof follows a procedure identical to that of~\cref{lemma:variance-error-function}, with the sole modification being the substitution of $\tr(\nabla k_{\D_k}(\vect{x}_k, \vect{x}_k)$ and $E_{d, k, \sigma}(b_k^{(2)})$ with $k_{\D_k}(\vect{x}_k, \vect{x}_k)$ and $e_{k,\sigma}(b_k^{(1)})$, respectively.
\end{proof}

\begin{lemma}
\label{lemma:order-of-E}
Let $k(\vect{x}_1, \vect{x}_2)$ be the RBF kernel or the $\nu=2.5$ Mat\'ern kernel. We have:
\begin{equation}
    E_{d, k, \sigma}(b) \lesssim \bigO(\sigma d^{\frac{3}{2}} b^{-\frac{1}{2}}).
\end{equation}
\end{lemma}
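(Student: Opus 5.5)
Since $E_{d,k,\sigma}(b)$ is an infimum over designs $\bZ\in\R^{b\times d}$, it suffices to exhibit one concrete design and bound the trace of the resulting posterior gradient covariance at $\zero$. The natural choice is a central finite-difference design. Split the $b$ points evenly into $2d$ groups. Each group contains $b/(2d)$ repeated copies of $\pm h\vect e_j$, $j=1,\dots,d$, for a step $h>0$ to be tuned. Because the posterior covariance decreases when the conditioning set grows, we may ignore rounding and assume $2d\mid b$.

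\textbf{Step 1: reduce to finite differences.} Conditioning on data can only reduce the posterior variance of any linear functional, and the posterior variance is at most the mean squared error of any linear estimator built from the observations. So for each coordinate $j$, $\mathrm{Var}(\partial_j g(\zero)\mid \text{data})$ is at most the mean squared error of
\[
\hat g_j=\frac{\bar y_{+j}-\bar y_{-j}}{2h},
\]
where $\bar y_{\pm j}$ are the averages of the $b/(2d)$ repeated observations at $\pm h\vect e_j$. This error splits into a noise part and a bias part.

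\textbf{Step 2: bound the two parts.} The noise part is $2\sigma^2\cdot\frac{2d}{b}\cdot\frac{1}{4h^2}=\frac{d\sigma^2}{bh^2}$. The bias part is $\E\bigl[\bigl(\tfrac{g(h\vect e_j)-g(-h\vect e_j)}{2h}-\partial_j g(\zero)\bigr)^2\bigr]$. Expand it using the kernel and Taylor-expand to fourth order. This is possible because the RBF kernel and the Mat\'ern kernel with $\nu=2.5$ are $C^4$ as stationary functions of the difference. The constant and $h^2$ terms cancel, since the symmetric difference kills even orders. What remains is a bias of order $h^4\,\partial_j^6$, or, using only fourth-order smoothness, order $h^2 C_k$ with $C_k$ a bound on the fourth derivative of $k$ along a coordinate axis. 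With only $C^4$ available, as for Mat\'ern $2.5$, I would settle for the cruder bias bound $c\,h^2$. Summing over $j$ gives
\[
E_{d,k,\sigma}(b)\le \frac{d^2\sigma^2}{bh^2}+c\,d\,h^2 .
\]

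\textbf{Step 3: optimize $h$.} Minimizing over $h$ gives $h^2=\sqrt{d}\,\sigma/\sqrt{cb}$, and hence
\[
E_{d,k,\sigma}(b)\lesssim \sigma\, d^{3/2} b^{-1/2},
\]
which is the claimed rate. Lengthscale-dependent constants are absorbed into $\lesssim$.

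\textbf{Main obstacle.} The delicate step is the bias computation for the Mat\'ern $\nu=2.5$ kernel. Its radial profile is only finitely differentiable at the origin, so I must check that the Taylor remainder of $k(\vect{x},\vect{x}')$ in the coordinate direction stays $\bigO(h^2)$ uniformly, with a constant independent of $d$. Stationarity and isotropy make each coordinate direction identical, so a one-dimensional computation suffices. This matches the approach of Lemma 7/8 in~\cite{LBO-behavior}, which I would follow closely.
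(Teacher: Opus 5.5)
Your argument is correct. The paper gives no proof of its own here; it only points to Lemmas 4 and 5 of \cite{LBO-behavior}. Your proposal is therefore a self-contained substitute, and a fairly economical one. Bounding the posterior variance of $\partial_j g(\zero)$ by the mean squared error of the central difference avoids inverting $k(\bZ,\bZ)+\sigma^2\bI$ altogether. The only kernel input you need is a bound on $\phi^{(4)}$ for the axis profile $\phi(t)=k(t\vect{e}_j,\zero)$.

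To pin down your bias step: the noiseless error is exactly
$\frac{\phi(0)-\phi(2h)}{2h^2}+\frac{2\phi'(h)}{h}-\phi''(0)=\frac{h^2}{3}\bigl(\phi^{(4)}(\xi_2)-\phi^{(4)}(\xi_1)\bigr)$,
with $\xi_1\in(0,2h)$ and $\xi_2\in(0,h)$. Hence $c=\frac{2}{3}\sup_t|\phi^{(4)}(t)|=\frac{2}{3}\phi^{(4)}(0)$ works. The equality holds because $\phi^{(4)}$ is the covariance function of the second-derivative process.

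This also disposes of your ``main obstacle''. For unit lengthscale, the Mat\'ern-$2.5$ profile is $1-\frac{5}{6}t^2+\frac{25}{24}t^4-\frac{5\sqrt{5}}{9}|t|^5+\cdots$, so it is $C^4$. The same argument therefore covers every isotropic kernel allowed by \cref{assumption:Gaussian-Process-Prior}.

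The price is looseness. For this design the error is really $\bigO(h^4)$ for RBF and $\bigO(h^3)$ for Mat\'ern-$2.5$. Using those would give faster rates, e.g.\ $\sigma^{4/3}d^{5/3}b^{-2/3}$ for RBF when $b\gtrsim d\sigma^2$. The crude $\bigO(h^2)$ bias is exactly what reproduces the stated $\sigma d^{3/2}b^{-1/2}$.

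One caveat you should state explicitly: your construction needs $b\ge 2d$. In that range, using $2d\lfloor b/(2d)\rfloor\ge b/2$ points costs only a constant. Some restriction of this kind is unavoidable. For $b<d$, the subtracted matrix $\nabla k(\zero,\bZ)(k(\bZ,\bZ)+\sigma^2\bI)^{-1}k(\bZ,\zero)\nabla^\top$ has rank at most $b$ and is dominated by $\nabla k(\zero,\zero)\nabla^\top=-\phi''(0)\bI$. Therefore $E_{d,k,\sigma}(b)\ge (d-b)\bigl(-\phi''(0)\bigr)$ for every $\sigma$. So the claimed bound cannot hold there with a $\sigma$-independent constant as $\sigma\to 0$.
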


\begin{proof}
See Lemmas 4 and 5 in~\cite{LBO-behavior}.
\end{proof}

\begin{lemma}
\label{lemma:order-of-e}
For any kernel function satisfying $k(\vect{0}, \vect{0}) = \kappa_0<+\infty$, we have:
\begin{equation}
    e_{k, \sigma}(b) \lesssim \bigO(\sigma^2 b^{-1}).
\end{equation}
\end{lemma}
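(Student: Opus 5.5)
The quantity $e_{k,\sigma}(b)$ is the posterior variance at $\zero$ after observing $b$ noisy evaluations, all taken at the same point $\zero$. So the computation reduces to an explicit rank-one formula. Write $\kappa_0 = k(\zero,\zero)$ and let $\vect{1}\in\R^b$ be the all-ones vector. By stationarity, $k(\zero,\zero_{b\times d}) = \kappa_0 \vect{1}^\top$ and $k(\zero_{b\times d},\zero_{b\times d}) = \kappa_0 \vect{1}\vect{1}^\top$. The matrix to invert is therefore $\kappa_0\vect{1}\vect{1}^\top + \sigma^2\bI$.

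First, I will invert it with the Sherman--Morrison formula:
\[
(\kappa_0\vect{1}\vect{1}^\top + \sigma^2\bI)^{-1} = \sigma^{-2}\bI - \frac{\sigma^{-4}\kappa_0\vect{1}\vect{1}^\top}{1+\sigma^{-2}\kappa_0 b}.
\]
Sandwiching between $\kappa_0\vect{1}^\top$ and $\kappa_0\vect{1}$, and using $\vect{1}^\top\vect{1}=b$, gives
\[
\kappa_0^2\Bigl(\frac{b}{\sigma^2} - \frac{\kappa_0 b^2/\sigma^4}{1+\kappa_0 b/\sigma^2}\Bigr) = \frac{\kappa_0^2 b}{\sigma^2+\kappa_0 b}.
\]
An alternative route is to note that $\vect{1}$ is an eigenvector with eigenvalue $\kappa_0 b+\sigma^2$, which gives the same expression.

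Second, subtract this from $\kappa_0$ to obtain the closed form
\[
e_{k,\sigma}(b) = \kappa_0 - \frac{\kappa_0^2 b}{\sigma^2+\kappa_0 b} = \frac{\kappa_0\sigma^2}{\sigma^2+\kappa_0 b}.
\]
This is just the familiar fact that averaging $b$ repeated noisy observations reduces the variance like $\sigma^2/b$.

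Third, bound $\sigma^2+\kappa_0 b \ge \kappa_0 b$, which yields $e_{k,\sigma}(b)\le \sigma^2/b$. This is $\bigO(\sigma^2 b^{-1})$ with constant $1$, independent of $\kappa_0$ and $d$. If $\kappa_0=0$, then $e_{k,\sigma}(b)=0$ trivially.

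No step is a genuine obstacle. The only point needing care is to check that stationarity really makes every entry of the Gram matrix equal to $\kappa_0$ even though $d$ is arbitrary. This holds because all rows of $\zero_{b\times d}$ are the same point. Consequently the bound is dimension-free, which is exactly what the corollary needs for the $e_{k,\sigma}(b_k^{(1)})$ term in $\mathcal{E}_K$.
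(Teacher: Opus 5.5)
Your proposal is correct and follows the paper's route: evaluate the definition at $b$ repeated points $\zero$, invert the rank-one-plus-identity matrix $k(\bZ,\bZ)+\sigma^2\bI$ by Sherman--Morrison, and read off a closed form of order $\sigma^2/b$. Your bookkeeping is actually cleaner than the paper's. The paper factors $\kappa_0$ out of the Gram matrix as if $\kappa_0=1$ and obtains $\kappa_0\sigma^2/(\sigma^2+b)$, whereas your exact expression $\kappa_0\sigma^2/(\sigma^2+\kappa_0 b)\le\sigma^2/b$ is right and gives the same order. One small point: the all-$\kappa_0$ Gram matrix comes from the evaluation points coinciding, not from stationarity.
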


\begin{proof}
Let $\matr{Z} = (\vect{z}_1, \dots, \vect{z}_b) = (\vect{0}, \vect{0}, \dots, \vect{0})$. We analyze the variance reduction term $A(\matr{Z})$:
\begin{align*}
    A(\vect{Z}) &= k(\vect{0}, \vect{0}) - k(\vect{0}, \matr{Z})(k(\matr{Z}, \matr{Z}) + \sigma^2 I_b)^{-1} k(\matr{Z}, \vect{0}) \\
         &= \kappa_0\bigl(1 - \vect{1}^\top (\vect{1}\vect{1}^\top + \sigma^2 I_b)^{-1} \vect{1}\bigr).
\end{align*}
Using the rank-1 update inverse, we have:
\begin{align*}
    (\vect{1}\vect{1}^\top + \sigma^2 I_b)^{-1} &= \frac{1}{\sigma^2} I_b - \frac{1}{\sigma^2(\sigma^2 + b)} \vect{1}\vect{1}^\top.
\end{align*}
Substituting this back into the expression for $A(\matr{Z})$:
\begin{align*}
    A(\matr{Z}) &= \kappa_0\left(1 - \vect{1}^\top  \bigl(\frac{I_b}{\sigma^2} - \frac{\vect{1}\vect{1}^\top}{\sigma^2(\sigma^2 + b)} \bigr) \vect{1}\right) \\
         &= \frac{\kappa_0\sigma^2}{\sigma^2 + b} \\
         &= \bigO\left(\sigma^2b^{-1}\right).
\end{align*}
\end{proof}

\begin{lemma}
\label{lem:grad-Q-error}
Suppose~\cref{assumption:Gaussian-Process-Prior,assumption:Compact-and-convex-Domain} hold, and $Q_{\rho_k}(\vect{x})$ be defined as~\cref{eq:penalty-function}. Then, the sequence $\vect{x}_k$ generated by~\cref{alg:lcbo} satisfies:
\begin{equation}
\label{eq:expansion-error-term}
  \bigl\| \nabla Q_{\rho_k}(\vect{x}_k)
        - \hatnabla{Q}_{\rho_k}(\vect{x}_k) \bigr\|^2
  \le 3 \beta_k E_{d,k,\sigma}(b_k^{(2)})
     + 3 m \rho_k^2 \tilde B_{k,m}^2 \beta_k E_{d,k,\sigma}(b_k^{(2)})
     + 3 m \rho_k^2 B_{\nabla c,m}^2 \beta_k e_{k,\sigma}(b_k^{(1)}).
\end{equation}
\end{lemma}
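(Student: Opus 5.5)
We write the true and estimated gradients of the penalty function explicitly:
\[
\nabla Q_{\rho_k}(\vect x_k) = \nabla f(\vect x_k) + \rho_k \nabla c(\vect x_k)^\top c(\vect x_k),
\qquad
\hatnabla Q_{\rho_k}(\vect x_k) = \nabla \mu^{(f)}_{\D_k}(\vect x_k) + \rho_k \nabla \mu^{(c)}_{\D_k}(\vect x_k)^\top \mu^{(c)}_{\D_k}(\vect x_k).
\]
We then split the difference into three pieces by adding and subtracting the mixed term $\rho_k \nabla c(\vect x_k)^\top \mu^{(c)}_{\D_k}(\vect x_k)$:
\[
\nabla Q_{\rho_k} - \hatnabla Q_{\rho_k} = \underbrace{\bigl(\nabla f - \nabla \mu^{(f)}_{\D_k}\bigr)}_{T_1} + \rho_k\underbrace{\bigl(\nabla c - \nabla\mu^{(c)}_{\D_k}\bigr)^\top \mu^{(c)}_{\D_k}}_{T_2} + \rho_k\underbrace{\nabla c^\top\bigl(c - \mu^{(c)}_{\D_k}\bigr)}_{T_3},
\]
with everything evaluated at $\vect x_k$. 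The elementary inequality $\|a+b+c\|^2\le 3(\|a\|^2+\|b\|^2+\|c\|^2)$ produces the factor $3$ in each term of~\cref{eq:expansion-error-term}.

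We bound each piece on the event $E$. For $T_1$, \cref{eq:cor-bound-grad-f} gives $\|T_1\|^2\le \beta_k \tr(\nabla k_{\D_k}(\vect x_k,\vect x_k)\nabla^\top)$, and \cref{lemma:variance-error-function} bounds this by $\beta_k E_{d,k,\sigma}(b_k^{(2)})$. For $T_2$, the spectral norm satisfies $\|A^\top v\|\le\|A\|\|v\|$. Combining \cref{eq:cor-bound-grad-c} for $\|\nabla c - \nabla\mu^{(c)}_{\D_k}\|^2\le m\beta_k \tr(\cdot)$ with \cref{cor:union-bound-posterior-mean} for $\|\mu^{(c)}_{\D_k}(\vect x_k)\|\le \sup_{\vect x}\|\mu^{(c)}_{\D_k}(\vect x)\|\le\tilde B_{k,m}$, and then \cref{lemma:variance-error-function} again, gives $\rho_k^2\|T_2\|^2\le m\rho_k^2\tilde B_{k,m}^2\beta_k E_{d,k,\sigma}(b_k^{(2)})$. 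For $T_3$, \cref{corollary:boundedness} gives $\|\nabla c(\vect x_k)\|\le B_{\nabla c,m}$, and \cref{eq:cor-bound-constr} together with the function-value analogue of \cref{lemma:variance-error-function} gives $\|c-\mu^{(c)}_{\D_k}\|^2\le m\beta_k e_{k,\sigma}(b_k^{(1)})$. Hence $\rho_k^2\|T_3\|^2\le m\rho_k^2B_{\nabla c,m}^2\beta_k e_{k,\sigma}(b_k^{(1)})$.

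Summing the three bounds with the factor $3$ yields~\cref{eq:expansion-error-term}. The only delicate point is choosing the splitting so that the uniformly bounded quantities multiply the error terms: the posterior mean (bounded by $\tilde B_{k,m}$) must multiply the Jacobian error, and the true Jacobian (bounded by $B_{\nabla c,m}$) must multiply the value error. The alternative split, pairing $\nabla\mu^{(c)}_{\D_k}$ with the value error, would require a bound on the posterior Jacobian that is not available. We must also note that the event $E$ holds for every $k$ simultaneously, since the $\beta_k$ were chosen via a union bound over $k$. This lets the lemma hold for the random iterate $\vect x_k$, which is determined by $\D_{k-1}$.
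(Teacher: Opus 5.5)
Your proof is correct and follows the paper's argument essentially step for step. You add and subtract $\rho_k\nabla c(\vect x_k)^\top\mu^{(c)}_{\D_k}(\vect x_k)$, apply the factor-$3$ inequality and the spectral-norm bound $\|A^\top v\|\le\|A\|\|v\|$, and then control each piece via \cref{cor:union-bound-posterior-mean}, \cref{cor:posterior-error-bounds}, \cref{corollary:boundedness} and \cref{lemma:variance-error-function} (together with its function-value analogue), exactly as the paper does.
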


\begin{proof}
By the definition of $Q_{\rho_k}$ in~\cref{eq:penalty-function} and its surrogate
$\hatnabla Q_{\rho_k}$, we can write
\begin{align*}
  \bigl\| \nabla Q_{\rho_k}(\vect{x}_k)
        - \hatnabla Q_{\rho_k}(\vect{x}_k) \bigr\|^2
  &= \Bigl\|
       \nabla f(\vect{x}_k) - \nabla \mu_{\D_k}^{(f)}(\vect{x}_k)
       + \rho_k\bigl(\nabla c(\vect{x}_k)^\top c(\vect{x}_k)
                     - \nabla c(\vect{x}_k)^\top \mu_{\D_k}^{(c)}(\vect{x}_k)\bigr) \notag \\
  &\qquad\qquad
       + \rho_k\bigl(\nabla c(\vect{x}_k)^\top \mu_{\D_k}^{(c)}(\vect{x}_k)
                     - \nabla \mu_{\D_k}^{(c)}(\vect{x}_k)^\top \mu_{\D_k}^{(c)}(\vect{x}_k)\bigr)
     \Bigr\|^2 \notag \\
  &\overset{\textcircled{1}}{{\le}}
     3\bigl\|\nabla f(\vect{x}_k) - \nabla \mu_{\D_k}^{(f)}(\vect{x}_k)\bigr\|^2
     + 3\rho_k^2
       \bigl\|\nabla c(\vect{x}_k)^\top
              \bigl(c(\vect{x}_k)-\mu_{\D_k}^{(c)}(\vect{x}_k)\bigr)\bigr\|^2 \notag \\
  &\qquad
     + 3\rho_k^2
       \bigl\|\bigl(\nabla c(\vect{x}_k)
                   - \nabla \mu_{\D_k}^{(c)}(\vect{x}_k)\bigr)^\top
              \mu_{\D_k}^{(c)}(\vect{x}_k)\bigr\|^2 \notag \\
  &\overset{\textcircled{2}}{{\le}}
     3\bigl\|\nabla f(\vect{x}_k) - \nabla \mu_{\D_k}^{(f)}(\vect{x}_k)\bigr\|^2
     + 3\rho_k^2 \bigl\|\nabla c(\vect{x}_k)\bigr\|^2
       \bigl\| c(\vect{x}_k) -\mu_{\D_k}^{(c)}(\vect{x}_k)\bigr\|^2 \notag \\
  &\qquad 
     + 3\rho_k^2 \bigl\|\nabla c(\vect{x}_k)-\nabla \mu_{\D_k}^{(c)}(\vect{x}_k)\bigr\|^2
              \bigl\|\mu_{\D_k}^{(c)}(\vect{x}_k)\bigr\|^2 \notag \\
  &\overset{\textcircled{3}}{{\le}}
     3 \beta_k E_{d,k,\sigma}(b_k^{(2)})
     + 3 m \rho_k^2 \tilde B_{k,m}^2 \beta_k E_{d,k,\sigma}(b_k^{(2)})
     + 3 m \rho_k^2 B_{\nabla c,m}^2 \beta_k e_{k,\sigma}(b_k^{(1)}),
\end{align*}
In the derivation above:
\begin{enumerate}
    \item[\textcircled{1}] utilizes the inequality: $\norm{\vect{x}+\vect{y}+\vect{z}}^2 \le 3(\|\vect{x}\|^2+\|\vect{y}\|^2+\|\vect{z}\|^2)$.
    \item[\textcircled{2}] leverages the property of the spectral norm, i.e., $\|\matr{A}\vect{x}\| \leq \|\matr{A}\|\|\vect{x}\|$.
    \item[\textcircled{3}] follows from the application of~\cref{cor:union-bound-posterior-mean}, ~\cref{cor:posterior-error-bounds} and~\cref{lemma:variance-error-function}.
\end{enumerate}
\end{proof}

\subsection{Technical Lemmas}
\begin{lemma}
\label{lemma:stationary-x-gradient-error}
Suppose~\cref{assumption:Gaussian-Process-Prior,assumption:Compact-and-convex-Domain,assumption:Regularity-Condition} hold, and the parameters in~\cref{alg:lcbo} are selected according to~\cref{eq:parameters}. Then, the sequence $\vect{x}_k$ generated by~\cref{alg:lcbo} satisfies:
\begin{equation}
    \dist^2\left(\nabla f(\vect{x}_{k+1}) + \rho_k \nabla c(\vect{x}_{k+1})^\top c(\vect{x}_{k+1}), -N_{\X}(\vect{x}_{k+1})\right) \leq 6 \eta_k^{-2} \|\vect{x}_{k+1} - \vect{x}_k\|^2 +3 \|\nabla Q_{\rho_k}(\vect{x}_k) - \hat{\nabla} Q_{\rho_k}(\vect{x}_k)\|^2.
\end{equation}
\end{lemma}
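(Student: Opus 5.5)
The bound comes from the optimality condition of the projection step, followed by a three-term split. Write $\vect{y}_k := \vect{x}_k - \eta_k \hatnabla Q_{\rho_k}(\vect{x}_k)$, so that $\vect{x}_{k+1} = P_{\X}(\vect{y}_k)$. Since $\X$ is closed and convex, the projection optimality condition gives $\vect{y}_k - \vect{x}_{k+1} \in N_{\X}(\vect{x}_{k+1})$. Equivalently,
\begin{equation*}
\vect{v}_k := \eta_k^{-1}(\vect{x}_{k+1} - \vect{x}_k) + \hatnabla Q_{\rho_k}(\vect{x}_k) \in -N_{\X}(\vect{x}_{k+1}).
\end{equation*}
The vector $\vect{v}_k$ is therefore a feasible element of the set in the distance, and by definition of $\dist$,
\begin{equation*}
\dist^2\bigl(\nabla Q_{\rho_k}(\vect{x}_{k+1}), -N_{\X}(\vect{x}_{k+1})\bigr) \le \|\nabla Q_{\rho_k}(\vect{x}_{k+1}) - \vect{v}_k\|^2 .
\end{equation*}
Here $\nabla Q_{\rho_k}(\vect{x}_{k+1}) = \nabla f(\vect{x}_{k+1}) + \rho_k \nabla c(\vect{x}_{k+1})^\top c(\vect{x}_{k+1})$ is exactly the vector appearing in the statement.

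Next I split the difference into three pieces:
\begin{equation*}
\nabla Q_{\rho_k}(\vect{x}_{k+1}) - \vect{v}_k = \bigl[\nabla Q_{\rho_k}(\vect{x}_{k+1}) - \nabla Q_{\rho_k}(\vect{x}_k)\bigr] + \bigl[\nabla Q_{\rho_k}(\vect{x}_k) - \hatnabla Q_{\rho_k}(\vect{x}_k)\bigr] - \eta_k^{-1}(\vect{x}_{k+1}-\vect{x}_k).
\end{equation*}
Applying $\|\vect a+\vect b+\vect c\|^2 \le 3(\|\vect a\|^2+\|\vect b\|^2+\|\vect c\|^2)$ gives three terms:
\begin{itemize}
\item $3\|\nabla Q_{\rho_k}(\vect{x}_{k+1}) - \nabla Q_{\rho_k}(\vect{x}_k)\|^2$, the smoothness term;
\item $3\|\nabla Q_{\rho_k}(\vect{x}_k) - \hatnabla Q_{\rho_k}(\vect{x}_k)\|^2$, the gradient-error term, which already has the desired form;
\item $3\eta_k^{-2}\|\vect{x}_{k+1}-\vect{x}_k\|^2$, the step term.
\end{itemize}

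For the smoothness term I work on the event $E$, where $Q_{\rho_k}$ is $L_{\rho_k,m} = \rho_k L_m$-smooth (valid because $\rho_k = k^{1/4} \ge 1$). This bounds the term by $3\rho_k^2 L_m^2\|\vect{x}_{k+1}-\vect{x}_k\|^2$. The parameter choice in \cref{eq:parameters} now matters:
\begin{equation*}
\eta_k \rho_k L_m = \tfrac{1}{2}k^{-1/2}k^{1/4} = \tfrac{1}{2}k^{-1/4} \le \tfrac{1}{2}, \quad\text{so}\quad \rho_k^2 L_m^2 \le \tfrac{1}{4}\eta_k^{-2} \le \eta_k^{-2}.
\end{equation*}
The smoothness term is therefore at most $3\eta_k^{-2}\|\vect{x}_{k+1}-\vect{x}_k\|^2$. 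Adding it to the step term gives the constant $6\eta_k^{-2}$, which completes the bound.

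The only delicate point is verifying $\eta_k L_{\rho_k,m} \le 1$ uniformly in $k$, which I expect to be the main obstacle. The scaling $\eta_k \propto k^{-1/2}$ against $\rho_k = k^{1/4}$ handles it, and so does the requirement $\rho_k \ge 1$ needed for the smoothness constant of $Q_{\rho}$. Everything else follows directly from the projection characterization.
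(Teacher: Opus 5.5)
Your proof is correct and follows essentially the same route as the paper. The paper also uses the projection optimality condition to place $\eta_k^{-1}(\vect{x}_{k+1}-\vect{x}_k)+\hatnabla Q_{\rho_k}(\vect{x}_k)$ in $-N_{\X}(\vect{x}_{k+1})$, splits the difference into step, gradient-error and smoothness terms, and absorbs the smoothness term via $\eta_k L_{\rho_k,m}\le 1$ on the event $E$; your explicit check that $\eta_k\rho_k L_m=\tfrac12 k^{-1/4}\le\tfrac12$, with $\rho_k\ge 1$ justifying the smoothness constant $\rho_k L_m$, is just a more careful statement of the parameter condition the paper invokes in its final step.
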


\begin{proof}
By the optimality condition of the projection step, we have:
\begin{equation*}
    0 \in \vect{x}_{k+1} - \vect{x}_k + \eta_k \hat{\nabla} Q_{\rho_k}(\vect{x}_k) + N_{\X}(\vect{x}_{k+1}).
\end{equation*}
Rearranging the terms implies:
\begin{equation*}
    \frac{1}{\eta_k}(\vect{x}_{k} - \vect{x}_{k+1}) + \nabla Q_{\rho_k}(\vect{x}_k) - \hat{\nabla} Q_{\rho_k}(\vect{x}_k) + \nabla Q_{\rho_k}(\vect{x}_{k+1})-\nabla Q_{\rho_k}(\vect{x}_k) \in \nabla Q_{\rho_k}(\vect{x}_{k+1}) + N_{\X}(\vect{x}_{k+1}).
\end{equation*}
Therefore, considering the squared distance to the normal cone:
\begin{align*}
    & \dist^2\left(\nabla f(\vect{x}_{k+1}) + \rho_k \nabla c(\vect{x}_{k+1})^\top c(\vect{x}_{k+1}), -N_{\X}(\vect{x}_{k+1})\right) \\
    = \quad & \dist^2\left(\nabla Q_{\rho_k}(\vect{x}_{k+1}), -N_{\X}(\vect{x}_{k+1})\right) \\
    \overset{\text{\textcircled{1}}}{\leq} \quad & \left\| \frac{1}{\eta_k}(\vect{x}_k - \vect{x}_{k+1}) + \nabla Q_{\rho_k}(\vect{x}_k) - \hat{\nabla} Q_{\rho_k}(\vect{x}_k) + \nabla Q_{\rho_k}(\vect{x}_{k+1}) - \nabla Q_{\rho_k}(\vect{x}_k) \right\|^2 \\
    \overset{\text{\textcircled{2}}}{\leq} \quad & 3 \left[ \eta_k^{-2} \|\vect{x}_k - \vect{x}_{k+1}\|^2 + \| \nabla Q_{\rho_k}(\vect{x}_k) - \hat{\nabla} Q_{\rho_k}(\vect{x}_k) \|^2 + \| \nabla Q_{\rho_k}(\vect{x}_{k+1}) - \nabla Q_{\rho_k}(\vect{x}_k) \|^2 \right] \\
    \overset{\text{\textcircled{3}}}{\leq} \quad & (3 \eta_k^{-2} +3 L_{\rho_k,m}^2) \|\vect{x}_{k+1} - \vect{x}_k\|^2 + 3 \| \nabla Q_{\rho_k}(\vect{x}_k) - \hat{\nabla} Q_{\rho_k}(\vect{x}_k) \|^2 \\
    \overset{\text{\textcircled{4}}}{\leq} \quad & 6 \eta_k^{-2} \|\vect{x}_{k+1} - \vect{x}_k\|^2 + 3\| \nabla Q_{\rho_k}(\vect{x}_k) - \hat{\nabla} Q_{\rho_k}(\vect{x}_k) \|^2.
\end{align*}
In the derivation above:
\begin{itemize}
    \item [\textcircled{1}] follows from the definition of the distance function $\dist(\cdot, \cdot)$.
    \item [\textcircled{2}] utilizes the inequality $\|\vect{x}+\vect{y}+\vect{z}\|^2 \le 3(\|\vect{x}\|^2+\|\vect{y}\|^2+\|\vect{z}\|^2)$.
    \item [\textcircled{3}] is derived from the smoothness property of $Q_{\rho_k}(\vect{x})$.
    \item [\textcircled{4}] follows from the definitions of $\eta_k$ and $\rho_k$, which satisfy $\eta_k\leq L_{\rho_k,m}^{-1}$.
\end{itemize}
\end{proof}
\begin{lemma}
\label{lemma:change-of-penalty-function}
Suppose~\cref{assumption:Gaussian-Process-Prior,assumption:Compact-and-convex-Domain,assumption:Regularity-Condition} hold, and the parameters in~\cref{alg:lcbo} are selected according to~\cref{eq:parameters}. Then, the sequence $\vect{x}_k$ generated by~\cref{alg:lcbo} satisfies:
\begin{equation}
    Q_{\rho_{k+1}}(\vect{x}_{k+1}) - Q_{\rho_k}(\vect{x}_{k+1}) \leq \frac{\rho_{k+1} - \rho_k}{\gamma^2 \rho_k^2} B_{\nabla f}^2 + \frac{\rho_{k+1} - \rho_k}{\gamma^2 \rho_k^2} \dist^2\left(\nabla f(\vect{x}_{k+1}) + \rho_k \nabla c(\vect{x}_{k+1})^\top c(\vect{x}_{k+1}), -N_{\X}(\vect{x}_{k+1})\right).
\end{equation}

\end{lemma}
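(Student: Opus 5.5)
Writing $\vect{x}:=\vect{x}_{k+1}$, the difference on the left telescopes exactly to
\[
Q_{\rho_{k+1}}(\vect{x}) - Q_{\rho_k}(\vect{x}) = \frac{\rho_{k+1}-\rho_k}{2}\|c(\vect{x})\|^2 .
\]
So the whole task is to bound the feasibility residual $\|c(\vect{x})\|^2$ by a multiple of $\rho_k^{-2}$. We will show
\[
\|c(\vect{x})\|^2 \le \frac{2}{\gamma^2\rho_k^2}\Bigl(B_{\nabla f}^2 + \dist^2\bigl(\nabla Q_{\rho_k}(\vect{x}), -\Nx(\vect{x})\bigr)\Bigr).
\]
Multiplying this by $(\rho_{k+1}-\rho_k)/2$ gives the statement. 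Since $\rho_k=k^{1/4}$ is increasing, the factor $\rho_{k+1}-\rho_k$ is nonnegative, so no sign issue arises.

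To get the bound on $\|c(\vect{x})\|$, we invoke \cref{assumption:Regularity-Condition} at $\vect{x}$:
\[
\gamma\|c(\vect{x})\| \le \dist\bigl(\nabla c(\vect{x})^\top c(\vect{x}), -\Nx(\vect{x})\bigr).
\]
Because $\Nx(\vect{x})$ is a cone, dividing its elements by $\rho_k>0$ leaves it unchanged. Hence
\[
\dist\bigl(\nabla c(\vect{x})^\top c(\vect{x}), -\Nx(\vect{x})\bigr) = \rho_k^{-1}\,\dist\bigl(\rho_k\nabla c(\vect{x})^\top c(\vect{x}), -\Nx(\vect{x})\bigr).
\]
Next we use that $\dist(\cdot,-\Nx(\vect{x}))$ is $1$-Lipschitz, since distance to a closed convex set is $1$-Lipschitz. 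Write $\rho_k\nabla c^\top c=\nabla Q_{\rho_k}(\vect{x})-\nabla f(\vect{x})$. Then
\[
\dist\bigl(\rho_k\nabla c^\top c, -\Nx\bigr)\le \dist\bigl(\nabla Q_{\rho_k}(\vect{x}), -\Nx\bigr) + \|\nabla f(\vect{x})\|,
\]
and \cref{corollary:boundedness} on the event $E$ gives $\|\nabla f(\vect{x})\|\le B_{\nabla f}$.

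Squaring and using $(a+b)^2\le 2a^2+2b^2$ gives the displayed bound on $\|c(\vect{x})\|^2$. Substituting it into the identity above, the factor $\tfrac12$ cancels the $2$, leaving exactly
\[
\frac{\rho_{k+1}-\rho_k}{\gamma^2\rho_k^2}\Bigl(B_{\nabla f}^2+\dist^2\bigl(\cdot,-\Nx(\vect{x}_{k+1})\bigr)\Bigr),
\]
as claimed.

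The only delicate step is the cone-scaling and Lipschitz argument for the distance function. It needs $\Nx(\vect{x})$ to be a closed convex cone, which holds by \cref{assumption:Compact-and-convex-Domain}. It also needs the regularity condition to be applicable at the particular point $\vect{x}_{k+1}$; this holds because \cref{assumption:Regularity-Condition} is stated for any sequence. Everything else is bookkeeping on the event $E$.
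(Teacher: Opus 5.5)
Your proposal is correct and follows the paper's proof essentially step for step. Both arguments use the same chain: the exact identity $Q_{\rho_{k+1}}(\vect{x}_{k+1})-Q_{\rho_k}(\vect{x}_{k+1})=\tfrac12(\rho_{k+1}-\rho_k)\|c(\vect{x}_{k+1})\|^2$, the regularity condition, rescaling by $\rho_k$ via the cone property, the triangle inequality for the distance together with $\|\nabla f\|\le B_{\nabla f}$, and finally $(a+b)^2\le 2a^2+2b^2$. Your explicit check that $\rho_{k+1}-\rho_k\ge 0$ is a small point the paper leaves implicit.
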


\begin{proof}
By the definition of the penalty function, we have:
\begin{align*}
    Q_{\rho_{k+1}}(\vect{x}_{k+1}) - Q_{\rho_k}(\vect{x}_{k+1}) 
    &= \frac{1}{2}(\rho_{k+1} - \rho_k) \norm{c(\vect{x}_{k+1})}^2 \\
    &\overset{\text{\textcircled{1}}}{\leq} \frac{\rho_{k+1} - \rho_k}{2\gamma^2} \dist^2\left(\nabla c(\vect{x}_{k+1})^\top c(\vect{x}_{k+1}), -N_{\X}(\vect{x}_{k+1})\right) \\
    &= \frac{\rho_{k+1} - \rho_k}{2\gamma^2 \rho_k^2} \dist^2\left(\rho_k \nabla c(\vect{x}_{k+1})^\top c(\vect{x}_{k+1}), -N_{\X}(\vect{x}_{k+1})\right) \\
    &\overset{\text{\textcircled{2}}}{\leq} \frac{\rho_{k+1} - \rho_k}{2\gamma^2 \rho_k^2} \left( \norm{\nabla f(\vect{x}_{k+1})} + \dist\left(\nabla f(\vect{x}_{k+1}) + \rho_k \nabla c(\vect{x}_{k+1})^\top c(\vect{x}_{k+1}), -N_{\X}(\vect{x}_{k+1})\right) \right)^2 \\
    &\overset{\text{\textcircled{3}}}{\leq} \frac{\rho_{k+1} - \rho_k}{\gamma^2 \rho_k^2} B_{\nabla f}^2 + \frac{\rho_{k+1} - \rho_k}{\gamma^2 \rho_k^2} \dist^2\left(\nabla f(\vect{x}_{k+1}) + \rho_k \nabla c(\vect{x}_{k+1})^\top c(\vect{x}_{k+1}), -N_{\X}(\vect{x}_{k+1})\right).
\end{align*}
In the derivation above:
\begin{itemize}
    \item[\textcircled{1}] follows from~\cref{assumption:Regularity-Condition}, which implies $\norm{c(\vect{x})} \leq \frac{1}{\gamma} \dist(\nabla c(\vect{x})^\top c(\vect{x}), -N_{\X}(\vect{x}))$.
    \item[\textcircled{2}] utilizes the triangle inequality for the distance to a convex set. Specifically, for any vectors $\vect{u}, \vect{v}$ and a convex set $C$:
    \[
    \dist(\vect{u}, C) = \inf_{\vect{z} \in C}\|\vect{u}-\vect{z}\| \leq \inf_{\vect{z} \in C} (\|\vect{u}-\vect{v}\|+\|\vect{v}-\vect{z}\|) =\| \vect{u}-\vect{v}\| + \dist(\vect{v}, C).
    \]
    Here, we set $\vect{u} = \rho_k \nabla c(\vect{x}_{k+1})^\top c(\vect{x}_{k+1})$ and $\vect{v} = \nabla f(\vect{x}_{k+1}) + \rho_k \nabla c(\vect{x}_{k+1})^\top c(\vect{x}_{k+1})$.
    \item[\textcircled{3}] combines the inequality $\|\vect{x}+\vect{y}\|^2 \le 2(\|\vect{x}\|^2+\|\vect{y}\|^2)$ with the boundedness of the gradient, i.e., $\norm{\nabla f(\vect{x})} \leq B_{\nabla f}$.
\end{itemize}
\end{proof}
\begin{lemma}
\label{lemma:penalty-x-gradient-error}
Suppose~\cref{assumption:Gaussian-Process-Prior,assumption:Compact-and-convex-Domain,assumption:Regularity-Condition} hold, and the parameters in~\cref{alg:lcbo} are selected according to~\cref{eq:parameters}. Then, the sequence $\vect{x}_k$ generated by~\cref{alg:lcbo} satisfies:
\begin{equation}
    Q_{\rho_k}(\vect{x}_{k+1}) - Q_{\rho_k}(\vect{x}_k) \leq \frac{\eta_k}{2} \norm{\nabla Q_{\rho_k}(\vect{x}_k) - \hatnabla Q_{\rho_k}(\vect{x}_k)}^2 - \frac{1}{4\eta_k} \norm{\vect{x}_{k+1} - \vect{x}_k}^2.
\end{equation}
\end{lemma}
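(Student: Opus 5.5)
Combine the descent lemma for the $L_{\rho_k,m}$-smooth function $Q_{\rho_k}$ with the variational characterization of the projection step, then absorb the cross term into the quadratic terms using the step-size choice $\eta_k \le 1/(2L_{\rho_k,m})$. Throughout we work on the event $E$.

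\textbf{Step 1 (descent lemma).} Because $Q_{\rho_k}$ is $L_{\rho_k,m}$-smooth on the convex set $\X$,
\begin{equation*}
Q_{\rho_k}(\vect x_{k+1}) \le Q_{\rho_k}(\vect x_k) + \langle \nabla Q_{\rho_k}(\vect x_k), \vect x_{k+1}-\vect x_k\rangle + \frac{L_{\rho_k,m}}{2}\|\vect x_{k+1}-\vect x_k\|^2 .
\end{equation*}

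\textbf{Step 2 (projection inequality).} The point $\vect x_{k+1}=P_{\X}(\vect x_k-\eta_k\hatnabla Q_{\rho_k}(\vect x_k))$ is characterized by
\begin{equation*}
\langle \vect x_k-\eta_k\hatnabla Q_{\rho_k}(\vect x_k)-\vect x_{k+1}, \vect y-\vect x_{k+1}\rangle\le 0 \quad \text{for all } \vect y\in\X .
\end{equation*}
Taking $\vect y=\vect x_k\in\X$ gives
\begin{equation*}
\langle \hatnabla Q_{\rho_k}(\vect x_k), \vect x_{k+1}-\vect x_k\rangle \le -\frac{1}{\eta_k}\|\vect x_{k+1}-\vect x_k\|^2 .
\end{equation*}

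\textbf{Step 3 (split the inner product and apply Young).} Write $\nabla Q_{\rho_k}=\hatnabla Q_{\rho_k}+(\nabla Q_{\rho_k}-\hatnabla Q_{\rho_k})$ in Step 1. Apply Young's inequality to the error part:
\begin{equation*}
\langle \nabla Q_{\rho_k}(\vect x_k)-\hatnabla Q_{\rho_k}(\vect x_k), \vect x_{k+1}-\vect x_k\rangle \le \frac{\eta_k}{2}\|\nabla Q_{\rho_k}(\vect x_k)-\hatnabla Q_{\rho_k}(\vect x_k)\|^2 + \frac{1}{2\eta_k}\|\vect x_{k+1}-\vect x_k\|^2 .
\end{equation*}
Combining this with Steps 1 and 2, the coefficient of $\|\vect x_{k+1}-\vect x_k\|^2$ becomes
\begin{equation*}
-\frac{1}{\eta_k}+\frac{1}{2\eta_k}+\frac{L_{\rho_k,m}}{2}.
\end{equation*}

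\textbf{Step 4 (absorb the smoothness term).} It remains to show $L_{\rho_k,m}/2\le 1/(4\eta_k)$, i.e.\ $\eta_k\le 1/(2L_{\rho_k,m})$. With $\eta_k=\frac{1}{2L_m}k^{-1/2}$ and $L_{\rho_k,m}=\rho_k L_m = k^{1/4}L_m$, we have
\begin{equation*}
2\eta_k L_{\rho_k,m} = k^{-1/4}\le 1 .
\end{equation*}
Hence the coefficient is at most $-\frac{1}{4\eta_k}$, and the claimed bound follows. Note that $\rho_k = k^{1/4}\ge 1$, so the smoothness constant computed in the preceding remark is valid.

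The main obstacle is this bookkeeping: the Young split must leave exactly $\frac{1}{2\eta_k}$ of descent, and the parameter schedule must be checked to give $\eta_k L_{\rho_k,m}\le 1/2$. This same schedule also justifies step \textcircled{4} of Lemma~\ref{lemma:stationary-x-gradient-error}.
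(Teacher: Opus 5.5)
Your proposal is correct and follows essentially the same argument as the paper: the projection optimality condition tested at $\vect x_k$, the descent lemma with $\nabla Q_{\rho_k}$ split into $\hatnabla Q_{\rho_k}$ plus the estimation error, Young's inequality with weight $\eta_k$, and absorption of the smoothness term via $\eta_k \le (2L_{\rho_k,m})^{-1}$. Your explicit check $2\eta_k L_{\rho_k,m} = k^{-1/4} \le 1$ (with $\rho_k \ge 1$) makes precise a step the paper only asserts.
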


\begin{proof}
By the optimality of the projection step, we have:
\begin{equation*}
    \langle \vect{x}_{k+1} - \vect{x}_k + \eta_k \hatnabla Q_{\rho_k}(\vect{x}_k), \vect{x}_k - \vect{x}_{k+1} \rangle \geq 0.
\end{equation*}
Rearranging this inequality implies:
\begin{equation} \label{eq:projection_bound}
    \langle \hatnabla Q_{\rho_k}(\vect{x}_k), \vect{x}_{k+1} - \vect{x}_k \rangle \leq -\frac{1}{\eta_k} \norm{\vect{x}_{k+1} - \vect{x}_k}^2.
\end{equation}

Now, consider the expansion of the penalty function. We have:
\begin{align*}
    Q_{\rho_k}(\vect{x}_{k+1}) 
    &\overset{\text{\textcircled{1}}}{\leq} Q_{\rho_k}(\vect{x}_k) + \langle \hatnabla Q_{\rho_k}(\vect{x}_k), \vect{x}_{k+1} - \vect{x}_k \rangle + \langle \nabla Q_{\rho_k}(\vect{x}_k) - \hatnabla Q_{\rho_k}(\vect{x}_k), \vect{x}_{k+1} - \vect{x}_k \rangle + \frac{L_{\rho_k,m}}{2} \norm{\vect{x}_{k+1} - \vect{x}_k}^2 \\
    &\overset{\text{\textcircled{2}}}{\leq} Q_{\rho_k}(\vect{x}_k) + \langle \nabla Q_{\rho_k}(\vect{x}_k) - \hatnabla Q_{\rho_k}(\vect{x}_k), \vect{x}_{k+1} - \vect{x}_k \rangle + \left(\frac{L_{\rho_k,m}}{2} - \frac{1}{\eta_k}\right) \norm{\vect{x}_{k+1} - \vect{x}_k}^2 \\
    &\overset{\text{\textcircled{3}}}{\leq} Q_{\rho_k}(\vect{x}_k) + \frac{\eta_k}{2} \norm{\nabla Q_{\rho_k}(\vect{x}_k) - \hatnabla Q_{\rho_k}(\vect{x}_k)}^2 + \left(-\frac{1}{2\eta_k} + \frac{L_{\rho_k,m}}{2}\right) \norm{\vect{x}_{k+1} - \vect{x}_k}^2 \\
    &\overset{\text{\textcircled{4}}}{\leq} Q_{\rho_k}(\vect{x}_k) + \frac{\eta_k}{2} \norm{\nabla Q_{\rho_k}(\vect{x}_k) - \hatnabla Q_{\rho_k}(\vect{x}_k)}^2 - \frac{1}{4\eta_k} \norm{\vect{x}_{k+1} - \vect{x}_k}^2.
\end{align*}

In the derivation above:
\begin{itemize}
    \item[\textcircled{1}] follows from the property that $Q_{\rho_k}(\vect{x})$ is $L_{\rho_k,m}$-smooth. Note that the linear term $\langle \nabla Q_{\rho_k}(\vect{x}_k), \vect{x}_{k+1}-\vect{x}_k \rangle$ is decomposed into $\langle \hatnabla Q, \dots \rangle + \langle \nabla Q - \hatnabla Q, \dots \rangle$.
    \item[\textcircled{2}] obtained by directly substituting the projection inequality \eqref{eq:projection_bound} into the expression.
    \item[\textcircled{3}] utilizes Young's inequality in the form $\langle\vect{x},\vect{y}\rangle \leq \frac{ \eta_k\|\vect{x}\|^2}{2} + \frac{ \|\vect{y}\|^2}{2\eta_k}$.
    \item[\textcircled{4}] holds because our parameter selection satisfies the condition $\eta_k \leq (2L_{\rho_k,m})^{-1}$.
\end{itemize}
\end{proof}

\begin{lemma}
\label{lemma:stationarity-lyapunov-gradient-error}
Suppose~\cref{assumption:Gaussian-Process-Prior,assumption:Compact-and-convex-Domain,assumption:Regularity-Condition} hold, and the parameters in~\cref{alg:lcbo} are selected according to~\cref{eq:parameters}. Let $\{\vect{x}_k\}$ be the sequence generated by~\cref{alg:lcbo} and $\tilde{K}_m$ be a sufficiently large number (specifically, $\tilde{K}_m = 70\gamma^{-\frac{8}{3}}L_m^{\frac{4}{3}}$). For any $k > \tilde{K}_m$, the following inequality holds:
\begin{align}
    \frac{ \eta_k}{48} &\dist^2\left(\nabla f(\vect{x}_{k+1}) + \rho_k \nabla c(\vect{x}_{k+1})^\top c(\vect{x}_{k+1}), -N_{\X}(\vect{x}_{k+1})\right) \nonumber \\
    &\leq Q_{\rho_k}(\vect{x}_k) - Q_{\rho_{k+1}}(\vect{x}_{k+1}) +\frac{5\eta_k}{8} \|\nabla Q_{\rho_k}(\vect{x}_k)-\hatnabla Q_{\rho_k}(\vect{x}_k)\|^2+\frac{(\rho_{k+1} - \rho_k)B_{\nabla f}^2}{\gamma^2 \rho_k^2}.
\end{align}
\end{lemma}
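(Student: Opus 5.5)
Write $D_{k+1}:=\dist^2\bigl(\nabla f(\vect{x}_{k+1}) + \rho_k \nabla c(\vect{x}_{k+1})^\top c(\vect{x}_{k+1}), -N_{\X}(\vect{x}_{k+1})\bigr)$ and $G_k:=\|\nabla Q_{\rho_k}(\vect{x}_k)-\hatnabla Q_{\rho_k}(\vect{x}_k)\|^2$. The plan is to chain the three preceding technical lemmas. The only non-mechanical step is absorbing the $D_{k+1}$ term that comes from increasing the penalty factor.

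\textbf{Step 1 (convert the step length into the stationarity measure).} Multiply \cref{lemma:stationary-x-gradient-error} by $\eta_k^2/24$. This gives
\[
\frac{\eta_k}{24}D_{k+1}\le \frac{1}{4\eta_k}\|\vect{x}_{k+1}-\vect{x}_k\|^2+\frac{\eta_k}{8}G_k,
\]
so that
\[
-\frac{1}{4\eta_k}\|\vect{x}_{k+1}-\vect{x}_k\|^2\le -\frac{\eta_k}{24}D_{k+1}+\frac{\eta_k}{8}G_k .
\]

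\textbf{Step 2 (descent on $Q_{\rho_k}$).} Substitute this into \cref{lemma:penalty-x-gradient-error}:
\[
Q_{\rho_k}(\vect{x}_{k+1})-Q_{\rho_k}(\vect{x}_k)\le \frac{5\eta_k}{8}G_k-\frac{\eta_k}{24}D_{k+1}.
\]
This is exactly where the constant $5/8$ in the statement comes from.

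\textbf{Step 3 (change of penalty factor).} Add \cref{lemma:change-of-penalty-function}, which bounds $Q_{\rho_{k+1}}(\vect{x}_{k+1})-Q_{\rho_k}(\vect{x}_{k+1})$ by
\[
\frac{\rho_{k+1}-\rho_k}{\gamma^2\rho_k^2}\bigl(B_{\nabla f}^2+D_{k+1}\bigr).
\]
After adding, the coefficient on $D_{k+1}$ is $-\eta_k/24+(\rho_{k+1}-\rho_k)/(\gamma^2\rho_k^2)$. The claim then follows once we show
\[
\frac{\rho_{k+1}-\rho_k}{\gamma^2\rho_k^2}\le \frac{\eta_k}{48}\qquad\text{for } k>\tilde K_m ,
\]
since the remaining coefficient is then at most $-\eta_k/48$. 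Rearranging gives the stated inequality.

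\textbf{Step 4 (the main obstacle: verifying this inequality from the schedule).} With $\rho_k=k^{1/4}$, concavity gives $\rho_{k+1}-\rho_k\le \tfrac14 k^{-3/4}$. Hence the left side is at most
\[
\frac{1}{4\gamma^2}k^{-5/4}.
\]
The right side equals $\eta_k/48=\dfrac{k^{-1/2}}{96L_m}$. The required inequality therefore reduces to $k^{3/4}\ge 24L_m/\gamma^2$, i.e.
\[
k\ge (24)^{4/3}\gamma^{-8/3}L_m^{4/3}.
\]
Since $24^{4/3}\approx 69.2<70$, the choice $\tilde K_m=70\gamma^{-8/3}L_m^{4/3}$ suffices.

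One bookkeeping check: the step-size conditions used inside the invoked lemmas, $\eta_k\le (2L_{\rho_k,m})^{-1}$, require $\rho_k\le k^{1/2}$. This holds because $\rho_k=k^{1/4}$, so those lemmas apply as stated. All statements are made on the event $E$.
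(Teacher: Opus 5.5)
Your proof is correct and is the same argument as the paper's. You chain the step-length bound, the descent inequality for $Q_{\rho_k}$ and the penalty-change bound, then absorb the extra $D_{k+1}$ term by checking $(\rho_{k+1}-\rho_k)/(\gamma^2\rho_k^2)\le \tfrac14\gamma^{-2}k^{-5/4}\le \eta_k/48$ for $k>\tilde K_m$, which is exactly where $24^{4/3}<70$ enters. The only slip is cosmetic: in Step 1 the multiplier is $\eta_k/24$, not $\eta_k^2/24$, though the inequality you display is the correct one.
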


\begin{proof}
The proof proceeds by combining previous lemmas and analyzing the parameter selection for large $k$. Based on the properties of~\cref{alg:lcbo}, we derive the bound as follows:
\begin{equation}
\begin{split}
\label{eq:stationary-lyapunov}
    &\frac{\eta_k}{24} \dist^2 \left( \nabla f(\vect{x}_{k+1}) + \rho_k \nabla c(\vect{x}_{k+1})^\top c(\vect{x}_{k+1}), -N_{\X}(\vect{x}_{k+1}) \right) \\
    &\overset{\textcircled{1}}{\le} \frac{1}{4\eta_k} \norm{\vect{x}_{k+1} - \vect{x}_k}^2 - \frac{\eta_k}{2} \norm{\nabla Q_{\rho_k}(\vect{x}_k) - \hatnabla Q_{\rho_k}(\vect{x}_k)}^2 + \frac{5\eta_k}{8} \norm{\nabla Q_{\rho_k}(\vect{x}_k) - \hatnabla Q_{\rho_k}(\vect{x}_k)}^2 \\
    &\overset{\textcircled{2}}{\le} Q_{\rho_k}(\vect{x}_k) - Q_{\rho_{k+1}}(\vect{x}_{k+1}) + Q_{\rho_{k+1}}(\vect{x}_{k+1}) - Q_{\rho_k}(\vect{x}_{k+1}) + \frac{5\eta_k}{8} \norm{\nabla Q_{\rho_k}(\vect{x}_k) - \hatnabla Q_{\rho_k}(\vect{x}_k)}^2 \\
    &\overset{\textcircled{3}}{\le} Q_{\rho_k}(\vect{x}_k) - Q_{\rho_{k+1}}(\vect{x}_{k+1}) + \frac{\rho_{k+1} - \rho_k}{\gamma^2 \rho_k^2} \dist^2 \left( \nabla f(\vect{x}_{k+1}) + \rho_k \nabla c(\vect{x}_{k+1})^T c(\vect{x}_{k+1}), -N_{\X}(\vect{x}_{k+1}) \right) \\
    &\quad + \frac{\rho_{k+1} - \rho_k}{\gamma^2 \rho_k^2} B_{\nabla f}^2+ \frac{5\eta_k}{8} \norm{\nabla Q_{\rho_k}(\vect{x}_k) - \hatnabla Q_{\rho_k}(\vect{x}_k)}^2.
\end{split}
\end{equation}
The derivation above follows from~\cref{lemma:stationary-x-gradient-error},~\cref{lemma:penalty-x-gradient-error} and~\cref{lemma:change-of-penalty-function}, respectively.

\noindent Furthermore, when $k > \tilde{K}_m$, we examine the coefficient term associated with the stationarity residual:
\begin{equation*}
    \frac{\rho_{k+1} - \rho_k}{\gamma^2 \rho_k^2} < \frac{\frac{1}{4}k^{-\frac{3}{4}}}{\gamma^2 k^{\frac{1}{2}}}< \frac{1}{4}\gamma^{-2} \tilde K_m^{-\frac{3}{4}} k^{-\frac{1}{2}}< \frac{1}{96L_m} k^{-\frac{1}{2}}= \frac{\eta_k}{48}
\end{equation*}

\noindent By rearranging the terms in~\cref{eq:stationary-lyapunov}, we obtain:
\begin{equation*}
    \frac{\eta_k}{48}\dist^2(\cdot)\le\Bigl( \frac{\eta_k}{24} - \frac{\rho_{k+1} - \rho_k}{\gamma^2 \rho_k^2} \Bigr) \dist^2(\cdot) \le Q_{\rho_k}(\vect{x}_k) - Q_{\rho_{k+1}}(\vect{x}_{k+1}) + \dots,
\end{equation*}
which completes the proof.
\end{proof}
\subsection{Proof of~\cref{theorem:stationary-theorem}}
\label{app:main-theorem}
\begin{proof}
\begin{align*}
    &\sum_{k=1}^{K} \frac{\eta_k}{48} \dist^2 \left( \nabla f(\vect{x}_{k+1}) + \rho_k \nabla c(\vect{x}_{k+1})^\top c(\vect{x}_{k+1}), - \Nx(\vect{x}_{k+1}) \right) \\
    &\leq \sum_{k=1}^{\tilde{K}_m-1} \frac{\eta_k}{24} \dist^2 \left( \nabla f(\vect{x}_{k+1}) + \rho_k \nabla c(\vect{x}_{k+1})^\top c(\vect{x}_{k+1}), - \Nx(\vect{x}_{k+1}) \right) \\
    &\quad + \sum_{k=\tilde{K}_{m}}^{K} \frac{\eta_k}{48} \dist^2 \left( \nabla f(\vect{x}_{k+1}) + \rho_k \nabla c(\vect{x}_{k+1})^\top c(\vect{x}_{k+1}), - \Nx(\vect{x}_{k+1}) \right) \\
    &\overset{\textcircled{1}}{\le} Q_1(\vect{x}_1) - Q_{\tilde{K}_{m}}(\vect{x}_{\tilde{K}_{m}}) + \frac{1}{2}(\rho_{\tilde{K}_{m}} - \rho_1)B_{c,m}^2 + \sum_{k=1}^{\tilde{K}_{m}-1} \frac{5\eta_k}{8}\norm{\nabla Q_{\rho_k}(\vect{x}_k) - \hatnabla Q_{\rho_k}(\vect{x}_k)}^2 \\
    &\quad + Q_{\tilde{K}_{m}}(\vect{x}_{\tilde{K}_{m}}) - Q_{\rho_{K+1}}(\vect{x}_{K+1}) + \sum_{k=\tilde{K}_{m}}^{K} \frac{(\rho_{k+1} - \rho_k) B_{\nabla f}^2}{\gamma^2 \rho_k^2} + \sum_{k=\tilde{K}_{m}}^{K} \frac{5 \eta_k}{8} \norm{\nabla Q_{\rho_k}(\vect{x}_k) - \hatnabla Q_{\rho_k}(\vect{x}_k)}^2 \\
    &\overset{\textcircled{2}}{\le} Q_1(\vect{x}_1) +B_f + \frac{1}{2}(\rho_{\tilde{K}_{m}} - \rho_1)B_{c,m}^2 + \sum_{k=1}^{K} \frac{(\rho_{k+1} - \rho_k) B_{\nabla f}^2}{\gamma^2 \rho_k^2} + \sum_{k=1}^{K} \frac{5 \eta_k}{8} \norm{\nabla Q_{\rho_k}(\vect{x}_k) - \hatnabla Q_{\rho_k}(\vect{x}_k)}^2
\end{align*}

\noindent The justification of the steps are as follows:
\begin{itemize}
    \item[\textcircled{1}] is obtained using~\cref{eq:stationary-lyapunov} (step \textcircled{2}) and~\cref{lemma:stationarity-lyapunov-gradient-error}.
    \item[\textcircled{2}] utilizes the lower bound $Q_\rho(\vect{x}) \ge -B_f$ for any $\rho\geq 0$ and the non-negativity of $(\rho_{k+1} - \rho_k) B_{\nabla f}^2/(\gamma^2 \rho_k^2)$.
\end{itemize}

\noindent Furthermore, given that $\eta_k\ge \eta_K$ for any $k\leq K$, we have:
\begin{align}
 \frac{1}{K} &\sum_{k=1}^{K}\dist^2 \bigl( \nabla f(\vect{x}_{k+1}) + \rho_k \nabla c(\vect{x}_{k+1})^\top c(\vect{x}_{k+1}), - \Nx(\vect{x}_{k+1}) \bigr) \notag\\
 &\leq\frac{1}{K\eta_K}\sum_{k=1}^{K}\eta_k\dist^2 \bigl( \nabla f(\vect{x}_{k+1}) + \rho_k \nabla c(\vect{x}_{k+1})^\top c(\vect{x}_{k+1}), - \Nx(\vect{x}_{k+1}) \bigr)\label{eq:sum-eta-d2}\\
 &\le \frac{48}{K\eta_K}\Bigl( Q_1(\vect{x}_1) +B_f + \frac{1}{2}(\rho_{\tilde{K}_{m}} - \rho_1)B_{c,m}^2 + \sum_{k=1}^{K} \frac{(\rho_{k+1} - \rho_k) B_{\nabla f}^2}{\gamma^2 \rho_k^2} + \sum_{k=1}^{K} \frac{5 \eta_k}{8} \norm{\nabla Q_{\rho_k}(\vect{x}_k) - \hatnabla Q_{\rho_k}(\vect{x}_k)}^2\Bigr)\notag
\end{align}
Note that with the choice $\rho_k = k^{1/4}$, the term $\rho_k^{-2}(\rho_{k+1}-\rho_k)$ scales as $\mathcal{O}(k^{-5/4})$. Since the series $\sum_{k=1}^\infty k^{-5/4}$ converges, the cumulative sum is bounded by a finite constant $S_\rho < +\infty$. Regarding the gradient error term, by substituting the expanded form of~\cref{eq:expansion-error-term}, we further derive:
\begin{align}
 \frac{1}{K} &\sum_{k=1}^{K}\dist^2 \bigl( \nabla f(\vect{x}_{k+1}) + \rho_k \nabla c(\vect{x}_{k+1})^\top c(\vect{x}_{k+1}), - \Nx(\vect{x}_{k+1}) \bigr) \notag\\
 &\leq \frac{1}{\sqrt{K}}\Bigl(96L_m\bigl(Q_1(\vect{x}_1)+B_f+\frac{1}{2}(\rho_{\tilde K_m}-\rho_1)B_{c,m}^2+\frac{S_\rho B_{\nabla f}^2}{\gamma^2}\bigr)+90\beta_k\bigl(B_{\nabla c,m}^2 me_{k,\sigma}(b_k^{(1)})+( m\tilde B_{k,m}^2+ 2L_m\eta_k)E_{d,k,\sigma}(b_k^{(2)})\bigr)\Bigr)\notag\\
&\leq \frac{1}{\sqrt{K}}(C_{1,m}+C_2\mathcal{E}_K)\label{eq:simplified}
\end{align}
Here, $C_{1,m}=96L_m\bigl(Q_1(\vect{x}_1)+B_f+\frac{1}{2}(\rho_{\tilde K_m}-\rho_1)B_{c,m}^2 +S_\rho B_{\nabla f}^2/\gamma^2\bigr)=\bigO(m^{\frac{7}{3}}), C_2=90$ and $\mathcal{E}_K=\sum_{k=1}^K\beta_k\bigl(c_{1,m}e_{k,\sigma}(b_k^{(1)})+(c_{2,m}\tilde{B}_{k,m}^2+c_{3,m}\eta_k)E_{d,k,\sigma}(b_k^{(2)})\bigr)$, where $c_{1,m}=mB_{\nabla c,m}^2=\bigO(m^2),c_{2,m}=m=\bigO(m), c_{3,m}=2L_m=\bigO(m)$.

For constraint residual we have:
\begin{align*}
    \frac{1}{K}\sum_{k=1}^K\norm{c(\vect{x}_{k+1})}^2 
    &\overset{\textcircled{1}}{\le}\frac{2}{\gamma^2 K}\sum_{k=1}^K \Bigl( \frac{B_{\nabla f}^2}{\rho_k^2} + \frac{1}{\rho_k^2} \dist^2 \left( \nabla f(\vect{x}_{k+1}) + \rho_k \nabla c(\vect{x}_{k+1})^\top c(\vect{x}_{k+1}), -\Nx(\vect{x}_{k+1}) \right) \Bigr) \\
    &\overset{\textcircled{2}}{\le}\frac{2}{\gamma^2K}\bigl(2B_{\nabla f}^2\sqrt{K}+C_{1,m}+C_2\mathcal{E}_K\bigr)\\
    &\overset{\textcircled{3}}{=}\frac{C_3}{\sqrt{K}}+\frac{C_{4,m}+C_5\mathcal{E}_K}{K}
\end{align*}
In the derivation above: 
\begin{itemize}
    \item[\textcircled{1}] follows from~\cref{lemma:change-of-penalty-function}.
    \item[\textcircled{2}] is derived by by applying the series inequality $\sum_{k=1}^K k^{-1/2}\le 2\sqrt{K}$ and combine~\cref{eq:sum-eta-d2} and~\cref{eq:simplified}, which implies
    \begin{equation*}
        \sum_{k=1}^K \frac{1}{\sqrt{k}}\dist^2 \bigl( \nabla f(\vect{x}_{k+1}) + \rho_k \nabla c(\vect{x}_{k+1})^\top c(\vect{x}_{k+1}), - \Nx(\vect{x}_{k+1}) \bigr)\le C_{1,m}+C_2\mathcal{E}_K
    \end{equation*}
    \item[\textcircled{3}] is obtained by setting $C_3=4B_{\nabla f}^2\gamma^{-2}, C_{4,m}=2C_{1,m}\gamma^{-2}=\bigO(m^{\frac{7}{3}})$ and $C_5=2C_2\gamma^{-2}$.
\end{itemize}
Finally, the proof is completed by setting $\lambda_{k+1}=\rho_k c(\vect{x}_{k+1})$.
\end{proof}
\subsection{Proof of~\cref{corollary-dt2}}
\begin{proof}
The proof of~\cref{corollary-dt2} primarily focuses on analyzing the orders of the following two quantities.:
\begin{align*}
    e&=\sum_{k=1}^K c_{1,m}\beta_k e_{d,k}(b_k^{(1)})\\
    E&=\sum_{k=1}^K c_{2,m}\beta_k \tilde B_{k,m}^2E_{d, k, \sigma}(b_k^{(2)})
\end{align*}
From~\cref{lemma:order-of-E} and~\cref{lemma:order-of-e}, we can bound $E$ and $e$ as:
\begin{align*}
    e&\lesssim \sigma^2m^2\sum_{k=1}^K {b_k^{(1)}}^{-1}\log k,\\
    E &\lesssim \sigma m^2d^{\frac{3}{2}}\sum_{k=1}^K {b_k^{(2)}}^{-\frac{1}{2}}\log^2 k.
\end{align*}

When $b_k^{(2)}= d k^2$, the term $\sum_{k=1}^K {b_k^{(2)}}^{-\frac{1}{2}}\log^2 k$ has a cumulative growth rate of $\bigO(d^{-\frac{1}{2}}\log^3 K)$. Meanwhile, the term $\sum_{k=1}^K {b_k^{(1)}}^{-1}\log k$ is bounded by $\bigO(\log^2K)$ when $b_k^{(1)}$ is set to $k$.
The total sample size is given by $ n = (m+1)\sum_{k=1}^K (dk^2+k) = \bigO(md K^3)$. And thus $K=\bigO(m^{-\frac{1}{3}}d^{-\frac{1}{3}}n^{\frac{1}{3}})$.
Finally, combining these results:
\begin{align*}
    \frac{1}{K}\sum_{k=1}^K &\dist^2(\nabla f(\vect{x}_{k+1})+\nabla c(\vect{x}_{k+1})\lambda_{k+1},-\Nx(\vect{x}_{k+1})) \\
    &=\bigO((E+e+C_{1,m})K^{-\frac{1}{2}}) \notag \\
    &= \bigO(m^{\frac{7}{3}}K^{-\frac{1}{2}})+\tilde{\bigO}(\sigma m^2 d K^{-\frac{1}{2}}) \notag \\
    &= \bigO(m^{\frac{5}{2}}d^{\frac{1}{6}}n^{-\frac{1}{6}})+\tilde{\bigO}(\sigma m^{\frac{13}{6}} d^{\frac{7}{6}} n^{-\frac{1}{6}}).
\end{align*}
\begin{align*}
   \frac{1}{K}\sum_{k=1}^K\|c(\vect{x}_{k+1})\|^2&=\bigO(K^{-\frac{1}{2}})+\bigO((E+e +C_{1,m})K^{-1})\\
    &=\bigO(K^{-\frac{1}{2}})+\bigO(m^{\frac{7}{3}}K^{-1})+\tilde{\bigO}(\sigma m^2d K^{-1})\\
    &=\bigO(m^{\frac{1}{6}}d^{\frac{1}{6}}n^{-\frac{1}{6}})+\bigO(m^{\frac{8}{3}}d^{\frac{1}{3}}n^{-\frac{1}{3}})+\tilde O(\sigma m^{\frac{7}{3}}d^{\frac{4}{3}}n^{-\frac{1}{3}})
\end{align*}
\end{proof}
\section{Comparative Analysis of Mechanisms}
\label{app:mechanism_analysis}

This section clarifies the failure modes of trust-region-based CBO
methods that motivate LCBO. We do not claim that such methods cannot recover from these cases. 
Rather, under a limited evaluation budget, repeated shrinkage and restarts may consume many 
evaluations without producing feasible improvement.

\paragraph{Mechanism analysis}
\cref{fig:mechanism} illustrates a curved constraint boundary. When the trust-region center approaches such a boundary, SCBO may face a 
structural dilemma: candidates aligned with the local descent direction tend to violate the 
constraints, while feasible candidates inside the current region may provide no 
improvement over the incumbent. Under SCBO's success/failure-based update rule, both cases can 
be recorded as local failures, causing the trust-region radius to shrink.

A smaller radius 
further restricts exploration along the boundary and can eventually trigger a restart once the 
radius falls below the minimum length. Although restarts can in principle allow recovery, and 
the collected observations are still retained by the GP model, the shrinkage--restart process 
may spend a substantial portion of the budget on candidates that yield no feasible improvement. 
LCBO addresses this failure mode by using the gradient of a quadratic-penalty surrogate, which 
combines objective descent with constraint-violation correction. Thus, even if an iterate 
temporarily moves into the infeasible region near a boundary, the penalty gradient can guide it 
back toward feasibility while allowing continued progress along the boundary.

\paragraph{Empirical confirmation on the truss task.}
On the truss design task, SCBO restarts provide direct evidence of repeated shrinkage, since a 
restart is triggered if and only if the trust-region radius falls below the minimum length. 
Across all 10 independent runs, every run experienced at least two restarts. The first restart 
occurred at a median of 350 evaluations, with interquartile range 338--383, and the second at 
747 evaluations, with interquartile range 657--756. The timing of the first restart closely 
aligns with the onset of the performance plateau in \cref{fig:real_world_tasks}, suggesting 
that the geometric obstruction described above contributes to SCBO's stagnation on this task.
\begin{figure}[t]
    \centering
    \includegraphics[width=0.80\linewidth]{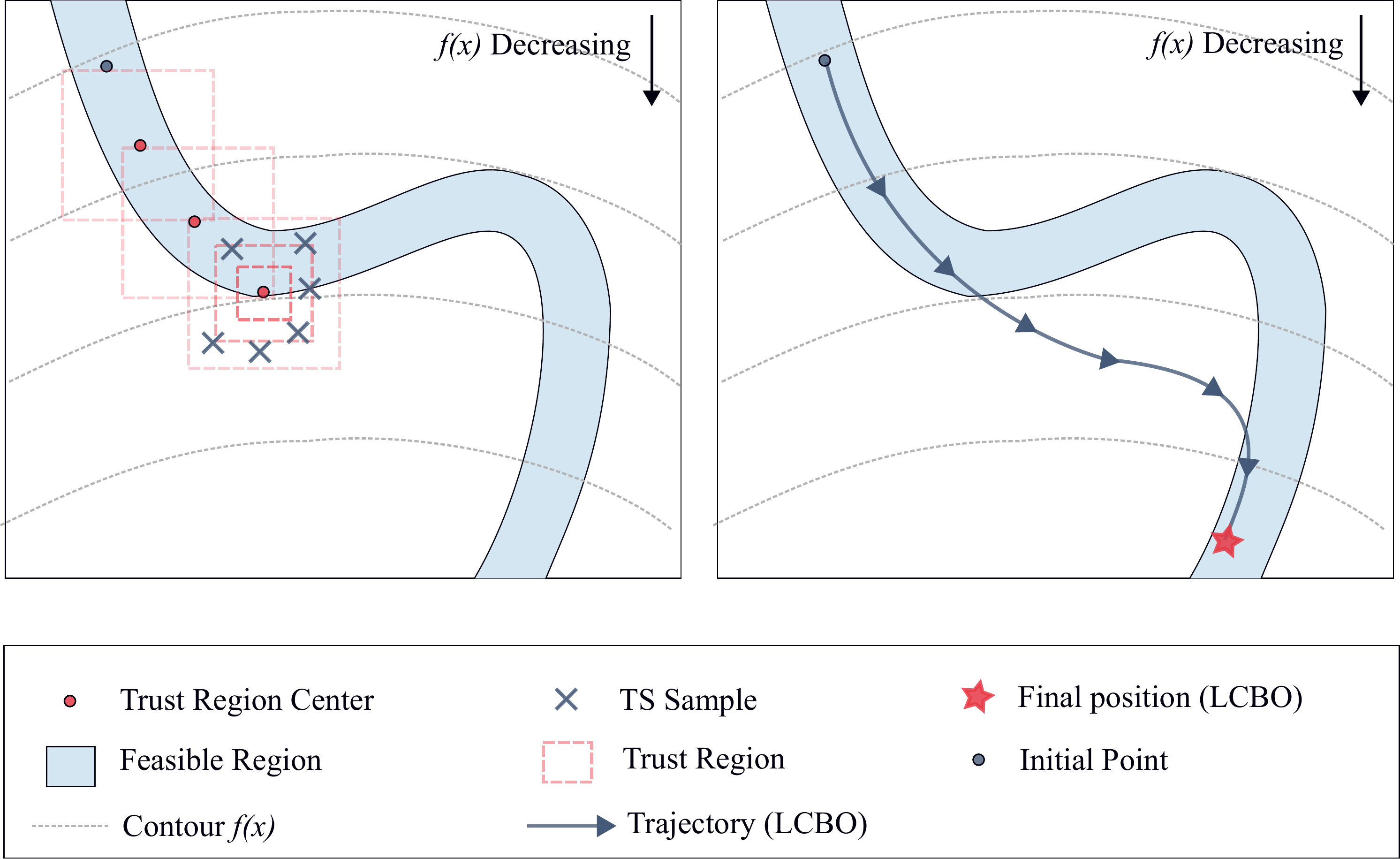}
    \caption{
    \textbf{Mechanism Comparison between SCBO and LCBO.}
    Near a curved constraint boundary, SCBO may face a structural dilemma: descent-aligned
    candidates tend to be infeasible, while feasible candidates inside the trust region may
    yield no improvement. Both outcomes can be counted as local failures under the
    success/failure-based update rule, leading to trust-region shrinkage and possible restarts.
    LCBO instead exploits the gradient of a quadratic-penalty surrogate, enabling progress
    along the boundary while correcting constraint violation.
    }
    \label{fig:mechanism}
\end{figure}
\section{Ablation Studies}
\label{app:ablation-study}
\subsection{Ablation on Batch Schedule}
\label{app:ablation-batch}
\paragraph{Motivation and Experimental Setup}
In the theoretical analysis, our high-probability convergence bound is predicated on an increasing batch size schedule (specifically, $b_1^{(k)}\propto k, b_k^{(2)} \propto k^2$). However, in practical high-dimensional BO with finite evaluation budgets, strictly adhering to such a rapid growth schedule can lead to ``budget starvation'', where the algorithm exhausts the evaluation budget within very few iterations. 
To bridge this theory-practice gap and justify our choice of fixed mini-batches in the main experiments, we conducted a systematic ablation study. This study aims to decouple the effects of per-iteration convergence quality from sample efficiency, demonstrating why fixed mini-batches act as a superior stochastic approximation in budget-constrained settings. 

We performed ablation experiments on synthetic benchmarks with varying dimensions ($d \in \{25, 50, 100\}$) to observe the impact of dimensionality. All experiments were conducted with a fixed total budget of 5,000 evaluations. We report the median and quartiles of the best feasible objective found so far, computed across 5 distinct random seeds.
We compared three distinct batch size schedules:
\begin{itemize}
    \item \textbf{Fixed Mini-batch:} A small, constant batch size ($b_k^{(1)}=2, b_k^{(2)}=5$) designed to facilitate more frequent updates.
    \item \textbf{Fixed Large-batch:} A fixed batch size scaling with dimension ($b_k^{(1)}=5, b_k^{(2)}=d$). This represents a strategy that prioritizes spatial coverage in each step but sacrifices iteration count.
    \item \textbf{Growing Batch:} While theory suggests $b_k^{(1)}\propto k,b_k^{(2)} \propto k^2$, such a schedule would deplete the 5,000-evaluation budget in fewer than 30 iterations. To provide a fair but challenging baseline, we adopted a relaxed growth schedule ($b_k^{(1)}= \lfloor \log (k+1)+1\rfloor, b_k^{(2)}=\lfloor  0.5k+5\rfloor $). This serves as a ``steel-manned'' proxy for the theoretical setting: if our method outperforms this linearly growing baseline (which is much more budget-friendly than $\bigO(k^2)$), the conclusion holds a fortiori for the strictly theoretical $\bigO(k^2)$ schedule.
\end{itemize}
\paragraph{Results and Analysis}
We present the optimization progress from two complementary perspectives: \textbf{Best Feasible Objective vs. Iterations} (\cref{fig:ablation-iters}) and \textbf{Best Feasible Objective vs. Evaluations} (\cref{fig:ablation-evals}).

Verification of Theoretical Intuition (Best Feasible Objective vs. Iterations): When plotted against the number of iterations (updates), the Fixed Large-batch and Growing Batch strategies often achieve a steeper descent per step compared to the Mini-batch approach. This empirically validates our theoretical analysis: larger or growing batches indeed provide more accurate gradient estimates, leading to more stable and ``correct'' updates in the optimization landscape.

Practical Superiority (Best Feasible Objective vs. Evaluations): However, when plotted against the actual number of function evaluations (the true cost in BO), the Fixed Mini-batch strategy demonstrates superior sample efficiency, particularly within the initial 1,000 evaluations. By tolerating higher variance in gradient estimation, the Mini-batch strategy performs orders of magnitude more updates (e.g., $\approx 700$ updates vs. $< 50$ updates for $b=d$ in 100D synthetic task). This allows the algorithm to traverse the search space effectively and reach lower regret values within the fixed budget.

\paragraph{Discussion: The SGD Analogy}
Our findings can be interpreted through the lens of Stochastic Gradient Descent (SGD) in deep learning. Just as SGD is often preferred over full-batch gradient descent despite its noisy gradient estimates, our Fixed Mini-batch LCBO prefers frequent, noisy updates over rare, precise ones.

Beyond sample efficiency, the choice of mini-batches may offer an implicit regularization benefit. The stochasticity introduced by small batch sizes can help the optimization trajectory escape shallow local optima or saddle points, a phenomenon well-documented in non-convex optimization. While our theoretical proof relies on vanishing variance for exact convergence, empirical evidence suggests that maintaining a certain level of ``exploration noise'' via fixed mini-batches is not only practically necessary but potentially beneficial for global search in complex high-dimensional landscapes.
\begin{figure*}[t]
    \centering

    \includegraphics[width=\linewidth]{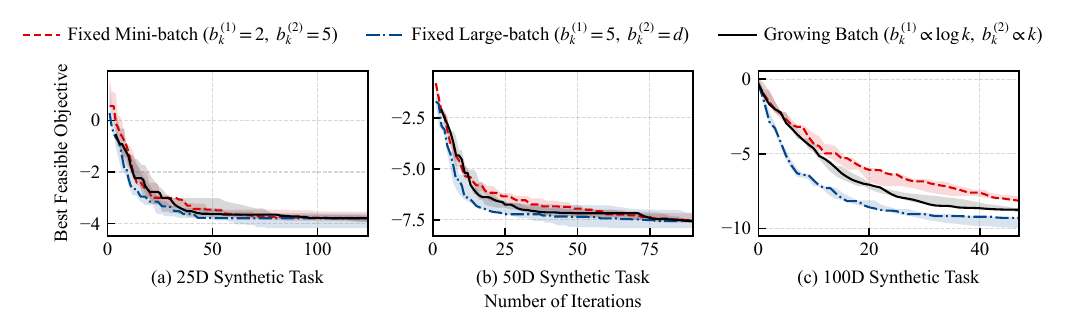}
    \caption{\textbf{Best Feasible Objective vs. Iterations} The x-axis represents the number of algorithm iterations, while the y-axis shows the best feasible objective value found so far (lower is better). We truncated the results to the minimum iteration count across the three batch schedules.}
    \label{fig:ablation-iters}
    
    \vspace{0.4cm}

    \includegraphics[width=0.93\linewidth]{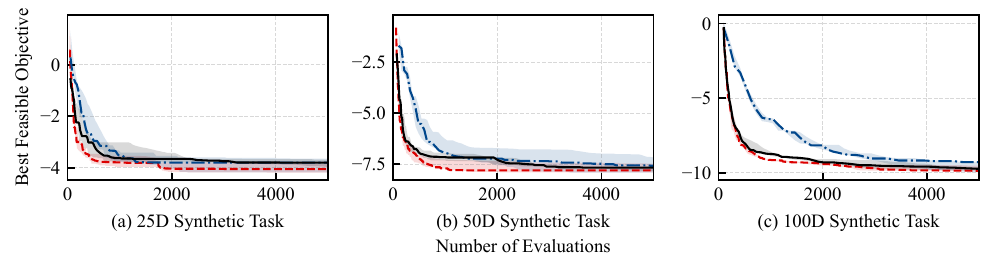}

    \caption{\textbf{Best Feasible Objective vs. Evaluations} The x-axis represents the number of function evaluations, while the y-axis shows the best feasible objective value found so far (lower is better).}
    \label{fig:ablation-evals}
\end{figure*}

\subsection{Ablation on Local GP, Local Search Box, and Gradient Normalization}
\label{app:component_ablation}

The implementation strategies described in \cref{sec:implementation}, which include
(i) local GPs trained on the $N_m$ most recent points,
(ii) restricting acquisition function optimization to a dynamic local box
$[x_k - \delta,\, x_k + \delta]$, and
(iii) Euclidean-norm normalization of $\hat{\nabla}Q_{\rho_k}(x_k)$,
are all inherited from the GIBO framework~\cite{LBO-GIBO} and serve
distinct practical purposes.
The local GP reduces the training complexity from $\mathcal{O}(N^3)$ to
$\mathcal{O}(N_m^3)$ and naturally adapts to local landscape changes by
discarding stale data.
The local search box concentrates the candidate set near the current iterate,
where the gradient information is most informative for the next descent step.
Gradient normalization decouples direction from magnitude, providing adaptive
step-size control analogous to Adam and RMSprop.

To verify that the primary gains of LCBO arise from the core mechanism, that
is, penalty-based gradient descent on GP surrogates, rather than from these
auxiliary components, we ablate each one individually on the Truss (25D) and
Synthetic (50D) tasks.
Results are reported in \cref{fig:component_ablation}.

\begin{figure}[h]
\centering
\includegraphics[width=0.80\linewidth]{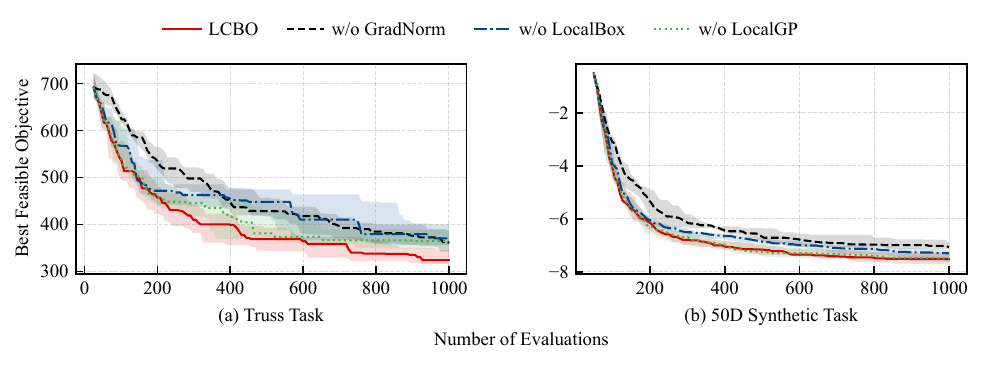}

\caption{\textbf{Optimization progress of the component ablation on the Truss (25D)
and 50D Synthetic tasks.}
The plotting conventions and axes definitions are identical to Figure~\ref{fig:synthetic-tasks}.}
\label{fig:component_ablation}
\end{figure}

Across both tasks, every ablated variant remains competitive with or
surpasses the strongest CBO baseline, confirming that none of these
components is individually responsible for LCBO's advantage.
Instead, they serve as practical engineering choices.

\subsection{Ablation on the Penalty Factor Schedule}
\label{app:penalty_ablation}

A natural concern is whether LCBO's growing penalty schedule
$\rho_k \propto k^{1/4}$ requires per-task tuning to perform well.
We address this concern from two complementary angles.

\paragraph{Practical robustness across tasks.}
Our implementation applies Z-score normalization to all function outputs
(objective and constraints) before GP fitting, as detailed in
\cref{app:algo-setup}.
This standardization largely removes the influence of problem-specific
constraint magnitudes on the effective penalty scale, which is a key reason
why a single schedule works across all benchmarks in \cref{sec:experiments} without any
task-specific adjustment.

\paragraph{Ablation: growing schedule vs.\ fixed constants.}
To directly isolate the role of the penalty design within our local search
framework, we compare LCBO's growing schedule against four fixed-penalty
variants $\rho_k \equiv \rho \in \{1, 10, 100, 1000\}$ on the Truss and 50D
Synthetic tasks.
Since all variants share the same local exploration--exploitation mechanism,
any performance difference is attributable solely to the penalty strategy.
Figure~\ref{fig:penalty_ablation} shows the
corresponding optimization curves.

\begin{figure}[h]
\centering
\includegraphics[width=0.80\linewidth]{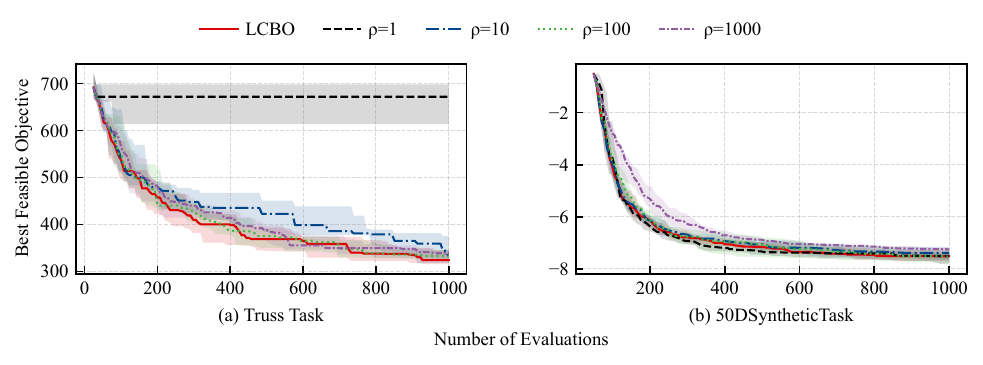}
\caption{\textbf{Optimization progress of the penalty schedule ablation on the Truss
(25D) and 50D Synthetic tasks.}
The plotting conventions and axes definitions are identical to Figure~\ref{fig:synthetic-tasks}.}
\label{fig:penalty_ablation}
\end{figure}
The results illustrate the classical trade-off inherent in fixed-penalty methods. When $\rho$ is too small, the penalty term is insufficient to drive the iterates toward the feasible region; when $\rho$ is too large, the penalty dominates the gradient and slows progress on the objective. Both regimes are observed in our experiments. On the Truss task, where the constraints are tight, $\rho_k \equiv 1$ remains close to its initial level throughout the budget, indicating that the penalty is too weak to drive the iterates into the feasible region. On the 50D Synthetic task, $\rho_k \equiv 1000$ exhibits the slowest descent among the fixed variants, consistent with the over-penalization regime in which the constraint term overwhelms the objective signal.
LCBO's growing schedule is designed to navigate this trade-off in a principled manner: moderate penalization in the early phase allows the algorithm to make progress on the objective, while the gradually increasing $\rho_k$ ensures that constraint enforcement tightens as the iterates approach feasibility. This schedule is precisely what underpins our convergence analysis in \cref{theorem:stationary-theorem}, and the ablation results show that it also matches or outperforms the best fixed constant on each task without any problem-specific tuning. The growing schedule thus combines a theoretical convergence guarantee with practical robustness in the black-box setting, where the appropriate magnitude of $\rho$ is not known a priori.
\section{Experiment Setup}
\label{app:experiment_setup}
\subsection{25-Bar Truss Design (25D)}
The standard 25-bar truss design problem has been a canonical benchmark in structural optimization~\cite{camp2004design}.  In this problem, the design variables correspond to the cross-sectional areas of the 25 truss members. The search space is constrained within the range $[0.01, 5.0]\,\text{in}^2$, where the lower bound is explicitly imposed to prevent singularity of the stiffness matrix during finite element analysis. The structure is composed of an aluminum alloy with a Young's modulus of $E = 1.0 \times 10^7\,\text{psi}$ and a density of $\rho = 0.1\,\text{lb/in}^3$. The primary optimization objective is to minimize the total weight of the truss. Under the standard loading scenario (Case 1), the design must satisfy strict physical constraints. Specifically, the absolute value of the axial stress in each member must not exceed $40,000\,\text{psi}$, and the absolute displacement of all free nodes along any coordinate axis must remain within $0.35\,\text{in}$. To efficiently manage these 25 stress constraints and 18 displacement constraints, we employ the LogSumExp function for constraint aggregation and smoothing. By setting the smoothing factor $\alpha = 20.0$, we transform the multidimensional constraint criteria into two aggregated functions (i.e., an aggregated stress constraint and an aggregated displacement constraint). This formulation alleviates the need to maintain numerous GPs simultaneously, thereby reducing computational complexity.

\subsection{Stepped Cantilever Beam Design (50D)}
The Stepped Cantilever Beam benchmark is a standard problem in engineering design literature (e.g.,~\cite{thanedar1995survey}). The problem involves determining the optimal cross-sectional dimensions for a cantilever beam of length $L=100\,\text{in}$, which is discretized into $N=25$ distinct segments. The design variables consist of the width ($w_i$) and height ($h_i$) for each segment, resulting in a total of 50 independent variables bounded within the range $[0.5, 5.0]\,\text{in.}$ to ensure geometric validity and prevent singularity in the stiffness matrix. The beam is modeled using Euler-Bernoulli theory with solid rectangular cross-sections, assuming a Young's modulus of $E = 2.9 \times 10^7\,\text{psi}$ and subjected to a static vertical point load of $P = 500\,\text{lb}$ at the free tip. The primary objective is to minimize the total volume of the structure. The design is subject to two critical physical constraints: the vertical displacement at the beam tip must not exceed $2.5\,\text{in}$, and the maximum bending stress within any segment must be strictly less than $40,000\,\text{psi}$. Since the stress constraint is evaluated locally at each segment, we utilize the LogSumExp function with a smoothing parameter $\alpha = 20.0$ to aggregate these local violations into a single global stress constraint.

\subsection{MuJoCo HalfCheetah (102D)}
To evaluate the optimization framework on high-dimensional control tasks, we utilize the HalfCheetah-v4 environment from the Gymnasium MuJoCo suite~\cite{todorov2012mujoco}. The problem involves optimizing a linear policy to maximize the locomotion speed of a 2D planar robot. The optimization variables represent the weights of a linear controller mapping the 17-dimensional observation space to the 6-dimensional action space, resulting in a total of 102 decision variables bounded within $[-0.2, 0.2]$. To ensure numerical stability and consistent policy behavior, we implement a static normalization scheme where input observations are standardized using fixed mean and variance statistics derived from a pre-calculation phase of 2,000 random steps, followed by a hyperbolic tangent ($\tanh$) activation function to bound the actions. Uniquely, our objective function diverges from the standard environment reward—which mixes velocity with energy penalties—by explicitly decoupling the control cost; we formulate the objective as the minimization of the negative average forward velocity over a fixed horizon of $H=1,000$ steps. This objective is subject to a cumulative energy constraint, requiring the total control cost (sum of squared actions, $\sum \|a_t\|^2$) to remain below a threshold of $1,500$. To mitigate the effects of environmental stochasticity, the objective and constraint values are averaged over 5 independent episodes with fixed seeds, and episodes ending prematurely due to instability are penalized via worst-case cost padding for the remainder of the horizon.

\section{Algorithm Setup}
\label{app:algo-setup}
We performed all simulations on a server with dual Intel Xeon Platinum 8260 CPUs and 376 GB of memory. To ensure numerical stability during GP training, the design variables $\vect{x}$ are normalized to the unit interval $[0, 1]^d$ using Min-Max scaling, while the response variables $\vect{y}$ are standardized to zero mean and unit variance via Z-score transformation. The hyperparameters employed in our algorithm are detailed in~\cref{tab:hyperparameters}. To ensure a fair comparison, we removed the isometric embedding preprocessing step from HDsafeBO. For all benchmarks, GPs with a RBF kernel serve as the surrogate models for both the objective and constraint functions. Regarding the GP hyperparameters, we utilize fixed ground-truth values for the synthetic benchmarks. Conversely, for the real-world benchmarks, we assign empirical priors to the lengthscale and outputscale to mitigate overfitting, specified as $Gamma(3.0,3.0)$ and $\mathcal{N}(2.0,1.0)$, respectively. Furthermore, these hyperparameters are updated via Maximum Likelihood Estimation (MLE) every five iterations. The likelihood noise is set to a constant value of 0.01.

\begin{table}[htbp]
\centering
\caption{Algorithm Hyperparameters}
\label{tab:hyperparameters}
\begin{tabular}{@{}cccccccc@{}}
\toprule
\multirow{2}{*}{Method}   & \multirow{2}{*}{Hyperparameters} & \multicolumn{3}{c}{Synthetic} & \multirow{2}{*}{Truss} & \multirow{2}{*}{Beam} & \multirow{2}{*}{Halfcheetah} \\ \cmidrule(lr){3-5}
                          &                                  & 25        & 50      & 100     &                        &                       &                              \\ \midrule
CEI                       & AF\_Optimizer                    & \multicolumn{6}{c}{Adam(lr=0.01, num\_starts=10)}                                                                             \\ \midrule
\multirow{3}{*}{EPBO}     & AF\_Optimizer                    & \multicolumn{6}{c}{Adam(lr=0.01, num\_starts=10)}                                                                             \\
                          & $\beta$(LCB)                     & \multicolumn{6}{c}{2.0}                                                                                       \\
                          & penalty\_scheduler               & \multicolumn{6}{c}{$10.0\times\text{log}(k+1)$}                                                                      \\ \midrule
\multirow{8}{*}{SCBO}     & length\_init                     & \multicolumn{6}{c}{0.8}                                                                                       \\
                          & length\_min                      & \multicolumn{6}{c}{$2^{-7}$}                                                                                  \\
                          & length\_max                      & \multicolumn{6}{c}{1.6}                                                                                       \\
                          & perturbation probability         & \multicolumn{6}{c}{$\min\{1,20/d\}$}                                                                          \\
                          & $b$ (batch)
                          & \multicolumn{6}{c}{$\lceil d/2 \rceil$}
                          \\
                          & success\_tolrance                & \multicolumn{6}{c}{$\max(3,\lceil d/10\rceil) $}                                                              \\
                          & failure\_tolrance                & \multicolumn{6}{c}{$\lceil d/b\rceil$}                                                                        \\
                          & n\_candidates                    & \multicolumn{6}{c}{5000}                                                                        \\ \midrule
\multirow{8}{*}{HDsafeBO} & length\_init                     & \multicolumn{6}{c}{0.8}                                                                                       \\
                          & length\_min                      & \multicolumn{6}{c}{$2^{-7}$}                                                                                  \\
                          & length\_max                      & \multicolumn{6}{c}{1.6}                                                                                       \\
                          & $b$ (batch)
                          & \multicolumn{6}{c}{$\lceil d/2 \rceil$}
                          \\
                          & success\_tolrance                & \multicolumn{6}{c}{$\max(3,\lceil d/10\rceil) $}                                                              \\
                          & failure\_tolrance                & \multicolumn{6}{c}{$\lceil d/b\rceil$}                                                                        \\
                          & n\_candidates                    & \multicolumn{6}{c}{5000}                                                                                      \\
                          & $\beta$(LCB)                      & \multicolumn{6}{c}{2.0}                                                                                         \\ \midrule
\multirow{7}{*}{LCBO}     & AF\_Optimizer                    & \multicolumn{6}{c}{Adam(lr=0.01, num\_starts=10)}                                                                             \\
                          & penalty\_scheduler               & \multicolumn{6}{c}{$10.0\times k^{1/4}$}                                                                             \\
                          & $\delta$                         & \multicolumn{6}{c}{0.1}                                                                                       \\
                          & $N_m$                            & \multicolumn{6}{c}{$2d$}                                                                                      \\
                          & $b_t^{(1)}$                      & 2         & 2       & 2       & 2                      & 1                     & 1                            \\
                          & $b_t^{(2)}$                      & 5         & 5       & 5       & 5                      & 1                     & 1                            \\
                          & step\_size\_scheduler            & 0.25      & 0.5     & 0.5     & 0.25                   & 0.5                   & 0.5                          \\ \bottomrule
\end{tabular}
\end{table}

\end{document}